\documentclass{article}


\usepackage[nonatbib, preprint]{neurips_2021}




\usepackage[utf8]{inputenc} 
\usepackage[T1]{fontenc}    
\usepackage{hyperref}       
\usepackage{url}            
\usepackage{booktabs}       
\usepackage{amsfonts,amsmath,bm,amsthm,bbm,multirow}       
\usepackage{nicefrac}       
\usepackage{microtype}      
\usepackage{xcolor}         
\usepackage{graphicx}
\usepackage{float}
\usepackage{algorithm}
\usepackage{algorithmic}
\usepackage{subfig}

\newtheorem{theorem}{Theorem}
\newtheorem{lemma}{Lemma}
\newtheorem{definition}{Definition}

\def\Ry{R\'enyi }
\def\ie{{\em i.e.}}

\title{Intrinsically-Motivated Reinforcement Learning: A Brief Introduction}

%

\author{%
  Mingqi Yuan \\
  Tencent Robotics X Lab\\
  Shenzhen, CN 518172 \\
  \texttt{mingqiyuan@tencent.com} \\
}

\begin{document}

\maketitle

\begin{abstract}
  Reinforcement learning (RL) is one of the three basic paradigms of machine learning. It has demonstrated impressive performance in many complex tasks like Go and StarCraft, which is increasingly involved in smart manufacturing and autonomous driving. However, RL consistently suffers from the exploration-exploitation dilemma. In this paper, we investigated the problem of improving exploration in RL and introduced the intrinsically-motivated RL. In sharp contrast to the classic exploration strategies, intrinsically-motivated RL utilizes the intrinsic learning motivation to provide sustainable exploration incentives. We carefully classified the existing intrinsic reward methods and analyzed their practical drawbacks. Moreover, we proposed a new intrinsic reward method via R\'enyi state entropy maximization, which overcomes the drawbacks of the preceding methods and provides powerful exploration incentives. Finally, extensive simulation demonstrated that the proposed module achieve superior performance with higher efficiency and robustness.
\end{abstract}

\newpage
\tableofcontents
\newpage

\section{Introduction}
\label{chap:intro}

\subsection{Background}
\label{sec:intro-background}     

Learning behavior permeates the whole life of human beings. However, it is extremely intricate to characterize or summarize the nature of learning. A natural idea is that we learn by interacting with our environment \cite{sutton2018reinforcement}. For instance, we observes the road condition in real time to adjust direction and speed to drive safely and smoothly. Take conversation as another example, we respond according to the statement of the other party. In this two cases, we are acutely aware of how our environment reacts to what we do, and try to influence what happens through our actions. Learning from interaction is a fundamental idea underlying almost all theories of learning and intelligence.

Reinforcement learning (RL) is a computational approach to learning from interaction. This term comes from behavioral psychology, which indicates that organisms take favorable strategies more frequently to draw on the advantages and avoid disadvantages. For instance, a general will make various deployments following some strategy in a battle. If one of his decisions leads him to victory, he will use this strategy more in future battles. \cite{pavlov1928lectures} first introduced the term "reinforcement" to describe the phenomenon that specific incentives make organisms tend to adopt certain strategies. Furthermore, \cite{michael1975positive} classified reinforcements as positive reinforcements and negative reinforcements. Positive reinforcements make organisms tend to gain more benefits, while negative reinforcements make organisms avoid damage. RL has become one of the three paradigms of machine learning (ML), and achieved great success like AlphaGo \cite{silver2018general}.

A typical RL scenario is composed of an agent and an environment. The interactions between the agent and the environment are often modeled as a Markov decision process (MDP), which comprises a state space, an action space, a reward function and a transition probability \cite{norris1998markov}. In each time step, the agent observes the state of environment before selecting an action from the action space. After that, the state-action pair will be evaluated by the reward function. Finally, the objective of RL is to learn the optimal policy that maximizes the accumulative rewards in the long run. For policy learning, RL developed a large number of approaches that can be broadly categorized into optimal value algorithms and policy gradient algorithms. The former learns the optimal policy via estimating the optimal value functions, such as Q-learning \cite{watkins1992q}, while the latter learns the optimal policy via policy gradient theorem, such as trust-region-policy-optimization (TRPO) and proximal-policy-optimization (PPO) \cite{schulman2017proximal,schulman2015trust}. Take Q-learning for instance, its convergence condition is to visit all possible state-action pairs infinitely. However, many existing RL algorithms suffer from insufficient exploration mechanism. The agent cannot keep exploring the environment across episodes, especially when handling the complex environment with high-dimensional observations. As a result, the learned policy will prematurely fall into local optima after finite iterations \cite{stadie2015incentivizing}. 

The aforementioned phenomenon is called exploration-exploitation dilemma in RL. A representative example for demonstrating this dilemma is the $K$-armed bandit problem \cite{auer2002finite}. Each slot machine has a certain payout probability unknown to the gambler. The goal for the gambler is to identify the slot machine of the highest payout probability with the minimum number of attempts. One trivial approach for the gambler is to follow the exploration-only method by uniformly attempting all slot machines. However, such an approach usually requires a large number of attempts before the best slot machine is identified. On the other hand, the gambler can follow the exploitation method by focusing on the slot machine that gave the highest payout after a limited number of attempts. Clearly, this so-called the “best” slot machine may not be optimal at all. Both these methods cannot effectively maximize the payout with limited attempts. Thus, a comprehensive algorithm must be found by striking an expedient balance between exploitation and exploration.

Inspired by the discussions above, this thesis focuses on the exploration problem of RL to establish efficient and robust exploration methods. More specifically, we delve into the intrinsically-motivated RL that utilizes intrinsic learning motivation to improve exploration. In the following contents, we will introduce the identity of intrinsic learning motivation and transform the concept into a computational model. Finally, the effectiveness of the proposed methods is verified using realistic control tasks to demonstrate their advantages.

\subsection{Related Work}
\label{sec:intro-related}
The exploration problem of RL has been extensively studied. Exploration methods encourage RL agents to fully explore the state space in various manners, for instance by rewarding surprise \cite{schmidhuber2006developmental,schmidhuber2010formal}, curiosity \cite{pathak2017curiosity, burda2018exploration}, information gain (IG) \cite{little2013learning, houthooft2016vime}, feature control \cite{dilokthanakul2019feature} or diversity \cite{eysenbach2018diversity}. Another class of exploration methods following the insight of Thompson sampling \cite{ostrovski2017count,o2018uncertainty,osband2016deep,plappert2017parameter}. For instance, \cite{osband2016deep} utilizes a family of randomized Q-functions trained on bootstrapped data to choose actions, while \cite{plappert2017parameter} inserts noise into parameter space to encourage exploration. In this thesis, we focus on intrinsic motivation methods, which are extensively used and proven to be effective for extensive hard-exploration tasks. Intrinsic motivation is very useful in leading the exploration of RL agents, especially in environments where the extrinsic rewards are sparse or missing \cite{oudeyer2009intrinsic}. In particular, most effective and popular intrinsic motivation can be broadly classified into two approaches, namely state novelty-based methods and prediction error-based methods.

State novelty-based methods encourage the agent to visit as many novel states as possible. A representative approach is the count-based exploration that utilizes state visitation counts as exploration bonus in tabular settings \cite{strehl2008analysis}. \cite{tang2017exploration,martin2017count} further extend such methods to complex environments with high-dimensional observations. \cite{bellemare2016unifying, ostrovski2017count} designs a state pseudo-count method based on context-tree switching density model, while \cite{ostrovski2017count} uses neural network as a state density estimator. In contrast, prediction error-based methods encourage the agent to fully explore the environment to reduce the uncertainty or error in predicting the consequences of its own actions. For instance, \cite{burda2018exploration} utilizes the prediction error of random networks as exploration bonus to reward infrequently-seen states. \cite{stadie2015incentivizing} designs an intrinsic reward as the prediction error of predicting the encoded next state based on the current state-action pair. \cite{pathak2017curiosity} further introduces an embedding network to learn the representation of state space and designs a inverse-forward pattern to generate intrinsic rewards. However, the prediction-error based methods suffer from the television dilemma, \ie, the agent may hover in a novel state and stop exploration \cite{savinov2018episodic}. Moreover, the two kinds of methods generate vanishing rewards and cannot provide sustainable exploration incentives. In the following sections, we will dive into these problems and propose efficient solutions.

\subsection{Contributions}
\label{sec:intro-contri}
The main contributions of this thesis are summarized as follows:
\begin{itemize}
	\item We systematically introduce and analyze the exploration-exploitation dilemma in RL, and summarize the traditional strategic exploration algorithms;
	\item We detail the intrinsic learning motivation, and categorize various intrinsic reward generation methods. We further analyze the practical drawbacks of the existing intrinsic reward methods, and introduce the corresponding solutions;
	\item We designed a brand new intrinsic reward method entitled \textbf{R}\'eny\textbf{I} \textbf{S}tate \textbf{E}ntropy (RISE), which overcomes the drawbacks of the preceding methods and provides powerful exploration incentives;
	\item Finally, extensive simulation is performed to compare	the performance of RISE against existing methods using both discrete and continuous control tasks as well as several hard exploration games. Simulation results confirm that the proposed module achieve superior performance with high efficiency and robustness.
\end{itemize}

\subsection{Organization}
\label{sec:intro-orga}

The remainder of the thesis is organized as follows:
\begin{itemize}
	\item Chapter \ref{chap:one} introduces the fundamentals of RL, including Markov decision process and some representative RL algorithms. These definitions and algorithms will run through the full text;
	\item Chapter \ref{chap:two} first demonstrates the exploration-exploitation dilemma in RL. After that, we introduce several classical exploration strategies both for tabular setting and deep RL, such as $\epsilon$-greedy policy and Boltzmann exploration.
	\item Chapter \ref{chap:three} first analyze the intrinsic learning motivation of the agent before introducing the reward shaping method in RL. After that, we discussed two important kinds of intrinsic reward methods, namely state novelty-based and prediction error-based approaches. Finally, we investigated some practical drawbacks of the discussed methods and proposed the corresponding solutions.
	\item Chapter \ref{chap:four} first analyzes the exploration problem from a new perspective before proposing a powerful intrinsic reward method, which provides high-quality intrinsic rewards via state entropy maximization. Extensive simulation demonstrates that this method can provide sustainable exploration incentives with less computational complexity and higher robustness.
	\item Chapter \ref{chap:con} concludes the full thesis and discusses the future work.
\end{itemize}

\section{Fundamentals of Reinforcement Learning}
\label{chap:one}

\begin{figure}[h]
	\centering
	\includegraphics[]{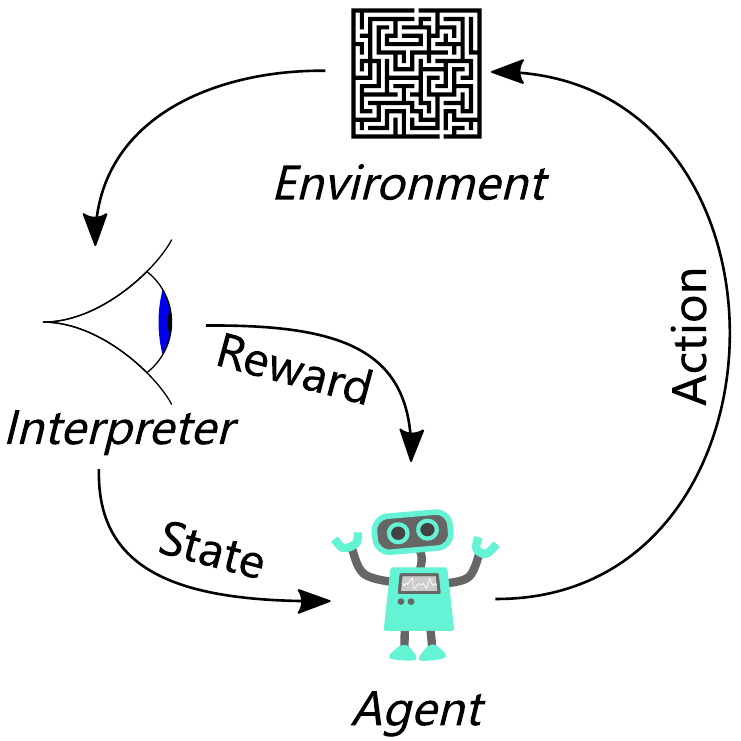}
	\caption{The typical framing of a reinforcement learning scenario.}
	\label{fig:rl}
\end{figure}

\subsection{Markov Decision Process}
Figure \ref{fig:rl} illustrates a typical RL scenario. We refer to the learner and decision maker as the \textit{agent}, and the object it interacts with, \ie, everything other than the agent,
is called the \textit{environment}. In RL, the interactions between the agent and the environment are often depicted as a Markov decision process (MDP), which is defined as \cite{bellman1966dynamic}
\begin{equation}
	\mathcal{M}=\langle \mathcal{S},\mathcal{A},\mathcal{T},r,\rho(\bm{s}_{0}),\gamma \rangle,
\end{equation}
where
\begin{itemize}
	\item $\mathcal{S}$ is a state space that can be finite or infinite. We assume that $\mathcal{S}$ is finite or countable infinite for convenience of discussion.
	\item $\mathcal{A}$ is an action space that can be also finite or infinite. A finite action space is often referred to the discrete control tasks, while an infinite action space is often referred to the continuous control tasks.
	\item $\mathcal{T}:\mathcal{S}\times\mathcal{A}\rightarrow\Delta(\mathcal{S})$ is the transition probability, where $\Delta(\mathcal{S})$ is the space of probability distributions over $\mathcal{S}$. $\mathcal{T}(\bm{s}'|\bm{s},\bm{a})$ is the probability of transitioning into state $\bm{s}'$ if an action $\bm{a}$ is taken in state $\bm{s}$. 
	\item $r:\mathcal{S}\times\mathcal{A}\rightarrow\mathbb{R}$ is a reward function. $r(\bm{s},\bm{a})$ is the immediate reward if an action $\bm{a}$ is taken in state $\bm{s}$. 
	\item $\rho(\bm{s}_{0})\in\Delta(\mathcal{S})$ is an initial state distribution to generate the initial state $\bm{s}_{0}$.
	\item $\gamma\in[0,1]$ is a discount factor.
\end{itemize}

For the transition probability, we have:
\begin{equation}
	\sum_{\bm{s}'}\mathcal{T}(\bm{s}'|\bm{s},\bm{a})=1.
\end{equation}

Figure \ref{fig:mdp} illustrates an example of Markov decision process, which has three possible states and two actions.

\begin{figure}[h]
	\centering
	\includegraphics[width=0.85\linewidth]{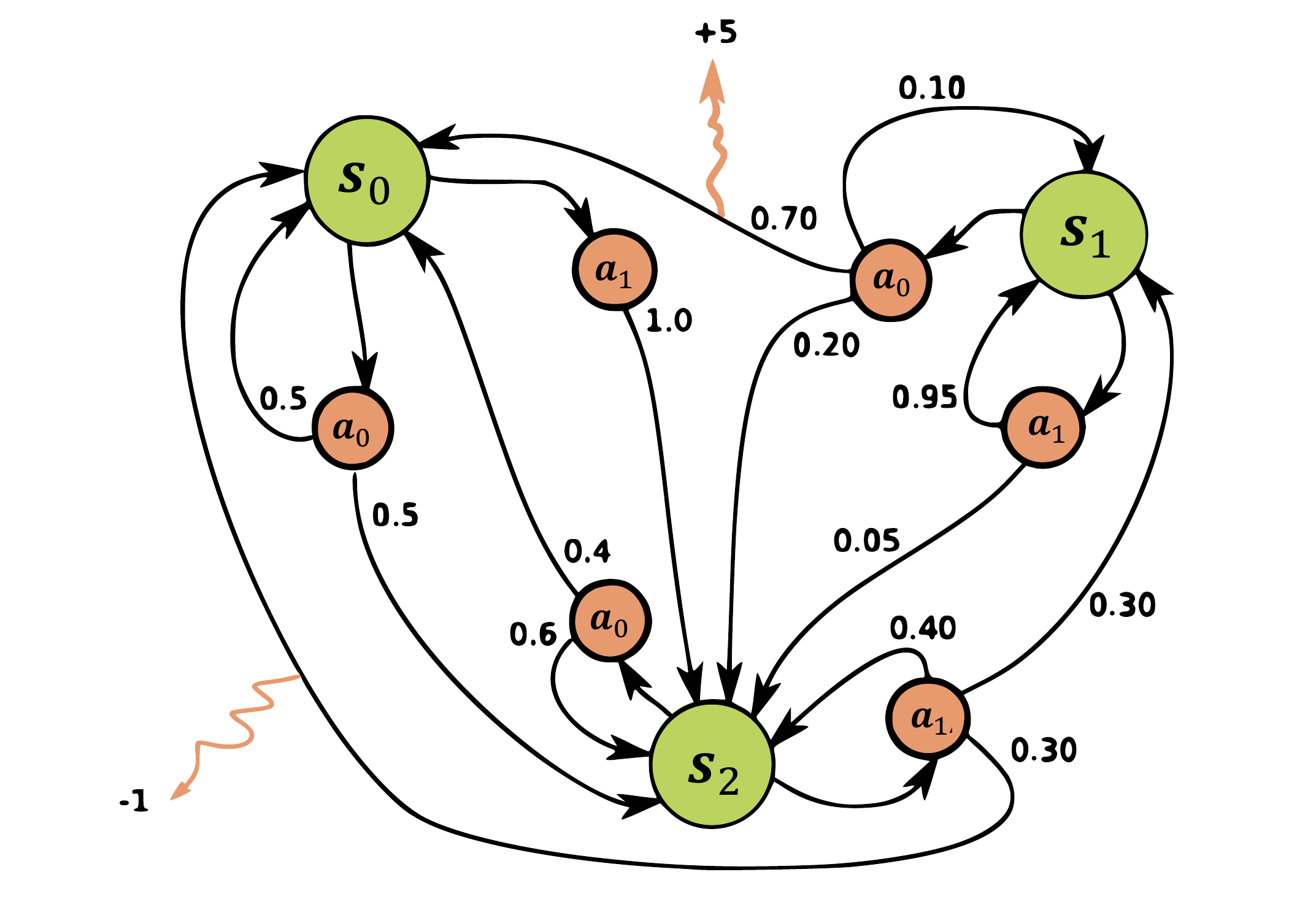}
	\caption{An example of Markov decision process.}
	\label{fig:mdp}
\end{figure}

When the agent takes the action $\bm{a}_{0}$ in state $\bm{s}_1$, the environment has a probability of 0.7 of transitioning into the state $\bm{s}_0$, and the agent will get a reward of $+5$. In contrast, if the agent takes the action $\bm{a}_1$ in state $\bm{s}_2$, the environment has a probability of 0.3 of transitioning into $\bm{s}_0$, and the agent will receive a negative reward of $-1$, which can be considered as a punishment.

\subsubsection{Returns}
So far we have introduced the basic components of a MDP, we next define the learning objective. Assume that a sequence of rewards is received after time step $t$:
\begin{equation}
	R_{t},R_{t+1},R_{t+2},R_{t+3},\dots,
\end{equation}
our objective is to maximize the \textit{expected discounted return}:
\begin{equation}
	G_{t}=R_{t+1}+\gamma R_{t+2}+\gamma^{2}R_{t+3}+\dots=\sum_{k=0}^{\infty}\gamma^{k}R_{t+k+1}.
\end{equation}
In particular, the specific target of the agent is affected by the value of $\gamma$. If $\gamma<1$, the sum has a finite value as long as the reward sequence is bounded. Furthermore, setting $\gamma=0$ leads to $G_t=R_{t+1}$, which effectively restrict the agent to maximize the
immediate reward. Finally, the return objective is more farsighted and thinks more highly of the future rewards when 
$\gamma$ approaches $1$. In particular, returns at successive time steps satisfy:
\begin{equation}
	\begin{aligned}
		G_{t}&=R_{t+1}+\gamma R_{t+2}+\gamma^{2}R_{t+3}+\gamma^{3}R_{t+4}+\dots\\
		&=R_{t+1}+\gamma\left(R_{t+2}+\gamma R_{t+3}+\gamma^{2}R_{t+4}+\dots\right)\\
		&=R_{t+1}+\gamma G_{t+1}.
	\end{aligned}
\end{equation}
This property is very important for the theory and algorithms of RL.

\subsubsection{Policies and Value Functions}
In the most general case, a \textit{policy} specifies a decision strategy in which the agent adaptively selects actions based on the observations history. More specifically, a policy is a mapping from states to probabilities of selecting each possible action. A stationary policy $\pi$ allows the agent to select actions based on the current state, {\em i.e.}, $\bm{a}_{t}\sim\pi(\cdot|\bm{s}_{t})$, while a deterministic and stationary policy is of the form $\pi:\mathcal{S}\rightarrow\mathcal{A}$. 

To evaluate the performance of policies, we define the \textit{value function} to characterize the expected return. Denote by $V^{\pi}(\bm{s})$ the value function of a state $\bm{s}$ under a policy $\pi$, which represents the expected return when starting in $\bm{s}$ and following $\pi$ thereafter. For Markov decision processes, we can formally define $V^{\pi}(\bm{s})$ as 
\begin{equation}
	\begin{aligned}
		V^{\pi}(\bm{s})&=\mathbb{E}_{\pi}\left[ G_{t}|S_{t}=\bm{s} \right]\\
		&=\mathbb{E}_{\pi}\left[ \sum_{k=0}^{\infty}\gamma^{k}R_{t+k+1}\mid S_{t}=\bm{s} \right], \forall \bm{s}\in\mathcal{S},
	\end{aligned}
\end{equation}
where $V^{\pi}$ is called the state-value function for policy $\pi$. 

Similarly, we define the value function $Q^{\pi}(\bm{s},\bm{a})$ of a state-action pair under a policy $\pi$, which represents the expected return when starting from $\bm{s}$, taking the action $\bm{a}$, and thereafter following policy $\pi$:

\begin{equation}
	\begin{aligned}
		Q^{\pi}(\bm{s},\bm{a})&=\mathbb{E}_{\pi}\left[ G_{t}|S_{t}=\bm{s},A_{t}=\bm{a} \right]\\
		&=\mathbb{E}_{\pi}\left[ \sum_{k=0}^{\infty}\gamma^{k}R_{t+k+1}\mid S_{t}=\bm{s},A_{t}=\bm{a} \right], \forall \bm{s}\in\mathcal{S},
	\end{aligned}
\end{equation}
where $Q^{\pi}$ is called the action-value function for policy $\pi$. Next, we introduce a fundamental property of value functions that is widely used in RL and dynamic programming, which satisfies recursive relationships similar to the return definition above. For any policy $\pi$ and any state $\bm{s}$, we have the following Bellman equation:
\begin{equation}
	\begin{aligned}
		V^{\pi}(\bm{s})&=\sum_{\bm a}\pi(\bm{a}|\bm{s})\sum_{\bm{s}^{\prime}}\mathcal{T}\left(\bm{s}^{\prime}|\bm{s},\bm{a}\right)\left[r+\gamma V^{\pi}(\bm{s}')\right],\\
		Q^{\pi}(\bm{s},\bm{a})&=\sum_{\bm{a}'}\pi(\bm{a}'|\bm{s}')\sum_{\bm{s}^{\prime}}\mathcal{T}\left(\bm{s}^{\prime}|\bm{s},\bm{a}\right)\left[r+\gamma Q^{\pi}(\bm{s}',\bm{a}')\right].
	\end{aligned}
\end{equation}
The two equations will be used to design a popular RL algorithm in the following sections.

\subsubsection{Optimal Policies and Optimal Value Functions}
Given two policies $\pi$ and $\pi'$, we say $\pi$ is better than or equal to $\pi'$ if and only if its expected return is greater than or equal to that of $\pi'$ for all states, {\em i.e.} $\pi\succeq\pi'$ if and only if
\begin{equation}
	V^{\pi}(\bm{s})\geq V^{\pi\prime}(\bm{s}),\forall \bm{s}\in\mathcal{S}.
\end{equation}
We denote by $\pi^{*}$ the optimal policy. Despite the fact that multiple optimal policies may exist, they share the same state-value function, called the optimal state-value function defined as
\begin{equation}
	V^{*}(\bm{s})=\max_{\pi}V^{\pi}(\bm{s}),\forall\bm{s}\in\mathcal{S}.
\end{equation}
They also share the same optimal action-value function, denoted $Q^{*}$, and defined as
\begin{equation}
	Q^{*}(\bm{s},\bm{a})=\max_{\pi}Q^{\pi}(\bm{s},\bm{a}),
\end{equation}
for all $\bm{s}\in\mathcal{S}$ and $\bm{a}\in\mathcal{A}$. To find the optimal policy, a trivial approach is to exhaustively search all the possible policies and compare the resulting expected returns. However, such an approach will incur prohibitive computational complexity. To address this problem, two representative RL algorithms have been introduced in the literature.

\subsection{Optimal Value Algorithms}
\subsubsection{Q-learning}
Q-learning is a value-based, model-free and off-policy RL algorithm \cite{watkins1992q}. An off-policy learner learns the value of the optimal policy that is independently of the actions of the agent. We first introduce a Bellman optimality equation before detailing the Q-learning. After substituting the optimal action-value function into the Bellman equation, we have:
\begin{equation}
	\begin{aligned}
		Q^{*}(\bm{s},\bm{a})&=\sum_{\bm{a}'}\pi(\bm{a}'|\bm{s}')\sum_{\bm{s}^{\prime}}\mathcal{T}\left(\bm{s}^{\prime}|\bm{s},\bm{a}\right)\left[r+\gamma Q^{*}(\bm{s}',\bm{a}')\right]\\
		&=\sum_{\bm{s}^{\prime}}\mathcal{T}\left(\bm{s}^{\prime}|\bm{s},\bm{a}\right)\left[r+\gamma \max_{\bm{a}'}\{Q^{*}(\bm{s}',\bm{a}')\}\right]
	\end{aligned}
\end{equation}
Therefore, Q-learning defines the following iterative formula:
\begin{equation}
	Q\left(S_{t}, A_{t}\right) \leftarrow Q\left(S_{t}, A_{t}\right)+\alpha\left[R_{t+1}+\gamma \max _{\bm a} Q\left(S_{t+1}, \bm{a}\right)-Q\left(S_{t}, A_{t}\right)\right],
\end{equation}
where $\alpha$ is a step size. In this formula, the learned action-value function straightforwardly approximates $Q^{*}$, which is independent of the policy being followed. Such a setting significantly simplifies the analysis of the algorithm and convergence proofs. It is worth mentioning that the policy still has an effect because it determines which state-action pairs will be visited and updated. Furthermore, the convergence condition of Q-learning is to visit all possible state-action pairs infinitely. Finally, we summarize the detailed workflow of the Q-learning in Algorithm \ref{algo:ql}.

\begin{algorithm}[h]
	\caption{Q-learning for estimating $\pi\approx\pi^{*}$}
	\label{algo:ql}
	\begin{algorithmic}[1]
		\STATE Initialize $Q(\bm{s},\bm{a})$ for all $\bm{s}\in\mathcal{S},\bm{a}\in\mathcal{A}$ randomly, except $Q({\rm terminal},\cdot)=0$;
		\STATE Initialize a step size $\alpha\in(0,1]$ and a small $\epsilon>0$;
		\STATE Loop for each episode:
		\STATE $\quad$ Initialize $S$;
		\STATE $\quad$ Loop for each step of episode:
		\STATE $\quad\quad$ Choose $A$ from $S$ using policy derived from $Q$ ({\em e.g.}, "$\epsilon$-greedy");
		\STATE $\quad\quad$ Take action $A$, observe $R$, $S'$;
		\STATE $\quad\quad$ $Q\left(S_{t}, A_{t}\right) \leftarrow Q\left(S_{t}, A_{t}\right)+\alpha\left[R_{t+1}+\gamma \max _{\bm a} Q\left(S_{t+1}, \bm{a}\right)-Q\left(S_{t}, A_{t}\right)\right]$;
		\STATE $\quad\quad$ $S\leftarrow S'$;
		\STATE $\quad$ until $S$ is terminal.
	\end{algorithmic}
\end{algorithm}

\subsubsection{Deep Q-learning}
When handling complex environments with high-dimensional observations, it is critical to utilize function approximation as a compact representation of action values \cite{mnih2013playing}. We represent the action-value function $Q(\bm{s},\bm{a},\bm\theta)$ by a neural network with parameters $\bm{\theta}$. Thus, the optimization objective is to find $\bm{\theta}$ such that $Q(\bm{s},\bm{a},\bm\theta)\approx Q^{*}(\bm{s},\bm{a})$. Therefore, the optimal parameters can be approximated by minimizing the squared temporal difference error:
\begin{equation}
	L(\bm{\theta})=\left[r+\gamma\max_{\bm{a}'}Q(\bm{s}',\bm{a}',\bm{\theta})-Q(\bm{s},\bm{a},\bm{\theta})\right]^{2}.
\end{equation}

Despite its simplicity, Q-learning suffers from low efficiency and inadequate convergence when handling complex environments like Atari games.

\subsection{Policy Gradient Algorithms}
The optimal value algorithms learn the optimal policies by estimating the optimal action-value function. Alternatively, it is feasible to omit the estimation procedure and straightforwardly approximate the optimal policies via a parameterized function, and update its parameters iteratively. Since the iterative procedure corresponds to the gradient of policy, such algorithms are called as policy gradient algorithms.

\subsubsection{Policy Gradient Theorem}
Recalling the identity of the stationary policy, it satisfies
\begin{equation}
	\sum_{\bm a}\pi(\bm{a}|\bm{s})=1,
\end{equation}
for all $\bm{s}\in\mathcal{S}$. We denote by $\pi(\bm{a}|\bm{s},\bm{\theta})$ the parameterized policy, it should also holds the normalization condition and permits differentiability \cite{agarwal2020reinforcement}. Therefore, we introduce an action preference function $h(\bm{s},\bm{a},\bm{\theta})$ and define the following policies:
\begin{equation}
	\pi(\bm{a}|\bm{s},\bm{\theta})=\frac{\exp\{h(\bm{s},\bm{a},\bm{\theta})\}}{\sum_{\bm{a}'\in\mathcal{A}}\exp\{h(\bm{s},\bm{a}',\bm{\theta})\}},
\end{equation}
where $h$ may have multiple forms, such as linear combinations, neural network and so on. Next, we introduce the policy gradient theorem as follows:
\begin{theorem}\label{thm:pg}
	For episodic case, denote by $J(\bm{\theta})=V^{\pi}(\bm{s}_{0})$ the expected return of policy $\pi$, the gradients of $J(\bm{\theta})$ with respect to $\bm{\theta}$ can be expressed as
	\begin{equation}
		\begin{aligned}
			\nabla J(\bm{\theta})\propto \sum_{\bm s}\mu(\bm{s})\sum_{\bm a}Q^{\pi}(\bm{s},\bm{a})\nabla \pi(\bm{a}|\bm{s}),
		\end{aligned}
	\end{equation}
	where $\mu$ is the on-policy distribution under policy $\pi$.
\end{theorem}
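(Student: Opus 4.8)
The plan is to differentiate $V^{\pi}$ directly and unroll the resulting recursion, following the classical argument used throughout dynamic programming. First I would use the relation $V^{\pi}(\bm{s})=\sum_{\bm a}\pi(\bm{a}|\bm{s})Q^{\pi}(\bm{s},\bm{a})$ and apply the product rule to get
\begin{equation}
	\nabla V^{\pi}(\bm{s})=\sum_{\bm a}\Big[\nabla\pi(\bm{a}|\bm{s})\,Q^{\pi}(\bm{s},\bm{a})+\pi(\bm{a}|\bm{s})\,\nabla Q^{\pi}(\bm{s},\bm{a})\Big].
\end{equation}
Since neither the reward $r$ nor the transition kernel $\mathcal{T}$ depends on $\bm{\theta}$, the Bellman relation $Q^{\pi}(\bm{s},\bm{a})=\sum_{\bm{s}'}\mathcal{T}(\bm{s}'|\bm{s},\bm{a})[r+\gamma V^{\pi}(\bm{s}')]$ gives $\nabla Q^{\pi}(\bm{s},\bm{a})=\gamma\sum_{\bm{s}'}\mathcal{T}(\bm{s}'|\bm{s},\bm{a})\nabla V^{\pi}(\bm{s}')$. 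Substituting this back shows that $\nabla V^{\pi}$ satisfies a linear fixed-point equation whose source term is $\phi(\bm{s}):=\sum_{\bm a}\nabla\pi(\bm{a}|\bm{s})\,Q^{\pi}(\bm{s},\bm{a})$ and whose propagation step is one transition of the policy-induced Markov chain.

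Next I would iterate this fixed-point equation starting from the initial state $\bm{s}_{0}$. Writing $\Pr(\bm{s}_{0}\to\bm{s},k,\pi)$ for the probability of being in state $\bm{s}$ after exactly $k$ steps when starting from $\bm{s}_{0}$ and acting according to $\pi$, repeated back-substitution should yield
\begin{equation}
	\nabla J(\bm{\theta})=\nabla V^{\pi}(\bm{s}_{0})=\sum_{\bm s}\sum_{k=0}^{\infty}\gamma^{k}\Pr(\bm{s}_{0}\to\bm{s},k,\pi)\,\phi(\bm{s}).
\end{equation}
Introducing the unnormalized occupancy $\eta(\bm{s}):=\sum_{k=0}^{\infty}\gamma^{k}\Pr(\bm{s}_{0}\to\bm{s},k,\pi)$ and the on-policy state distribution $\mu(\bm{s}):=\eta(\bm{s})/\sum_{\bm{s}'}\eta(\bm{s}')$ (for $\gamma=1$ this is the expected fraction of time spent in $\bm{s}$ per episode), the display collapses to $\nabla J(\bm{\theta})=\big(\sum_{\bm{s}'}\eta(\bm{s}')\big)\sum_{\bm s}\mu(\bm{s})\sum_{\bm a}Q^{\pi}(\bm{s},\bm{a})\nabla\pi(\bm{a}|\bm{s})$, which is exactly the claimed identity up to the positive constant $\sum_{\bm{s}'}\eta(\bm{s}')$; this constant is what the proportionality sign hides.

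The hard part will be making the unrolling rigorous rather than purely formal: I must justify interchanging the sum over $k$ with the sums over states and actions and verify that the series defining $\eta(\bm{s})$ converges. In the episodic setting this follows from the assumption that episodes terminate together with boundedness of the rewards (and is immediate when $\gamma<1$); a clean way to organize it is to establish the finite-horizon truncation by induction on the horizon length and then pass to the limit. A minor secondary point is the Chapman--Kolmogorov-type bookkeeping that $\sum_{\bm{s}}\Pr(\bm{s}_{0}\to\bm{s},k,\pi)\sum_{\bm a}\pi(\bm{a}|\bm{s})\mathcal{T}(\bm{s}'|\bm{s},\bm{a})=\Pr(\bm{s}_{0}\to\bm{s}',k+1,\pi)$, which is what makes the recursion telescope into $\eta$; it is routine but should be stated explicitly. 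Everything else---the product rule, pulling $\nabla$ through finite sums, and the fact that $\nabla$ annihilates $r$ and $\mathcal{T}$---is elementary.
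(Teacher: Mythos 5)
Your proof is correct and is essentially the argument the paper defers to: the paper gives no proof of its own but cites Sutton and Barto, and your unrolling of $\nabla V^{\pi}$ via the product rule and the Bellman recursion, with $\mu$ identified as the normalized (discounted) state occupancy $\eta(\bm{s})/\sum_{\bm{s}'}\eta(\bm{s}')$, is exactly the classical derivation given there. The convergence and interchange-of-summation caveats you flag are the right ones to address in the episodic setting, and your handling of the proportionality constant $\sum_{\bm{s}'}\eta(\bm{s}')$ is correct.
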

\begin{proof}
	See proof in \cite{sutton2018reinforcement}.
\end{proof}

Theorem \ref{thm:pg} indicates that we can obtains the gradients of the expected return as long as the parameters of the policy are given. As a result, we can update $\bm{\theta}$ following the gradients to increase the expected return. Note that the right side of the policy gradient is a sum over states weighted by the frequency of the states occurrence under the target policy. Thus, the policy gradient can be expressed as

\begin{equation}
	\begin{aligned}
		\nabla J(\bm{\theta})&\propto \sum_{\bm s}\mu(\bm{s})\sum_{\bm a}Q^{\pi}(\bm{s},\bm{a})\nabla \pi(\bm{a}|\bm{s}),\\
		&=\mathbb{E}_{\pi}\bigg[ \sum_{\bm a}Q^{\pi}(S_{t},\bm{a})\nabla \pi(\bm{a}|S_{t},\bm{\theta}) \bigg]\\
		&=\mathbb{E}_{\pi}\left[G_{t} \frac{\nabla \pi\left(A_{t}|S_{t},\bm{\theta}\right)}{\pi\left(A_{t}|S_{t}, \bm{\theta}\right)}\right]\\
		&=\mathbb{E}_{\pi}\left[G_{t}\nabla\ln\pi\left(A_{t}|S_{t},\bm{\theta}\right)\right].
	\end{aligned}
\end{equation}

Therefore, we can derive the following vanilla policy gradient (VPG) algorithm \cite{williams1992simple}:
\begin{algorithm}
	\caption{Vanilla Policy Gradient}
	\label{algo:vpg}
	\begin{algorithmic}
		\STATE Randomly initialize the policy $\pi$ using parameters $\bm{\theta}$;
		\STATE Initialize a step size $\alpha$;
		\STATE Loop for each episode:
		\STATE $\quad$ Generate an episode $S_{0},A_{0},R_{1},\dots,S_{T-1},A_{T-1},R_{T}$, following $\pi_{\bm\theta}$;
		\STATE $\quad$ Loop for each step of the episode $t=0,1,\dots,T-1$:
		\STATE $\quad\quad$ $G \leftarrow \sum_{k=t+1}^{T} \gamma^{k-t-1} R_{k}$;
		\STATE $\quad\quad$ $\bm{\theta} \leftarrow \boldsymbol{\theta}+\alpha \gamma^{t} G \nabla \ln \pi\left(A_{t}|S_{t}, \bm{\theta}\right)$.
	\end{algorithmic}
\end{algorithm}

\subsubsection{Proximal Policy Optimization}
In Algorithm \ref{algo:vpg}, we use Monte-Carlo sampling to calculate the return $G$, and the policy can only be updated at the end of the episode. To promote the learning efficiency, it is reasonable to leverage a parameterized function to approximate the value function and estimate the expected return. After that, the estimated return is used to compute the policy gradients and update the policy parameters. Such a setting is called \textit{actor-critic} method. In particular, an efficient algorithm entitled proximal policy optimization (PPO) was developed in \cite{schulman2017proximal}. PPO is a model-free, on-policy, and actor-critic RL algorithm .

We consider updating the policy $\pi(\bm{\theta})$ iteratively. Suppose that a new policy $\pi(\bm{\theta}_{k})$ is derived at time step $k$, the following equality holds:
\begin{equation}
	\mathbb{E}_{\pi(\bm{\theta})}\left[G_{0}\right]=\mathbb{E}_{\pi\left(\bm{\theta}_{k}\right)}\left[G_{0}\right]+\mathbb{E}_{\pi(\bm{\theta})}\left[\sum_{t=0}^{+\infty} \gamma^{t} Z_{\pi\left(\bm{\theta}_{k}\right)}\left(S_{t}, A_{t}\right)\right],
\end{equation}
where $Z_{\pi}=Q^{\pi}-V^{\pi}$ is the advantage function. To maximize $\mathbb{E}_{\pi(\bm{\theta})}\left[G_{0}\right]$, it suffices to maximize 
\begin{equation}
	\mathbb{E}_{\pi(\bm{\theta})}\left[\sum_{t=0}^{+\infty} \gamma^{t} Z_{\pi\left(\bm{\theta}_{k}\right)}\left(S_{t}, A_{t}\right)\right].
\end{equation}
By applying importance sampling, we have:
\begin{equation}
	\mathbb{E}_{S_{t}, A_{t} \sim \pi(\bm{\theta})}\left[Z_{\pi(\bm{\theta}_{k})}\left(S_{t}, A_{t}\right)\right]=\mathbb{E}_{S_{t} \sim \pi(\bm{\theta}), A \sim \pi\left(\bm{\theta}_{k}\right)}\left[\frac{\pi\left(A_{t}|S_{t}, \bm{\theta}\right)}{\pi\left(A_{t}|S_{t},\bm{\theta}_{k}\right)} Z_{\pi\left(\bm{\theta}_{k}\right)}\left(S_{t}, A_{t}\right)\right]
\end{equation}
However, it is difficult to calculate the expectation with respect to $S_{t}\sim\pi(\bm\theta)$. To address this problem, we approximate the expectation of $S_{t}\sim\pi(\bm\theta)$ as $S_{t}\sim\pi(\bm{\theta}_{k})$, which is called surrogate advantage, {\em i.e.}, 

\begin{equation}
	\mathbb{E}_{S_{t}, A_{t} \sim \pi(\boldsymbol{\theta})}\left[Z_{\pi(\bm{\theta}_{k})}\left(S_{t}, A_{t}\right)\right] \approx \mathbb{E}_{S_{t}, A_{t}\sim \pi\left(\bm{\theta}_{k}\right)}\left[\frac{\pi\left(A_{t}|S_{t},\bm{\theta}\right)}{\pi\left(A_{t}|S_{t},\bm{\theta}_{k}\right)} Z_{\pi\left(\bm{\theta}_{k}\right)}\left(S_{t}, A_{t}\right)\right].
\end{equation}
Therefore, we obtain the approximation of $\mathbb{E}_{\pi(\bm{\theta})}\left[G_{0}\right]$:
\begin{equation}
	L(\bm{\theta})=\mathbb{E}_{\pi\left(\bm{\theta}_{k}\right)}\left[G_{0}\right]+\mathbb{E}_{S_{t}, A_{t}\sim\pi\left(\bm{\theta}_{k}\right)}\left[\sum_{t=0}^{+\infty}\gamma^{t}\frac{\pi\left(A_{t}|S_{t},\bm{\theta}\right)}{\pi\left(A_{t}|S_{t},\bm{\theta}_{k}\right)}Z_{\pi\left(\bm{\theta}_{k}\right)}\left(S_{t}, A_{t}\right)\right].
\end{equation}
It is straightforward to show that $\mathbb{E}_{\pi(\bm{\theta})}\left[G_{0}\right]$ and $L(\bm{\theta})$ have same gradients when $\bm{\theta}=\bm{\theta}_{k}$. Thus, it is possible to learn better policies via optimizing the surrogate advantage. PPO redesigned the optimization objective as 
\begin{equation}
	\mathbb{E}_{\pi(\bm{\theta}_{k})}\left[\min\left(\frac{\pi\left(A_{t}|S_{t},\bm{\theta}\right)}{\pi\left(A_{t}|S_{t},\bm{\theta}_{k}\right)}Z_{\pi\left(\bm{\theta}_{k}\right)},Z_{\pi\left(\bm{\theta}_{k}\right)}+\epsilon|Z_{\pi\left(\bm{\theta}_{k}\right)}|\right) \right],
\end{equation}
where $\epsilon\in(0,1)$ is a weighting coefficient. This objective can stabilize the update process by controlling the gap between the new policy and the old policy. Moreover, PPO is very efficient and easy to implement, serving as the baseline in many RL experiments.

%
%
%
%
%
%

\section{Classical Exploration Strategies}
\label{chap:two}

\subsection{$K$-Armed Bandit Problem}

In Chapter \ref{chap:one}, the convergence condition of RL algorithms requires visiting all possible
state-action pairs infinitely. In other words, the agent need to fully explore the environment to find the optimal policies. However, there is a critical tradeoff between the exploitation and exploration in RL, which can be shown via the $K$-armed bandit problem. As shown in Figure~\ref{fig:mab}, a gambler can choose to press one of the arms after putting in a coin. Each arm will spit out a coin with a certain probability, but this probability is unknown for the gambler. The goal of gamblers is to maximize their return through certain policies, {\em i.e.}, to get the most coins.

\begin{figure}[h]
	\centering
	\includegraphics[width=0.85\linewidth]{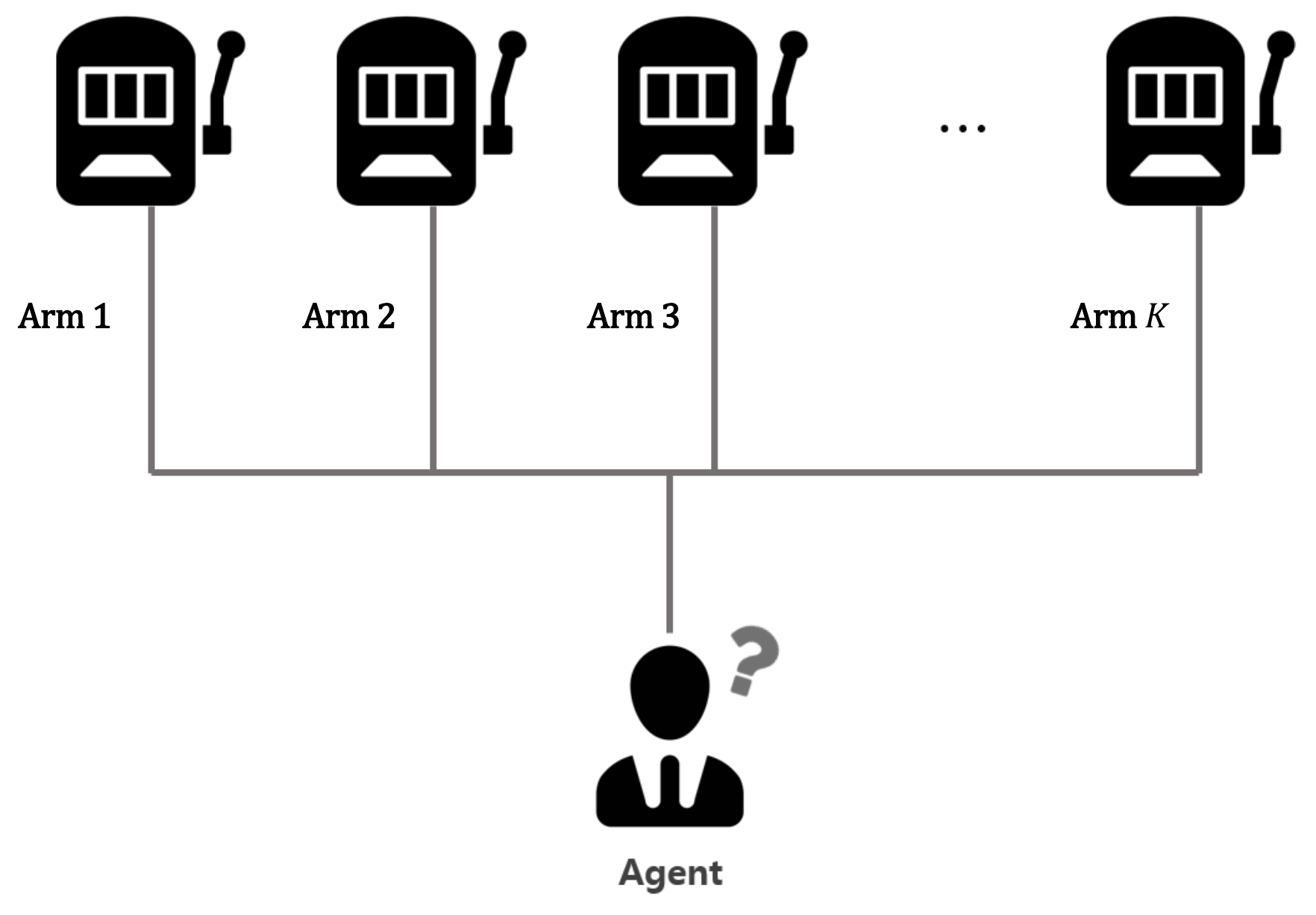}
	\caption{A $K$-armed bandit.}
	\label{fig:mab}
\end{figure}

If we want to know the expected return of each arm, we can distribute the trial opportunities equally to all the arms, {\em i.e.}, the exploration-only method. Based on the
outcome, we can estimate the expected return of each arm using the average payout probability. In contrast, if we want to execute the action with maximum reward, it suffices to apply the exploitation-only method, {\em i.e.}, to press the current optimal arm. Therefore, the exploration-only method can estimate the return of each arm accurately
at the cost of the large resources spent on identifying the optimal arm. In contrast, the exploitation-only method only focus the current return without spending further efforts in finding the true optimal arm, which is shortsighted. Both these two methods cannot effectively maximize the long-term rewards. Since the number of total attempts is finite, a comprehensive algorithm must be found by striking an appropriate balance between exploitation and exploration.

\subsection{Soft Policy}
To cope with the problem above, a well-known method entitled $\epsilon$-greedy algorithm was proposed in the literature. In on-policy control, we say that a policy is generally \textit{soft}, if it holds
\begin{equation}
	\pi(\bm{a}|\bm{s})> 0,
\end{equation}
for all $\bm{s}\in\mathcal{S}$ and all $\bm{a}\in\mathcal{A}$. A soft policy can select all possible actions, which allows the agent to visit more possible states and state-action pairs. The $\epsilon$-greedy policy is a special case of the soft policies. It selects an action corresponding to the maximal estimated action value with probability $1-\epsilon$, while any random action with probability $\epsilon$. More specifically, it follows
\begin{equation}
	\pi\left(\bm{a}|S_{t}\right) \leftarrow 
	\begin{cases}1-\epsilon+\epsilon /\left|\mathcal{A}\left(S_{t}\right)\right| & \text { if } \bm{a}=A^{*}, \\ 
		\epsilon /\left|\mathcal{A}\left(S_{t}\right)\right| & \text { if } \bm{a} \neq A^{*},
	\end{cases}
\end{equation}
where $A^{*}\leftarrow\underset{\bm{a}}{\rm argmax}\:Q(S_{t},\bm{a})$. Moreover, we have the following theorem:
\begin{theorem}
	The	$\epsilon$-greedy policy satisfies the policy improvement theorem.
\end{theorem}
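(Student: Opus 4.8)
The plan is to reduce the claim to the classical policy improvement theorem, which states that if two policies $\pi$ and $\pi'$ satisfy $\sum_{\bm a}\pi'(\bm a|\bm s)Q^{\pi}(\bm s,\bm a)\ge V^{\pi}(\bm s)$ for every $\bm s\in\mathcal{S}$, then $V^{\pi'}(\bm s)\ge V^{\pi}(\bm s)$ for every $\bm s$, i.e. $\pi'\succeq\pi$. (This itself follows by repeatedly unrolling the Bellman equation for $V^{\pi}$ and substituting the hypothesis at each step; I would either cite it or include the short telescoping argument.) It therefore suffices to take $\pi$ to be any $\epsilon$-soft policy, let $\pi'$ be the $\epsilon$-greedy policy with respect to $Q^{\pi}$, and verify the hypothesis for this pair.

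First I would write the left-hand side explicitly using the definition of the $\epsilon$-greedy policy:
\begin{equation}
	\sum_{\bm a}\pi'(\bm a|\bm s)Q^{\pi}(\bm s,\bm a)=\frac{\epsilon}{|\mathcal{A}(\bm s)|}\sum_{\bm a}Q^{\pi}(\bm s,\bm a)+(1-\epsilon)\max_{\bm a}Q^{\pi}(\bm s,\bm a).
\end{equation}
The crucial step is to bound the $\max$ term from below by a suitable weighted average of the values $Q^{\pi}(\bm s,\cdot)$. Because $\pi$ is $\epsilon$-soft, $\pi(\bm a|\bm s)\ge \epsilon/|\mathcal{A}(\bm s)|$, so the coefficients $w(\bm a)=\dfrac{\pi(\bm a|\bm s)-\epsilon/|\mathcal{A}(\bm s)|}{1-\epsilon}$ are nonnegative and sum to one over $\bm a$; hence they form a probability distribution and the maximum dominates the average,
\begin{equation}
	\max_{\bm a}Q^{\pi}(\bm s,\bm a)\ge \sum_{\bm a}\frac{\pi(\bm a|\bm s)-\epsilon/|\mathcal{A}(\bm s)|}{1-\epsilon}Q^{\pi}(\bm s,\bm a).
\end{equation}

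Substituting this bound into the identity above, the $\epsilon/|\mathcal{A}(\bm s)|$ contributions cancel and the $(1-\epsilon)$ factor clears, leaving
\begin{equation}
	\sum_{\bm a}\pi'(\bm a|\bm s)Q^{\pi}(\bm s,\bm a)\ge \sum_{\bm a}\pi(\bm a|\bm s)Q^{\pi}(\bm s,\bm a)=V^{\pi}(\bm s),
\end{equation}
which is precisely the hypothesis of the policy improvement theorem. Invoking that theorem gives $V^{\pi'}(\bm s)\ge V^{\pi}(\bm s)$ for all $\bm s$, so the $\epsilon$-greedy policy $\pi'$ is an improvement on $\pi$, as claimed. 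I expect the main obstacle to be the middle step: recognizing that the coefficients left over after pulling out the uniform exploration mass $\epsilon$ genuinely constitute a probability distribution — this is exactly where $\epsilon$-softness of the base policy is used — and hence that ``maximum $\ge$ weighted average'' applies; the rest is bookkeeping and an appeal to the already-established theorem. A secondary remark worth one sentence is that this only establishes improvement within the class of $\epsilon$-soft policies; the stronger convergence statement (that iterating $\epsilon$-greedy reaches the best $\epsilon$-soft policy) requires the additional observation that equality throughout the chain forces $\pi$ to be optimal among $\epsilon$-soft policies.
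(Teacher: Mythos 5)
Your proposal is correct and is exactly the argument the paper defers to: the paper's ``proof'' is just a citation to Sutton and Barto, and the standard proof there proceeds precisely as you do — expand $\sum_{\bm a}\pi'(\bm a|\bm s)Q^{\pi}(\bm s,\bm a)$ for the $\epsilon$-greedy $\pi'$, bound the max by the weighted average with the nonnegative, normalized weights $\bigl(\pi(\bm a|\bm s)-\epsilon/|\mathcal{A}(\bm s)|\bigr)/(1-\epsilon)$ obtained from $\epsilon$-softness of the base policy, and invoke the policy improvement theorem. Your closing caveat that this yields improvement only within the class of $\epsilon$-soft policies is also the correct reading of the textbook result.
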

\begin{proof}
	See proof in \cite{sutton2018reinforcement}.
\end{proof}
Now we are ready to design the $\epsilon$-greedy algorithm for $K$-armed bandit problem. Let $Q_{n}(k)$ denote the average return of the $k$-th arm after the $n$-th attempt, we have the following update formula:
\begin{equation}
	Q_{n}(k)=\frac{1}{n}\left((n-1) \times Q_{n-1}(k)+v_{n}\right),
\end{equation}
where $v_{n}$ denotes the reward of the $n$-th attempt. Finally, we summarize the $\epsilon$-greedy algorithm for $K$-armed bandit problem in Algorithm \ref{algo:ega}. Note that $Q(i)$ denotes the average return of the $i$-th arm, $\mathrm{count}(i)$ denotes the number of attempts, and ${\rm rand}()$ will randomly generates a number from $[0,1]$.

\begin{algorithm}[h]
	\caption{$\epsilon$-greedy Algorithm}
	\label{algo:ega}
	\begin{algorithmic}[1]
		\STATE \textbf{Input}: Number of arms $K$, a reward function $r$, number of attempts $T$, exploration probability $\epsilon$;
		\STATE Set $G=0$;
		\STATE Initialize $Q(i)=0, \mathrm{count}(i)=0$ for $i=1,2,\dots,K$;
		\FOR {$t=1,2,\dots,T$}
		\IF {$\mathrm{rand}()<\epsilon$}
		\STATE Randomly select a $k$ from $1,2,\dots,K$;
		\ELSE
		\STATE $k=\underset{i}{\rm argmax}\:Q(i)$;
		\ENDIF
		\STATE $v=r(k)$;
		\STATE $G=G+v$;
		\STATE $Q(k)=\frac{Q(k)\cdot\mathrm{count}(k)+v}{\mathrm{count}(k)+1}$;
		\STATE $\mathrm{count}(k)=\mathrm{count}(k)+1$;
		\ENDFOR
		\STATE \textbf{Output}: The accumulative rewards $G$.
	\end{algorithmic}
\end{algorithm}

When the reward distribution has heavier tails, a larger $\epsilon$ is needed to encourage more exploration. Furthermore, for a sufficiently large number of attempts, all the arm rewards can be accurately estimated. As a result, no further exploration is required. Therefore, it suffices to make $\epsilon$ decay with attempts, such as $\epsilon=1/\sqrt{t}$.

\subsection{Upper Confidence Bound}
The $\epsilon$-greedy method selects non-greedy actions with equal probability, regardless of their estimated returns derived from the past attempts. However, it is more reasonable to choose actions based on their estimated returns by taking into account both their expected return and the uncertainty associated with these estimates. Inspired by this observation, the upper confidence bound (UCB) algorithm selects actions by \cite{garivier2011upper}
\begin{equation}
	k=\underset{i}{\rm argmax}\:\big[Q(i)+c\sqrt{\frac{\ln t}{N_{t}(i)}}\big],
\end{equation}
where $N_{t}(i)$ is the number of times that the $i$-th arm has been selected prior to time $t$, and $c>0$ controls the degree of exploration. The square-root term measures the uncertainty in the return estimation of the $i$-th arm. The uncertainty term decreases when $N_{t}(i)$ increases. On the opposite, if the $i$-th arm is not selected but $t$ increases, the uncertainty estimate increases. As a result, all arms will eventually be selected with arms of either low estimated return or high selection frequency being selected with decaying frequency.

\subsection{Thompson Sampling}
In Thompson sampling (TS), the agent tracks a belief over the probability distribution of the optimal arms and samples from this distribution, which is shown to solve the $K$-armed bandit problem more effectively \cite{russo2017tutorial}. More specifically, TS assumes that $Q(i)$ follows a Beta distribution that is a family of continuous probability distributions defined on the interval $[0, 1]$ parameterized by two positive shape parameters, namely $\alpha$ and $\beta$. At each time step $t$, TS samples an expected reward $\tilde{Q}(i)$ from the prior distribution $\mathrm{Beta}(\alpha_{i},\beta_{i})$ for every arm selection. After that, the action is chosen by 
\begin{equation}
	k_{TS}=\underset{i}{\rm argmax}\:\tilde{Q}(i).
\end{equation}
After the true reward is observed, the Beta distribution is updated following 
\begin{equation}
	\begin{aligned}
		\alpha_{i}&\leftarrow \alpha_{i}+r(k)\mathbbm{1}[k_{TS}=i],\\
		\beta_{i}&\leftarrow \beta_{i}+(1-r(k))\mathbbm{1}[k_{TS}=i],
	\end{aligned}
\end{equation}
where $\mathbbm{1}$ is the indicator function. Since TS samples reward estimations from prior distributions and the reward probability corresponding to each action is currently considered to be the optimal, it actually implements the idea of probability matching. 

\subsection{Boltzmann Exploration}
Next, we review a more straightforward but efficient exploration method entitled the Boltzmann Exploration. The Boltzmann exploration compromises the exploitation and exploration based on the known average return. The key insight of the Boltzmann exploration is to give higher selection probability to arms with higher average returns. The Boltzmann distribution is defined as \cite{reichl1999modern}
\begin{equation}\label{eq:boltzmann exploration}
	P(k)=\frac{e^{\frac{Q(k)}{\tau}}}{\sum_{i=1}^{K} e^{\frac{Q(i)}{\tau}}},
\end{equation}
where $Q(i)$ is the average return of the $i$-th arm, $\tau>0$ is a coefficient called as \textit{temperature}. The preference for exploration increases with the value of $\tau$. 
\begin{figure}[h!]
	\centering
	\includegraphics[width=0.75\linewidth]{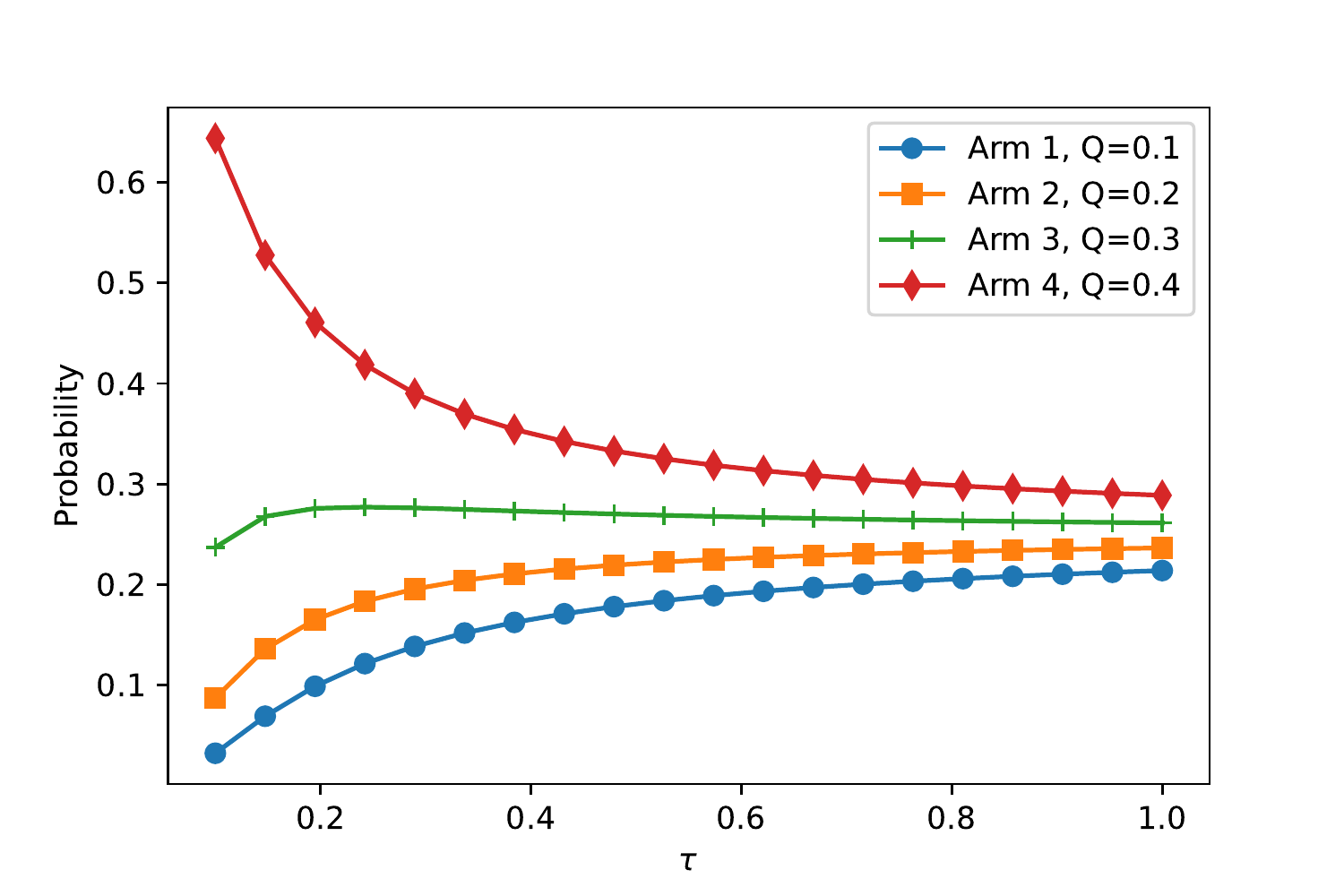}
	\caption{Selection probabilities versus the value of $\tau$.}
	\label{fig:boltzmann}
\end{figure}

For illustration purposes, We consider an example of four arms with average return $0.1$, $0.2$, $0.3$ and $0.4$, Figure~\ref{fig:boltzmann} illustrates the relation between the selection probabilities and $\tau$. Inspection of Figure~\ref{fig:boltzmann} suggests that the selection probabilities tend to be equal when the temperature increases. Equipped with the Boltzmann distribution, we summarize the detailed exploration algorithm in Algorithm \ref{algo:Boltzmann exploration}. 

\begin{algorithm}[h!]
	\caption{Boltzmann Exploration Algorithm}
	\label{algo:Boltzmann exploration}
	\begin{algorithmic}[1]
		\STATE \textbf{Input}: Number of arms $K$, a reward function $r$, number of attempts $T$, temperature coefficient $\tau$;
		\STATE Set $G=0$;
		\STATE Initialize $Q(i)=0, \mathrm{count}(i)=0$ for $i=1,2,\dots,K$;
		\FOR {$t=1,2,\dots,T$}
		\STATE Sample $k$ based on $P(k)$;
		\STATE $v=r(k)$;
		\STATE $G=G+v$;
		\STATE $Q(k)=\frac{Q(k)\cdot\mathrm{count}(k)+v}{\mathrm{count}(k)+1}$;
		\STATE $\mathrm{count}(k)=\mathrm{count}(k)+1$;
		\ENDFOR
		\STATE \textbf{Output}: The accumulative rewards $G$.
	\end{algorithmic}
\end{algorithm}

\subsection{Action Entropy Maximization}\label{sect:aem}
Exploration methods like $\epsilon$-greedy policy and Boltzmann exploration are prone to eventually learn the optimal policy in tabular setting. However, they are inefficient and may be futile when handling complex environments with high-dimensional observations. To address the exploration problem in deep RL, an effective method is action entropy maximization. In contrast to the aforementioned methods, action entropy maximization is a passive exploration method that adds an entropy term into
the loss function while utilizing neural networks for function approximation.

We first formally define the Shannon entropy as follows \cite{shannon2001mathematical}:
\begin{definition}\label{def:shannon entropy}
	Let $X\in\mathbb{R}^{m}$ be a random vector that has a density function $f(\bm{x})$ with respect to Lebesgue measure on $\mathbb{R}^{m}$, and let $\mathcal{X}=\{\bm{x}\in\mathbb{R}^{m}:f(\bm{x})>0\}$ be the support of the distribution. The Shannon entropy is defined as:
	\begin{equation}\label{eq:hf}
		H(f)=-\int_{\mathcal{X}}f(\bm{x})\log f(\bm{x})d \bm{x}.
	\end{equation}
\end{definition}

Equipped with Definition \ref{def:shannon entropy}, the action entropy is calculated as
\begin{equation}
	H(\pi(\cdot|\bm{s}))=-\int_{\mathcal{A}} \pi(\bm{a}|\bm{s})\log \pi(\bm{a}|\bm{s})d\bm{a}.
\end{equation}
For discrete action space, it has a simpler form:
\begin{equation}
	H(\pi(\cdot|\bm{s}))=\sum_{\bm a\in\mathcal{A}}-\pi(\bm{a}|\bm{s})\log \pi(\bm{a}|\bm{s}).
\end{equation}
Similar to Boltzmann exploration, action entropy evaluates the preference of exploration and exploitation. For instance, let $|\mathcal{A}|=4,\pi(\cdot|\bm{s}_{0})=\{0.1,0.2,0.3,0.4\}$, then we have
\begin{equation}
	H(\pi(\cdot|\bm{s}_{0}))=1.28.
\end{equation}
Let $\pi(\cdot|\bm{s}_{1})=\{0.25,0.25,0.25,0.25\}$, then
\begin{equation}
	H(\pi(\cdot|\bm{s}_{1}))=1.39.
\end{equation}
It is natural to find that the decentralized action probabilities produce higher action entropy, which encourages the agent to visit the action space more evenly. Therefore, it is feasible to utilize the action entropy as an additional reward, and redefine the objective of RL as
\begin{equation}\label{eq:merl}
	\pi^{*}=\underset{\pi\in\Pi}{\rm argmax}\:\mathbb{E}_{\tau\sim\pi}\left[\sum_{t=0}^{T-1}\gamma^{t}r(\bm{s}_{t},\bm{a}_{t})+\alpha H(\pi(\cdot|\bm{s}_{t}))\right],
\end{equation}
where $\Pi$ is the set of all stationary polices, $\tau=(\bm{s}_{0}, \bm{a}_{0}, \dots, \bm{a}_{T-1}, \bm{s}_{T})$ is the trajectory collected by the agent, and $\alpha$ is the temperature parameter that determines the importance of the entropy item \cite{haarnoja2018soft}. Moreover, we have the following convergence theorem:
\begin{theorem}
	Repeated application of	soft policy evaluation and soft policy improvement from any $\pi\in\Pi$ can converge to the optimal policy $\pi^{*}$.
\end{theorem}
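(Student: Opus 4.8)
The plan is to follow the classical soft policy iteration argument, splitting the proof into three pieces: soft policy evaluation, soft policy improvement, and their combination. Throughout, denote by $Q^{\pi}$ the \emph{soft} action-value function of $\pi$, defined as the fixed point of the augmented backup operator $\mathcal{B}^{\pi}$,
\begin{equation}
	\mathcal{B}^{\pi}Q(\bm{s},\bm{a}) = r(\bm{s},\bm{a}) + \gamma\,\mathbb{E}_{\bm{s}'\sim\mathcal{T}(\cdot|\bm{s},\bm{a})}\Big[\mathbb{E}_{\bm{a}'\sim\pi}\big[Q(\bm{s}',\bm{a}') - \alpha\log\pi(\bm{a}'|\bm{s}')\big]\Big],
\end{equation}
and set $V^{\pi}(\bm{s}) = \mathbb{E}_{\bm{a}\sim\pi}[Q^{\pi}(\bm{s},\bm{a}) - \alpha\log\pi(\bm{a}|\bm{s})]$, so that $V^{\pi}(\bm{s}_{0})$ agrees with the entropy-regularized objective in \eqref{eq:merl}. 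First I would prove the \emph{soft policy evaluation} step: by absorbing the entropy term into an augmented reward $r_{\pi}(\bm{s},\bm{a}) = r(\bm{s},\bm{a}) + \gamma\alpha\,\mathbb{E}_{\bm{s}'}[H(\pi(\cdot|\bm{s}'))]$, the operator $\mathcal{B}^{\pi}$ reduces to an ordinary Bellman operator, hence is a $\gamma$-contraction in the supremum norm; the Banach fixed-point theorem then gives convergence of the iteration $Q_{k+1}=\mathcal{B}^{\pi}Q_{k}$ to $Q^{\pi}$ from any bounded start. This needs $H(\pi(\cdot|\bm{s}))$ bounded so that $r_{\pi}$ is bounded — automatic for a finite action space, and a standing assumption otherwise — together with $\gamma<1$.

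Second, I would establish \emph{soft policy improvement}. Given $\pi_{\mathrm{old}}$ with soft value $Q^{\pi_{\mathrm{old}}}$, define the updated policy by the information projection
\begin{equation}
	\pi_{\mathrm{new}}(\cdot|\bm{s}) = \underset{\pi'\in\Pi}{\mathrm{argmin}}\; D_{\mathrm{KL}}\!\left(\pi'(\cdot|\bm{s})\;\Big\|\;\frac{\exp\big(\tfrac{1}{\alpha}Q^{\pi_{\mathrm{old}}}(\bm{s},\cdot)\big)}{Z^{\pi_{\mathrm{old}}}(\bm{s})}\right),
\end{equation}
where $Z^{\pi_{\mathrm{old}}}(\bm{s})$ normalizes the right-hand distribution. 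Since $\pi_{\mathrm{old}}(\cdot|\bm{s})$ is itself feasible, the minimized KL at $\pi_{\mathrm{new}}$ is no larger than its value at $\pi_{\mathrm{old}}$; expanding the KL and noting that the $\log Z^{\pi_{\mathrm{old}}}$ term is policy-independent and cancels, this rearranges to
\begin{equation}
	\mathbb{E}_{\bm{a}\sim\pi_{\mathrm{new}}}\!\big[Q^{\pi_{\mathrm{old}}}(\bm{s},\bm{a}) - \alpha\log\pi_{\mathrm{new}}(\bm{a}|\bm{s})\big] \;\ge\; V^{\pi_{\mathrm{old}}}(\bm{s}), \quad \forall\,\bm{s}\in\mathcal{S}.
\end{equation}
Substituting this bound into the soft Bellman equation for $Q^{\pi_{\mathrm{old}}}$ and unrolling the recursion — each pass replaces $V^{\pi_{\mathrm{old}}}$ by the larger left-hand quantity and then re-expands — yields $Q^{\pi_{\mathrm{new}}}(\bm{s},\bm{a}) \ge Q^{\pi_{\mathrm{old}}}(\bm{s},\bm{a})$ for all $(\bm{s},\bm{a})$, convergence of the unrolling being guaranteed by $\gamma<1$ and boundedness.

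Finally I would combine the two steps. Starting from any $\pi_{0}\in\Pi$ and alternating evaluation and improvement produces policies $\{\pi_{i}\}$ whose soft values $Q^{\pi_{i}}$ are monotonically non-decreasing in every $(\bm{s},\bm{a})$; since $r$ and $H$ are bounded, $\{Q^{\pi_{i}}\}$ is bounded above, hence converges pointwise to some $Q^{\infty}$ with limiting policy $\pi^{\infty}$. At the limit the improvement step produces no gain, so $\pi^{\infty}$ is greedy with respect to its own soft value, $\mathbb{E}_{\bm{a}\sim\pi^{\infty}}[Q^{\infty}(\bm{s},\bm{a}) - \alpha\log\pi^{\infty}(\bm{a}|\bm{s})] \ge \mathbb{E}_{\bm{a}\sim\pi'}[Q^{\infty}(\bm{s},\bm{a}) - \alpha\log\pi'(\bm{a}|\bm{s})]$ for every $\pi'\in\Pi$ and $\bm{s}$. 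Applying the improvement inequality of the previous paragraph with $\pi_{\mathrm{old}}=\pi'$ arbitrary and $\pi_{\mathrm{new}}=\pi^{\infty}$ then gives $Q^{\pi^{\infty}}\ge Q^{\pi'}$ pointwise for all $\pi'\in\Pi$, i.e.\ $\pi^{\infty}$ maximizes the objective in \eqref{eq:merl}, so $\pi^{\infty}=\pi^{*}$. The hard part will be this last step: converting ``no further improvement'' into genuine global optimality requires knowing that the greedy condition characterizes the maximum-entropy optimum (equivalently, that the soft Bellman optimality equation has a unique bounded solution) and carrying the greedy comparison out uniformly over a possibly infinite action space — which is exactly where the boundedness and measurability assumptions on the entropy do the real work; the contraction and the improvement inequality themselves are routine once those are granted.
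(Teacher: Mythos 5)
The paper offers no proof of its own here---it simply defers to \cite{haarnoja2017reinforcement}---and your proposal is essentially a faithful reconstruction of that reference's soft policy iteration argument: the $\gamma$-contraction of the entropy-augmented Bellman operator for soft policy evaluation, the KL-projection improvement lemma, and monotone bounded convergence of the soft values. Your closing caveat is well placed, since passing from ``no further improvement'' to global optimality (and from pointwise convergence of the $Q^{\pi_i}$ to a well-defined limiting policy $\pi^{\infty}$ satisfying the greedy condition) is exactly the step the cited proof also treats somewhat informally and where the finite-action or bounded-entropy assumptions carry the load.
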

\begin{proof}
	See proof in \cite{haarnoja2017reinforcement}.
\end{proof}
A well-known algorithm for solving Eq.~\eqref{eq:merl} is soft actor-critic that can be found in \cite{haarnoja2018soft}. Note that the entropy regularizer can be used in conjunction with any RL algorithms.

\subsection{Noise-Based Exploration}
Another representative exploration method is the noise-based exploration, which adds noise into observation, action and even parameter space. Active exploration methods like $\epsilon$-greedy policy and Boltzmann exploration are often used for discrete action spaces. In fact, the $\epsilon$-greedy policy can also be used in continuous control tasks, {\em i.e.}, to randomly sample an action with probability $\epsilon$ in each time step. In \cite{lillicrap2015continuous}, a deep deterministic policy gradient (DDPG) is proposed that solves the problem of exploration independently from the learning algorithm. DDPG constructs an exploration policy by adding noise sampled from a noise process:
\begin{equation}
	\pi'(\bm{s})=\pi(\bm{s},\bm{\theta})+\mathcal{N},
\end{equation}
where $\pi$ is a deterministic policy and $\mathcal{N}$ can be chosen to adapt the environment. For instance, we can use Ornstein-Uhlenbeck process (OUP) \cite{uhlenbeck1930theory} or Gaussian process.

In contrast, \cite{plappert2017parameter} proposes to improve exploration via parameter space noise. As shown in Figure \ref{fig:pnoise}, parameter space noise directly injects randomness into the parameters of the agent, disturbing the types of decisions it makes such that they always fully depend on what the agent currently senses. 

\begin{figure}[h]
	\centering
	\includegraphics[width=0.85\linewidth]{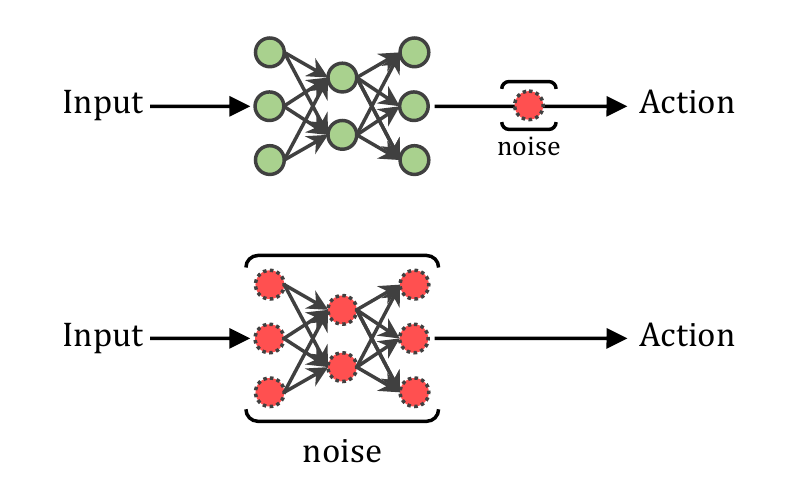}
	\caption{Action space noise (top), compared to parameter space noise (bottom).}
	\label{fig:pnoise}
\end{figure}

Denote by $\bm{\theta}$ the parameters of the agent, then the perturbed parameters $\tilde{\bm{\theta}}$ are computed as
\begin{equation}
	\tilde{\bm{\theta}}=\bm{\theta}+\mathcal{N},
\end{equation}
where $\mathcal{N}$ is a noise process, such as Gaussian noise. In particular, an adaptive variance $\sigma$ is defined as
\begin{equation}
	\sigma_{k+1}= \begin{cases}\alpha \sigma_{k} & \text { if } d(\pi, \tilde{\pi}) \leq \delta \\ \frac{1}{\alpha} \sigma_{k} & \text { otherwise }\end{cases},
\end{equation}
where $\alpha\in\mathbb{R}_{+}$ is a scaling factor, $\delta\in\mathbb{R}_{+}$ is a threshold value, and $d(\cdot,\cdot)$ is a distance measure that depends on the concrete algorithm. Figure \ref{fig:pnoise_return} illustrates the performance comparison with action-noise methods.

\begin{figure}[h]
	\centering
	\includegraphics[width=\linewidth]{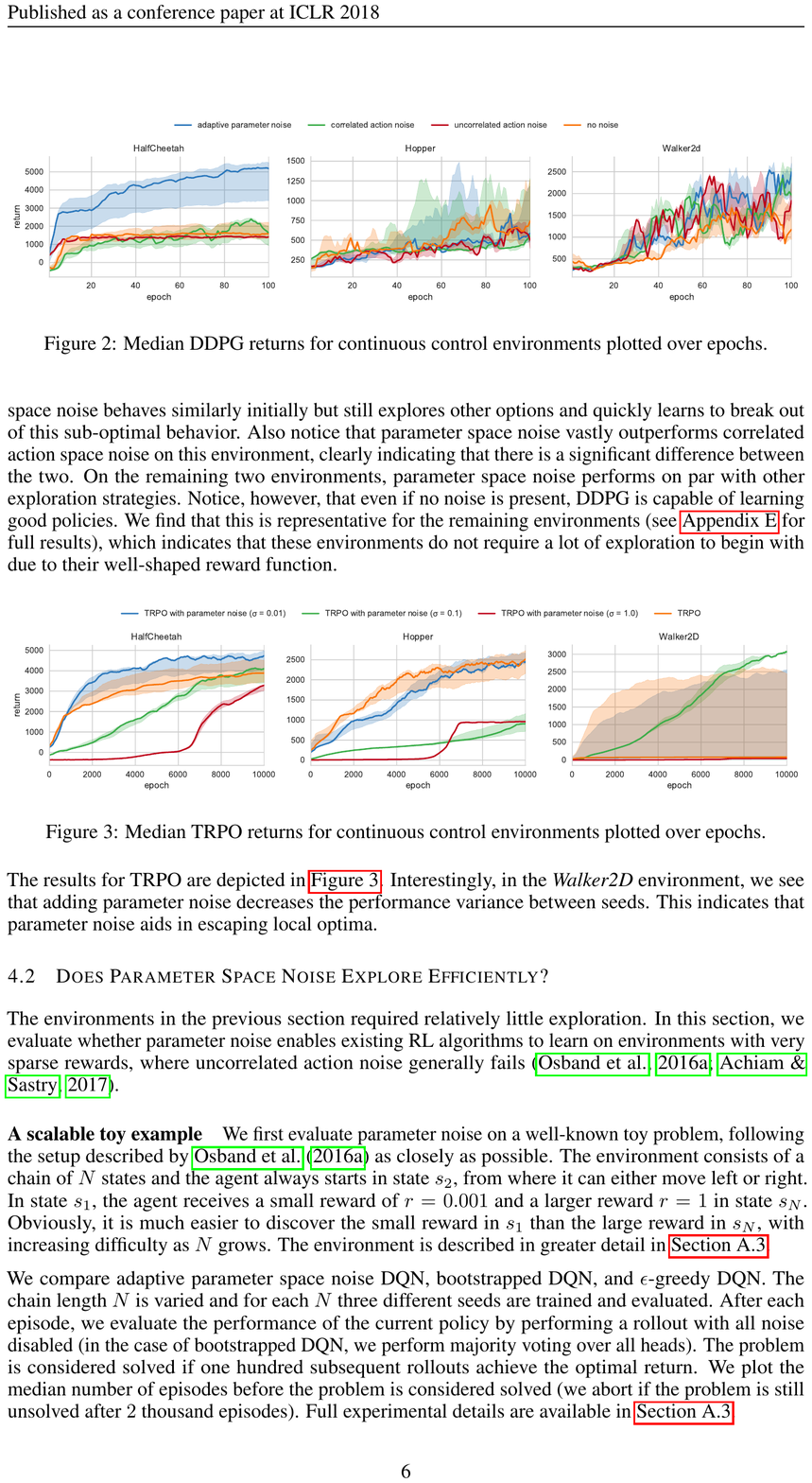}
	\caption{Median DDPG returns for continuous control environments plotted over epochs.}
	\label{fig:pnoise_return}
\end{figure}

It is interesting to find in Figure \ref{fig:pnoise_return} that parameter noise actually helped algorithms explore their environments more effectively, leading to higher scores. This is because adding noise to the parameters will make the exploration consistent across episodes, while adding noise to the action space will lead to unpredictable exploration.

\section{Intrinsically-Motivated Exploration}
\label{chap:three}

\subsection{Intrinsic Motivation of Learning}
Most exploration methods in Chapter \ref{chap:two} promotes the exploration of action space, which encourages the agent to visit more unknown state-action pairs to learn better policies. However, these techniques can eventually learn the optimal policy in the tabular setting, but they are inefficient when handling complex environments with high-dimensional observations \cite{pathak2017curiosity}. Despite the advantages of the action entropy maximization and noise-based exploration, they cannot significantly improve the critical exploration problem of state space. To address this problem, \cite{white1959motivation} analyzed the learning motivation of the agent, which can be distinguished as extrinsic motivation and intrinsic motivation. Extrinsic motivation refers to being moved to do something because of some specific rewarding mechanism. For instance, we want to get more points in Atari games or more grades in examination. In contrast, intrinsic motivation refers to being moved to do something because it is inherently pleasant. When an infant plays, waves its arms or hums, he has no explicit teacher. Most trappings of modern life like college and a good job are so far into the future, which provides no effective reinforcement signal. Therefore, intrinsic motivation leads agents to participate in exploration, games and other behaviors driven by curiosity without explicit reward. In this chapter, we attempt to evaluate intrinsic motivation and transform it into quantized rewards for improving exploration.

\subsection{Reward Shaping}
Reward shaping is a very powerful technique for scaling up RL methods to handle complex problems by supplying additional rewards to the agent to guide its learning process \cite{dorigo1994robot}. The most well-known reward shaping method is potential-based reward shaping (PBRS) \cite{ng1999policy}. We first formally define the potential-based shaping function as follows:
\begin{definition}
	A shaping reward function $f:\mathcal{S}\times\mathcal{A}\times\mathcal{S}\rightarrow\mathbb{R}$ is potential-based if there exists $\phi:\mathcal{S}\rightarrow\mathbb{R}$, such that
	\begin{equation}
		f(\bm{s},\bm{a},\bm{s}')=\gamma\phi(\bm{s}')-\phi(\bm{s}),
	\end{equation}
	for all $\bm{s}\neq\bm{s}_{0},\bm{s}'$.
\end{definition}
For instance, we can utilize the state-value function as the shaping function:
\begin{equation}
	f(\bm{s},\bm{a},\bm{s}')=\gamma V^{\pi}(\bm{s}')-V^{\pi}(\bm{s}).
\end{equation}
Furthermore, we have the following theorem:
\begin{theorem}
	If $f$ is a potential-based shaping function, then every optimal policy in $\mathcal{M}'=\langle  \mathcal{S},\mathcal{A},\mathcal{T},r+f,\rho(\bm{s}_{0}),\gamma\rangle$ will also be an optimal policy in $\mathcal{M}=\langle  \mathcal{S},\mathcal{A},\mathcal{T},r,\rho(\bm{s}_{0}),\gamma\rangle$ (and vice versa).
\end{theorem}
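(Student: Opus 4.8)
The plan is to show that the potential-based shaping term telescopes along any trajectory, so that the return in $\mathcal{M}'$ differs from the return in $\mathcal{M}$ only by a term depending on the starting state, not on the policy. Concretely, fix a policy $\pi$ and a trajectory $\bm{s}_0,\bm{a}_0,\bm{s}_1,\bm{a}_1,\dots$ generated under $\pi$ in the shared transition kernel $\mathcal{T}$. The shaped one-step reward at time $t$ is $r(\bm{s}_t,\bm{a}_t)+\gamma\phi(\bm{s}_{t+1})-\phi(\bm{s}_t)$. First I would write out the discounted sum $\sum_{t\ge 0}\gamma^t\bigl(\gamma\phi(\bm{s}_{t+1})-\phi(\bm{s}_t)\bigr)$ and observe that the $\gamma^{t}\cdot\gamma\phi(\bm{s}_{t+1})=\gamma^{t+1}\phi(\bm{s}_{t+1})$ term at step $t$ cancels the $-\gamma^{t+1}\phi(\bm{s}_{t+1})$ term at step $t+1$, leaving only $-\phi(\bm{s}_0)$ (plus a vanishing tail term $\lim_{T\to\infty}\gamma^{T}\phi(\bm{s}_T)$, which is zero for $\gamma<1$ with bounded $\phi$, or handled by the episodic convention that $\phi$ vanishes at the terminal/absorbing state).

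Taking expectations over trajectories under $\pi$ then gives, for every state $\bm{s}$,
\begin{equation}
	V^{\pi}_{\mathcal{M}'}(\bm{s}) = V^{\pi}_{\mathcal{M}}(\bm{s}) - \phi(\bm{s}),
\end{equation}
and analogously $Q^{\pi}_{\mathcal{M}'}(\bm{s},\bm{a}) = Q^{\pi}_{\mathcal{M}}(\bm{s},\bm{a}) - \phi(\bm{s})$. The crucial feature is that the correction $\phi(\bm{s})$ is the same additive constant for all policies $\pi$ at a given state. Consequently, for any two policies $\pi$ and $\pi'$ and any state $\bm{s}$, the inequality $V^{\pi}_{\mathcal{M}'}(\bm{s})\ge V^{\pi'}_{\mathcal{M}'}(\bm{s})$ holds if and only if $V^{\pi}_{\mathcal{M}}(\bm{s})\ge V^{\pi'}_{\mathcal{M}}(\bm{s})$, since subtracting the common term $\phi(\bm{s})$ preserves the ordering. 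Hence the partial order $\succeq$ on policies induced by $\mathcal{M}'$ coincides exactly with that induced by $\mathcal{M}$, so the two MDPs have the same set of optimal policies, which is precisely the claim (both directions at once).

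I would present the argument in this order: (1) the telescoping identity for a fixed trajectory; (2) disposing of the tail term under the two standard regimes ($\gamma<1$, or episodic with $\phi$ zeroed at absorbing states); (3) passing to expectations to obtain the value-function shift; (4) concluding that optimality is preserved in both directions. The main obstacle, such as it is, is the bookkeeping in step (2): one must be careful that the sum $\sum_t \gamma^t f(\bm{s}_t,\bm{a}_t,\bm{s}_{t+1})$ genuinely converges and that the telescoping does not smuggle in a policy-dependent boundary term. Under the excerpt's stated assumption $\gamma\in[0,1]$ together with bounded rewards (and hence, if we take $\phi$ to be a value function or any bounded potential, bounded $\phi$), the $\gamma<1$ case is immediate; the $\gamma=1$ episodic case requires the convention $\phi(\bm{s}_{\mathrm{terminal}})=0$, which is exactly why the definition excludes $\bm{s}_0$ and terminal states from the constraint. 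Everything else is routine manipulation of the discounted-return series established earlier in the excerpt.
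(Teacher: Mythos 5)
Your proof is correct. Note that the paper itself states this theorem without providing any proof (it is the classical policy-invariance result for potential-based reward shaping, due to Ng, Harada, and Russell), so there is no in-paper argument to compare against; your telescoping derivation of $V^{\pi}_{\mathcal{M}'}(\bm{s}) = V^{\pi}_{\mathcal{M}}(\bm{s}) - \phi(\bm{s})$ and $Q^{\pi}_{\mathcal{M}'}(\bm{s},\bm{a}) = Q^{\pi}_{\mathcal{M}}(\bm{s},\bm{a}) - \phi(\bm{s})$, followed by the observation that a policy-independent additive shift preserves the ordering $\succeq$ and hence the set of optimizers, is exactly the canonical argument from that reference, and your handling of the tail term (bounded $\phi$ with $\gamma<1$, or the episodic convention $\phi=0$ at terminal states) addresses the only point where care is needed.
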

This theorem demonstrates that PBRS can guarantee the policy invariance while adjust the learning process. Consider the following GridWorld game, in which the agent needs to move from the black node to the red node with minimum steps, and the reward is $-1$ per step.

\begin{figure}[h]
	\centering
	\includegraphics[width=0.5\linewidth]{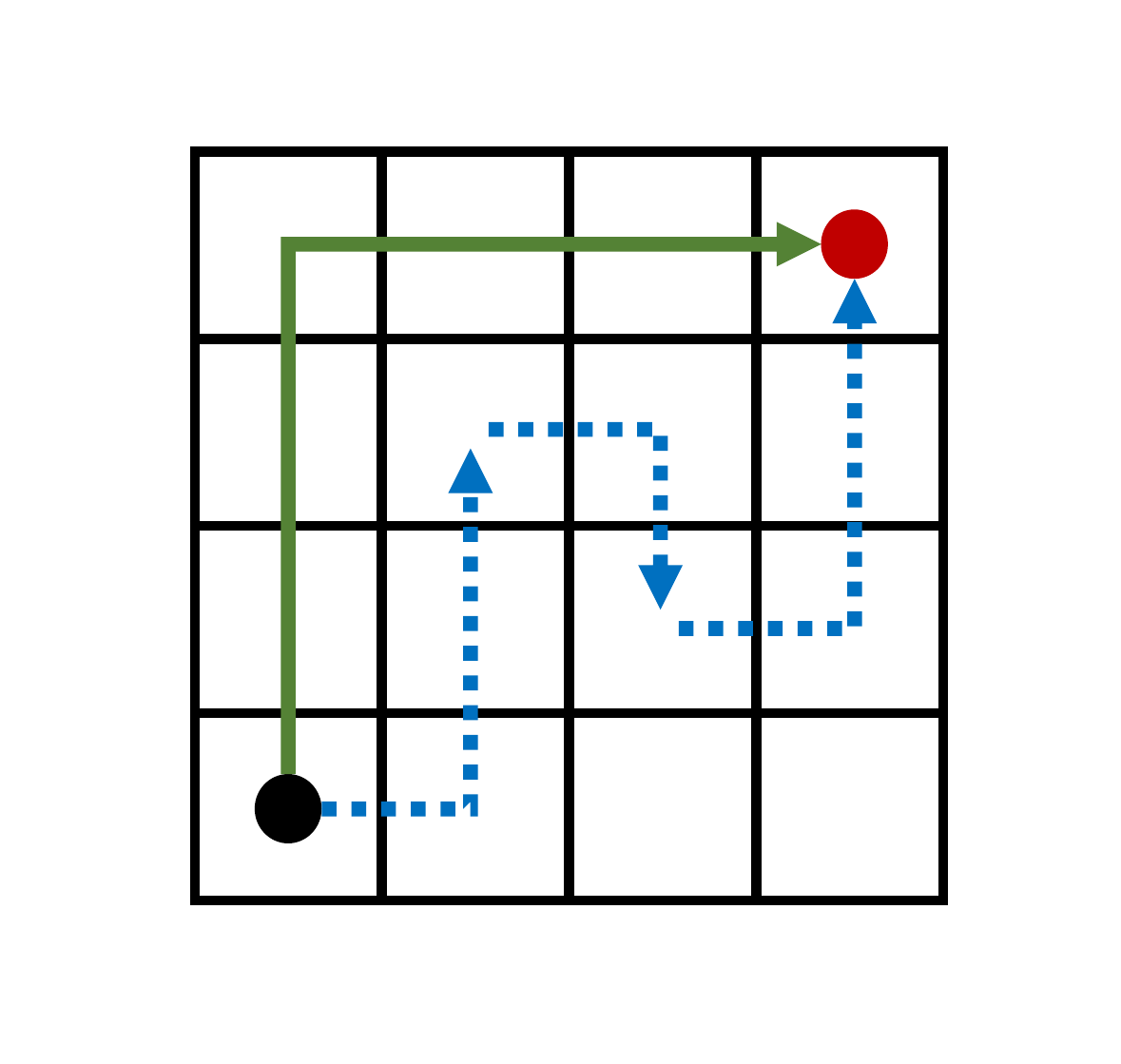}
	\caption{A GridWorld game.}
	\label{fig:gridworld}
\end{figure}

\begin{figure}[h]
	\centering
	\includegraphics[width=0.6\linewidth]{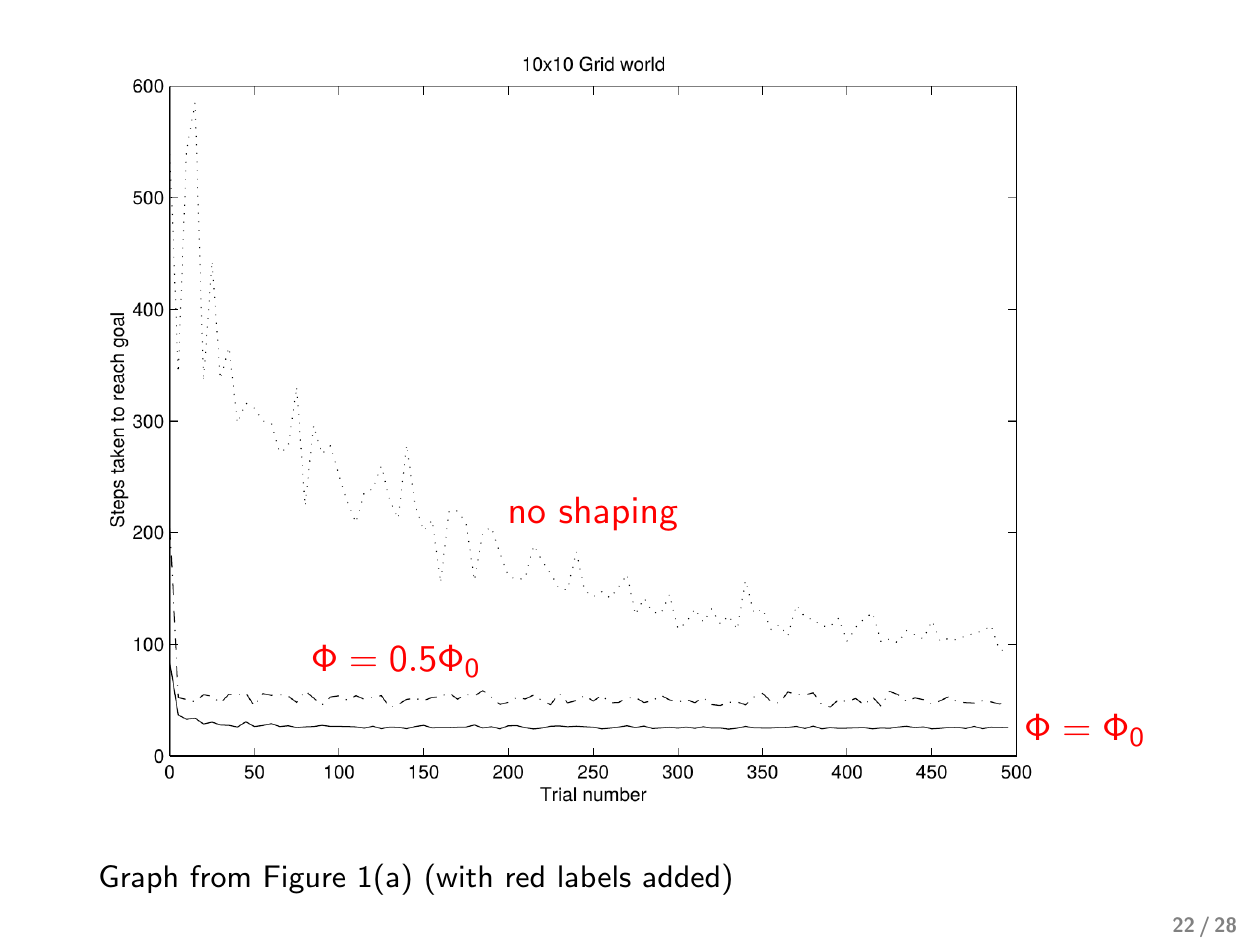}
	\caption{Experiments for $10\times10$ GridWorld.}
	\label{fig:gridworld return}
\end{figure}

Therefore, one estimate of the state-value function is
\begin{equation}
	\phi_{0}(\bm{s})=-D_{\rm M}(\bm{s},\mathrm{GOAL}),
\end{equation}
where $D_{\rm M}$ is the Manhattan distance. Then we use $\phi_{0}$ and $0.5\phi_{0}$ as potential functions, and the performance comparison is illustrated in Figure \ref{fig:gridworld return}. It is obvious that the training process is significantly accelerated with the shaped rewards. 

In the following sections, we will introduce multifarious intrinsic rewards. In general, all of these methods belong to reward shaping approaches. But they may not follow the form of the potential-based functions, {\em i.e.}, they may change what the optimal policy is. However, extensive experiments demonstrate that the intrinsic rewards effectively improve the exploration efficiency and allow the agent to learn better policies \cite{burda2018large}. In particular, most formulations of intrinsic reward can be broadly categorized into two classes:
\begin{itemize}
	\item Encourage the agent to explore novel states or state-action pairs;
	\item Encourage the agent to take actions that reduce the uncertainty of predicting the consequence of its own actions, {\em i.e.}, the knowledge of the environment.
\end{itemize}

\subsection{Novelty-Based Intrinsic Rewards}
\subsubsection{Count-Based Exploration}
We begin with a count-based exploration method proposed by \cite{ostrovski2017count}, which leverages the pseudo-count method to evaluate the novelty of states. Let $\psi$ denotes a density model on a finite space $\mathcal{S}$, and $\psi_{n}(\bm{s})$ is the probability after being trained on a sequence of states $\bm{s}_{1},\dots,\bm{s}_{n}$. Assume $\psi_{n}(\bm{s})>0$ for all $\bm{s},n$, let $\psi'_{n}(\bm{s})$ be the recording probability the model would assign to $\bm{s}$ if it was trained on the same $\bm{s}$ one more time. We say that $\psi$ is learning-positive if $\psi'_{n}(\bm{s})>\psi_{n}(\bm{s})$, and the prediction gain (PG) of $\psi$ is
\begin{equation}
	\mathrm{PG}_{n}(\bm{s})=\log \psi_{n}^{\prime}(\bm{s})-\log \psi_{n}(\bm{s}).
\end{equation}
Then we define the pseudo-count as 
\begin{equation}
	\hat{\mathrm{N}}_{n}(\bm{s})=\frac{\psi_{n}(\bm{s})\left(1-\psi_{n}^{\prime}(\bm{s})\right)}{\psi_{n}^{\prime}(\bm{s})-\psi_{n}(\bm{s})},
\end{equation}
derived by assuming that a single observation of $\bm{s}\in\mathcal{S}$ leads to a unit increase in pseudo-count:
\begin{equation}
	\psi_{n}(\bm{s})=\frac{\hat{\mathrm{N}}_{n}(\bm{s})}{\hat{n}}, \quad \psi_{n}^{\prime}(\bm{s})=\frac{\hat{\mathrm{N}}_{n}(\bm{s})+1}{\hat{n}+1},
\end{equation}
where $\hat{n}$ is the pseudo-count total. Upon certain assumptions on $\psi_{n}$, pseudo-counts grow approximately linearly with real counts. Moreover, the pseudo-count can be approximated as
\begin{equation}
	\hat{\mathrm{N}}_{n}(\bm{s}) \approx\left(e^{\mathrm{PG}_{n}(\bm{s})}-1\right)^{-1}.
\end{equation}
Finally, we define the intrinsic reward at step $n$ as
\begin{equation}
	\hat{r}(\bm{s})=\sqrt{1/\hat{N}_{n}(\bm{s})},
\end{equation}
which encourages the agent to try to re-experience surprising states. On the contrary, if a state is visited multiple times, the agent will visit it with decay frequency.

\subsubsection{Random Network Distillation}
Count-based exploration is a straightforward method for evaluating the state novelty. But it suffers from complicated computation procedures and the pseudo-count method may produce large variance. To address the problem, \cite{burda2018exploration} proposed a random network distillation (RND) method that utilizes neural network to record the state novelty. As shown in Figure \ref{fig:rnd}, RND consists of two neural networks, namely a \textit{target} network and a \textit{predictor} network. The target network is fixed and randomly initialized, while the predictor network is trained on the observation data collected by the agent.

\begin{figure}[h]
	\centering
	\includegraphics[width=0.75\linewidth]{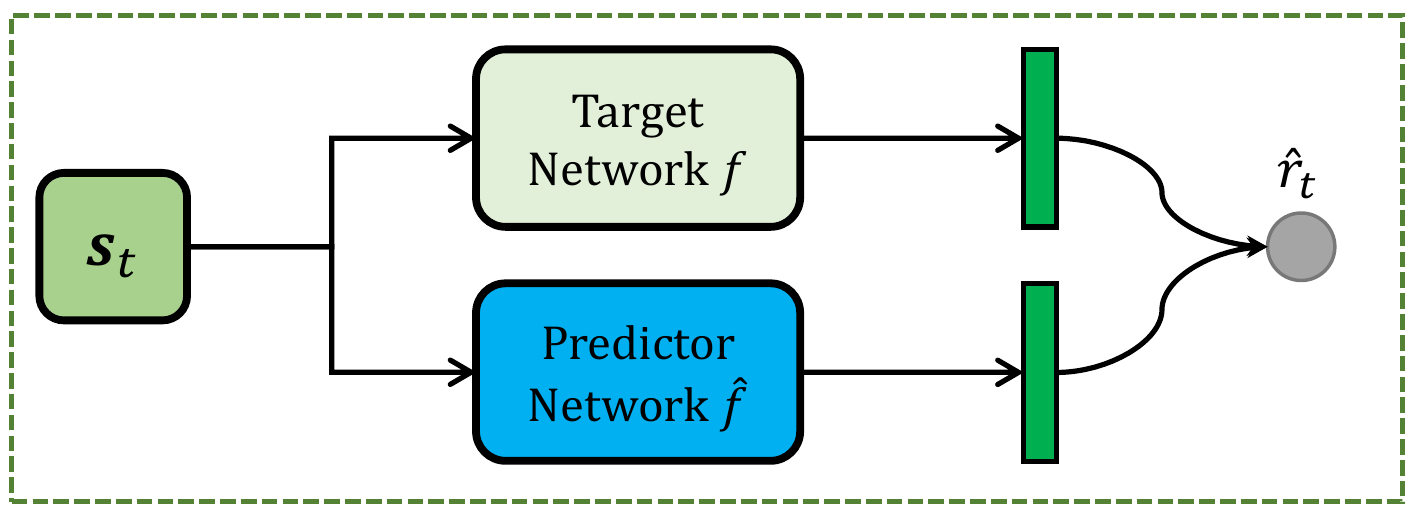}
	\caption{The overview of RND.}
	\label{fig:rnd}
\end{figure}

Let $f:\mathcal{S}\rightarrow\mathbb{R}^{m}$ denote the target network, and $\hat{f}:\mathcal{S}\rightarrow\mathbb{R}^{m}$ denote the predictor network with parameters $\bm{\theta}$, where $m$ is the embedding size. Then the predictor network is trained by minimizing the following loss function:
\begin{equation}
	L(\bm{\theta})=\Vert \hat{f}(\bm{s},\bm{\theta})-f(\bm{s}) \Vert_{2}^{2}.
\end{equation}
Thus the predictor network will be distilled into the target network, and novel states will produce higher prediction error. Finally, we summarize the RND algorithm in Algorithm \ref{algo:rnd}.
\begin{algorithm}
	\caption{Random Network Distillation}
	\label{algo:rnd}
	\begin{algorithmic}
		\STATE \textbf{Input}: A target network $f$, a predictor network $\hat{f}$, a policy $\pi$, a number $N_{\rm opt}$ of optimization steps, a replay buffer $\mathcal{B}$.
		\STATE \textbf{Loop} for each episode:
		\STATE $\quad$ Sample state $\bm{s}_{0}$ from $\rho(\bm{s}_{0})$;
		\STATE $\quad$ Set $t=0$;
		\STATE $\quad$ \textbf{Loop} for each step of episode:
		\STATE $\quad\quad$ Sample $\bm{a}_{t}\sim\pi(\cdot|\bm{s}_{t})$;
		\STATE $\quad\quad$ Sample $\bm{s}_{t+1}\sim\mathcal{T}(\bm{s}_{t+1}|\bm{s},\bm{a})$;
		\STATE $\quad\quad$ Calculate the intrinsic reward $\hat{r}_{t}=\Vert\hat{f}(\bm{s}_{t+1})-f(\bm{s}_{t+1})\Vert$;
		\STATE $\quad\quad$ Save the transition $(\bm{s}_{t},\bm{a}_{t},\bm{s}_{t+1},r_{t},\hat{r}_{t})$ into $\mathcal{B}$.
		\STATE $\quad$ Until $\bm{s}_{t+1}$ is terminal;
		\STATE $\quad$ \textbf{for} $i=1,2,\dots,N_{\rm opt}$ \textbf{do}
		\STATE $\quad\quad$ Sample transitions from $\mathcal{B}$;
		\STATE $\quad\quad$ Optimize the policy using PPO method;
		\STATE $\quad\quad$ Optimize the predictor network;
		\STATE $\quad$ \textbf{end for}
		\STATE \textbf{Output}: The trained policy $\pi$.
	\end{algorithmic}
\end{algorithm}

\begin{figure}[h]
	\centering
	\includegraphics[width=\linewidth]{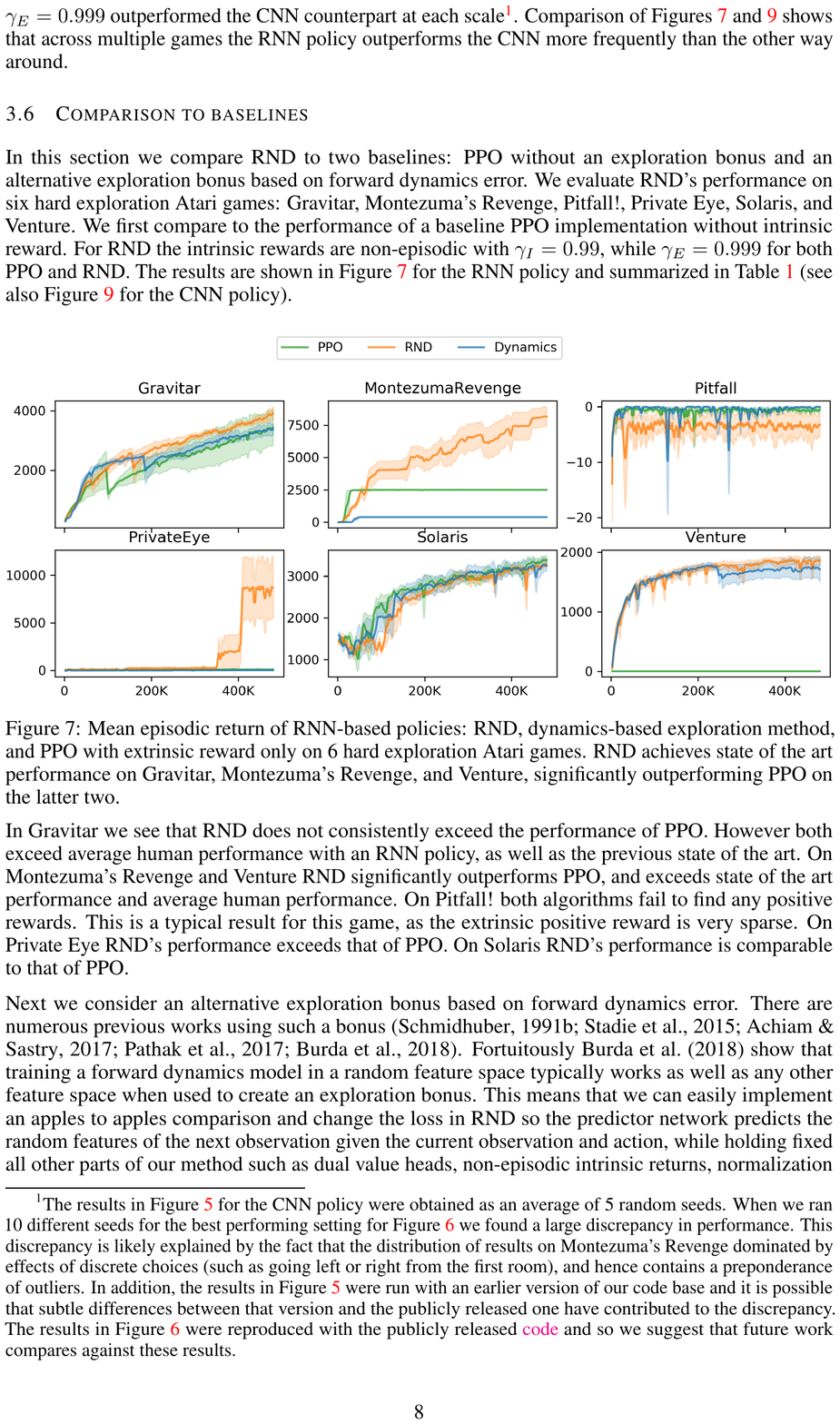}
	\caption{Mean episodic return of RND, dynamics-based exploration method, and PPO.}
	\label{fig:rnd return}
\end{figure}

We next test RND method on six hard-exploration Atari games, \ie, the environments have very sparse and even deceptive rewards. The performance comparison is illustrated in Figure \ref{fig:rnd return}. RND achieves the best performance on Gravitar, Montezuma's Revenge, and Venture, significantly outperforming PPO on the latter two. 

\subsection{Prediction Error-Based Intrinsic Rewards}
\subsubsection{Exploration with Deep Predictive Models}
The second class of exploration methods often learn the dynamic model of the environment, and utilize the prediction error as the intrinsic reward. For instance, \cite{stadie2015incentivizing} proposed a scalable and efficient method for assigning exploration bonuses in large RL problems with complex observations. Let $\sigma(\bm{s})$ denote the encoding of the state $\bm{s}$ and $f:\sigma(\mathcal{S})\times\mathcal{A}\rightarrow\sigma(\mathcal{S})$ be a predictor. $f$ takes an encoded state $\bm{s}$ at time $t$ and the action to predict the encoded successor state $\bm{s}$. For each transition $(\bm{s}_{t},\bm{a}_{t},\bm{s}_{t+1})$, we have the following prediction error:
\begin{equation}
	e(\bm{s}_{t},\bm{a}_{t})=\Vert \sigma(\bm{s}_{t+1})-f(\sigma(\bm{s}_{t}),\bm{a}_{t}) \Vert.
\end{equation}
Then the intrinsic reward is defined as 
\begin{equation}
	\hat{r}(\bm{s}_{t},\bm{a}_{t})=\frac{e(\bm{s}_{t},\bm{a}_{t})}{t*c},
\end{equation}
where $c>0$ is a decaying constant. If we can accurately model the dynamics of a particular state-action pair, it means we have fully comprehended the state and its novelty is lower. On the contrary, if large prediction error is produced, it means more knowledge about that particular area is needed and hence a higher exploration bonus will be assigned. 

\subsubsection{Intrinsic Curiosity Module}
Performing predictions in the raw sensory space like pixels may produce poor behavior and assign same exploration bonuses to all the states. In \cite{stadie2015incentivizing}, the encoding function of state space is learned via an auto-encoder. In contrast, \cite{pathak2017curiosity} proposed an intrinsic curiosity module (ICM) that learns the state feature space using self-supervision method. ICM suggests that an appropriate state feature space needs to exclude the factors that cannot affect the behavior of the agent. Thus it is unnecessary for the agent to learn them. To that end, ICM designs an inverse dynamic model to predict the action of the agent based on its current and next states. The learned feature space only focuses on the changes related to the actions of the agent and ignores the rest. Then this feature space is used to train a forward dynamics model that predicts the feature representation of the next state. The overview of ICM is illustrated in Figure \ref{fig:icm}.

\begin{figure}[h]
	\centering
	\includegraphics[width=0.8\linewidth]{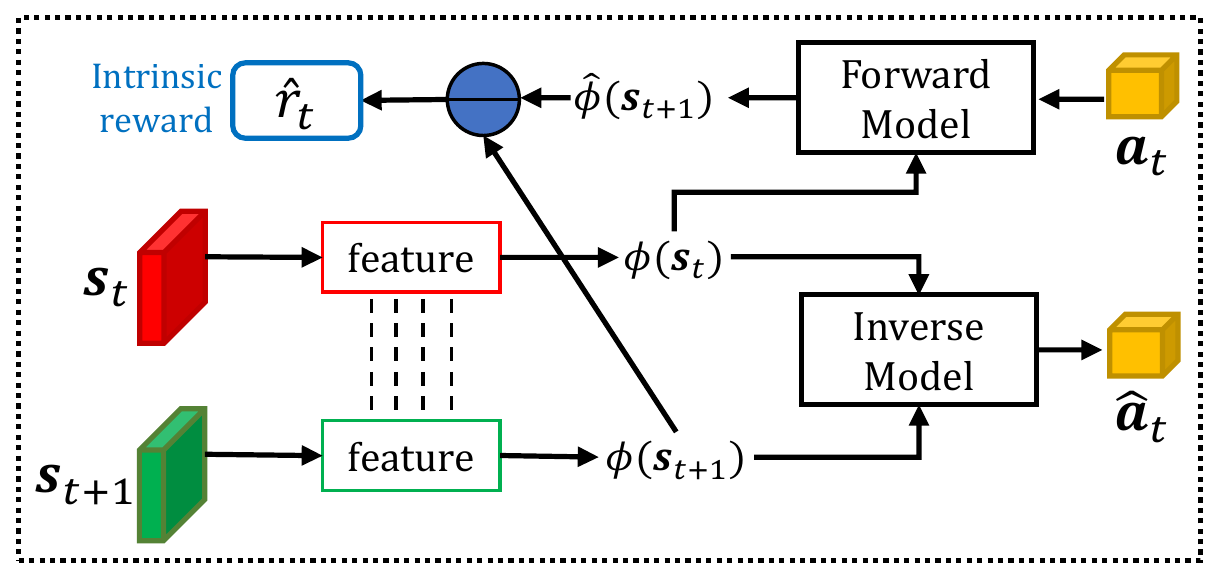}
	\caption{The overview of ICM, where $\ominus$ denotes the Euclidean distance.}
	\label{fig:icm}
\end{figure}

Denote by $I$ and $F$ the inverse model and forward model, and let $\phi$ be the embedding network. The inverse model is optimized by minimizing the following loss function:
\begin{equation}
	L_{I}=D_{I}(\hat{\bm{a}}_{t},\bm{a}_{t}),
\end{equation}
where $\hat{\bm{a}}_{t}=I\big(\phi(\bm{s}_{t}),\phi(\bm{s}_{t+1})\big)$, and $D_{I}$ is some measures that evaluate the discrepancy between the predicted and actual actions. For discrete action space, $D_{I}$ is a softmax distribution across all possible actions. For the forward model, it is trained using a mean squared error (MSE) function:
\begin{equation}
	L_{F} = \frac{1}{2}\Vert \hat{\phi}(\bm{s}_{t+1})-\phi(\bm{s}_{t+1}) \Vert_{2}^{2},
\end{equation}
where $\hat{\phi}(\bm{s}_{t+1})=F\big(\phi(\bm{s}_{t}),\bm{a}_{t}\big)$. Finally, the intrinsic reward is defined as
\begin{equation}
	\hat{r}_{t}=\frac{\eta}{2}\Vert \hat{\phi}(\bm{s}_{t+1})-\phi(\bm{s}_{t+1}) \Vert_{2}^{2},
\end{equation}
where $\eta>0$ is a scaling factor. Since there is no incentive for this feature space to encode
any environmental features that are not affected by the actions of agent, it will receive no rewards for reaching inherently unpredictable states. Thus the agent will be robust to the variation in the environment, such as changes in illumination or presence of distractor objects.

\subsubsection{Generative Intrinsic Reward Module}
ICM trains the reward module using supervised learning, which can only provide deterministic rewards and suffers from overfitting. To address this problem, \cite{yu2020intrinsic} proposed a generative intrinsic reward module (GIRM) based on variational auto-encoder (VAE), which empowers the generalization of intrinsic rewards on unseen state-action pairs \cite{kingma2013auto}. As illustrated in Figure \ref{fig:girm}, GIRM consists of two neural network-based components, namely a recognition network ({\em i.e.} the probabilistic \textit{encoder}) and a generative network ({\em i.e.} the probabilistic \textit{decoder}).

\begin{figure}[h]
	\centering
	\includegraphics[width=0.8\linewidth]{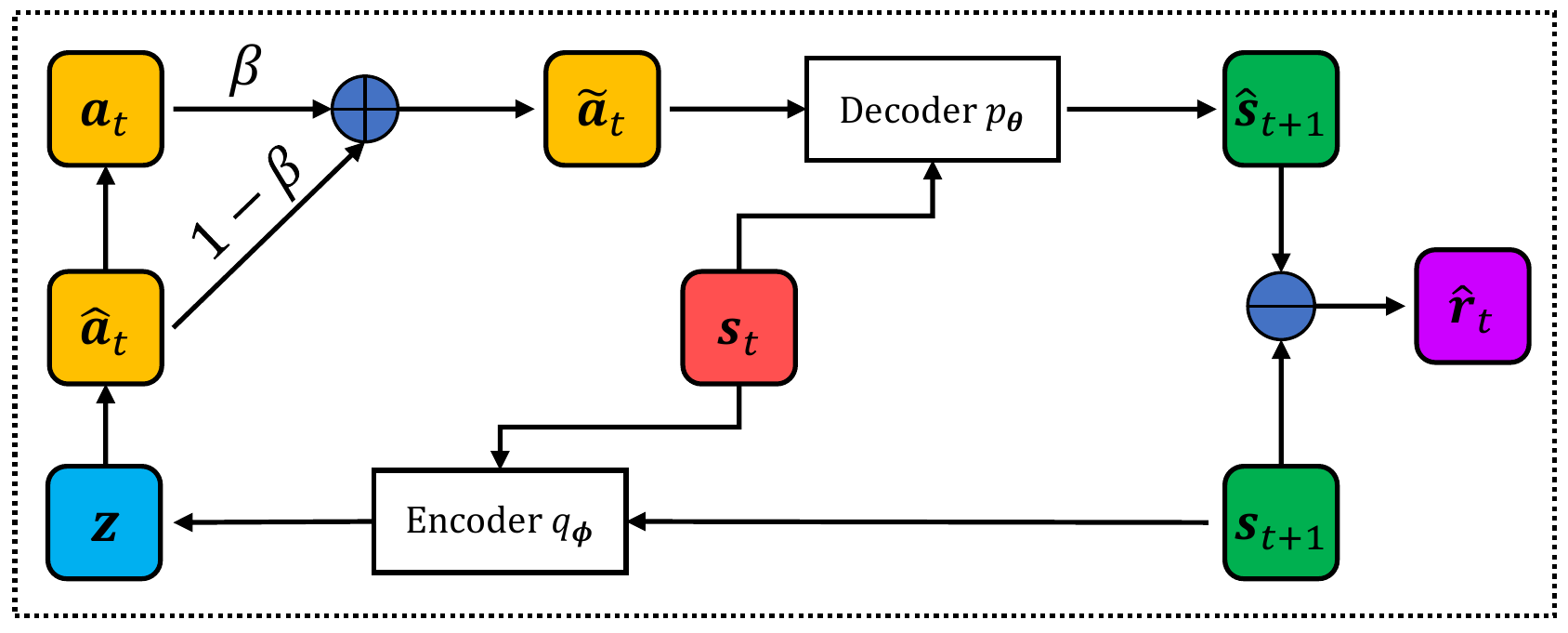}
	\caption{The overview of GIRM, where $\ominus$ denotes the Euclidean distance and $\oplus$ denotes the weighted summation operation.}
	\label{fig:girm}
\end{figure}

The encoder $q_{\bm\phi}(\bm{z}|\bm{s}_{t},\bm{s}_{t+1})$ takes the states $\bm{s}_{t}$ and $\bm{s}_{t+1}$ to perform backward action encoding, and we have
\begin{equation}
	\hat{\bm a}_{t}=\mathrm{Softmax}(\bm{z}).
\end{equation}
We then derive an intermediate action data $\tilde{\bm a}_{t}$ by
\begin{equation}
	\tilde{\bm a}_{t}=\beta\cdot\bm{a}_{t}+(1-\beta)\cdot\hat{\bm a}_{t},
\end{equation}
where $\beta\in(0,1]$ is a weighting coefficient. After that, the decoder $p_{\bm\theta}(\bm{s}_{t+1}|\bm{z},\bm{s}_{t})$ performs forward state transition by taking the action $\tilde{\bm a}_{t}$ and reconstruct the successor state. The GIRM is trained via optimizing the following evidence lower bound:
\begin{equation}
	\begin{aligned}
		L(\bm{s}_{t},\bm{s}_{t+1};\bm{\theta},\bm{\phi})=\mathbb{E}_{q_{\bm\phi}(\bm{z}|\bm{s}_{t},\bm{s}_{t+1})}\left[\log p_{\bm\theta}(\bm{s}_{t+1}|\bm{z},\bm{s}_{t})\right]\\
		+D_{\rm KL}(q_{\bm\phi}(\bm{z}|\bm{s}_{t},\bm{s}_{t+1})\Vert p_{\bm\theta}(\bm{z}|\bm{s}_{t})),
	\end{aligned}
\end{equation}
where $D_{\rm KL}$ is the Kullback–Leibler divergence. Finally, the intrinsic reward is defined as
\begin{equation}
	\hat{r}_{t}=\lambda\Vert\hat{\bm s}_{t+1}-\bm{s}_{t+1}\Vert_{2}^{2},
\end{equation}
where $\lambda$ is a positive scaling weight. In particular, GIRM can generate a family of reward functions by adjusting the mean and variance of the Gaussian distribution of the encoder.

\subsection{Vanishing Intrinsic Rewards}
So far we have introduced two representative classes of intrinsic rewards. However, both of them suffer from the vanishing intrinsic rewards, {\em i.e.}, the intrinsic rewards decrease with visits. The agent will have no additional motivation to explore the environment further once the intrinsic rewards decay to zero. To maintain exploration across episodes, \cite{badia2020never} proposed a never-give-up (NGU) framework that learns mixed intrinsic rewards composed of episodic and life-long state novelty. As illustrated in Figure \ref{fig:ngu}, the episodic novelty module has an episodic memory and an embedding function $f$. At the beginning of each episode, the episodic memory is erased completely. At each time step, it computes an episodic intrinsic reward $r^{\rm episodic}_{t}$ and inserts the embedded state into the memory $\mathcal{E}$. 

\begin{figure}[h]
	\centering
	\includegraphics[width=0.8\linewidth]{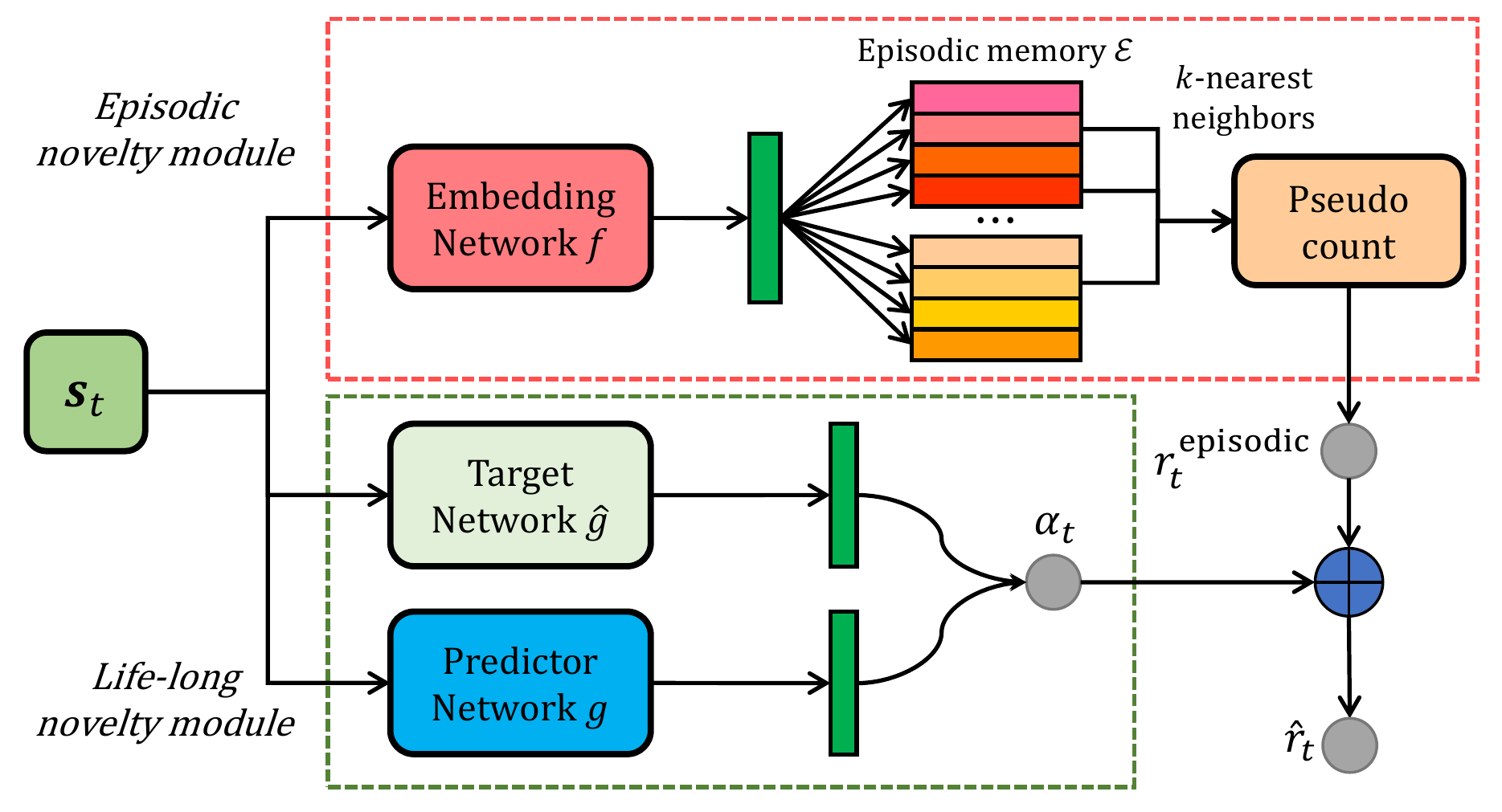}
	\caption{The overview of NGU, where $\oplus$ denotes the weighted summation operation.}
	\label{fig:ngu}
\end{figure}

Consider the episodic memory until time step $t$:
\begin{equation}
	\mathcal{E}=\{\bm{e}_{0},\bm{e}_{1},\dots,\bm{e}_{t}\},
\end{equation}
where $\bm{e}_{t}=f(\bm{s}_{t})$. The episodic intrinsic reward is defined as
\begin{equation}
	r^{\rm episodic}_{t}=\frac{1}{\sqrt{N(\bm{e}_{t})}}\approx\frac{1}{\sqrt{\sum_{\bm{e}_{i}\in N_{k}}K(\bm{e}_{t},\bm{e}_{i})}+c},
\end{equation}
where $N(\bm{e}_{t})$ is the counts of visits of $\bm{e}_{t}$, $N_{k}=\{\bm{e}_{i}\}_{i=1}^{k}$ is the $k$-nearest neighbors of $\bm{e}_{t}$, $K$ is a kernel function, and $c$ guarantees a minimum amount of pseudo-count. The episodic intrinsic rewards encourage the agent to visit as many distinct states as possible within one episode. Since it ignores the novelty interactions across episodes, a state will also be treated as a novel state in the current episode even it has been visited many times.

The life-long novelty module captures the long-term state novelty via RND to control the amount of exploration across episodes. It is realized by multiplicatively modulating the exploration
bonus $r^{\rm episodic}_{t}$ with a life-long curiosity factor $\alpha_{t}$. This factor will vanish over time and reduce the method to using non-shaped rewards. Finally, we define the mixed intrinsic reward as follows:
\begin{equation}
	\hat{r}_{t}=r^{\rm episodic}_{t}\cdot\min\{\max\{\alpha_{t},1\},\beta\},
\end{equation}
where $\beta$ is a maximum reward threshold.

\begin{figure}[h]
	\centering
	\includegraphics[width=1.0\linewidth]{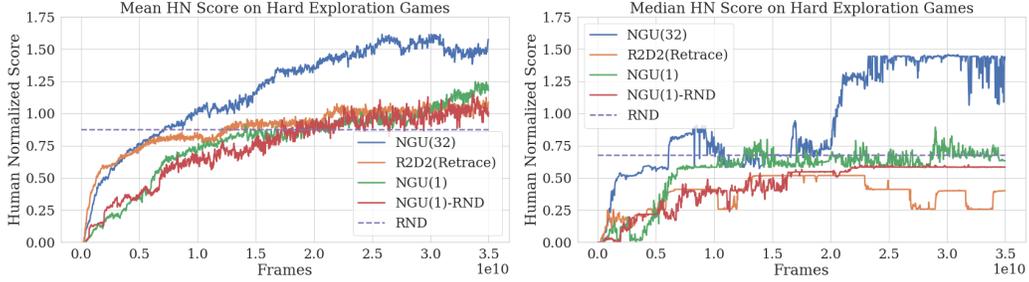}
	\caption{Performance comparison on the $6$ hard-exploration games..}
	\label{fig:ngu return}
\end{figure}

Figure \ref{fig:ngu return} illustrates the performance comparison on six hard-exploration games. NGU achieves on similar or higher average return than the baselines like RND on all hard-exploration tasks. In particular, NGU is the first method that obtains a positive score on \textit{Pitfall!} without using privileged information.

\subsection{Noisy-TV Problem}
Noisy-TV problem is also a critical challenge of the intrinsic reward methods, especially for the prediction error-based approaches \cite{burda2018exploration,savinov2018episodic}. Consider a RL agent walking in a complex maze, and it is rewarded by seeking novel experience. As shown in Figure \ref{fig:tv}, a television with unpredictable random noise will keep producing high prediction error and attract the attention of the agent. Since the agent will consistently receive exploration bonuses from the noisy television, it will make no further progress and just standstill.

\begin{figure}[h]
	\centering
	\includegraphics[width=0.8\linewidth]{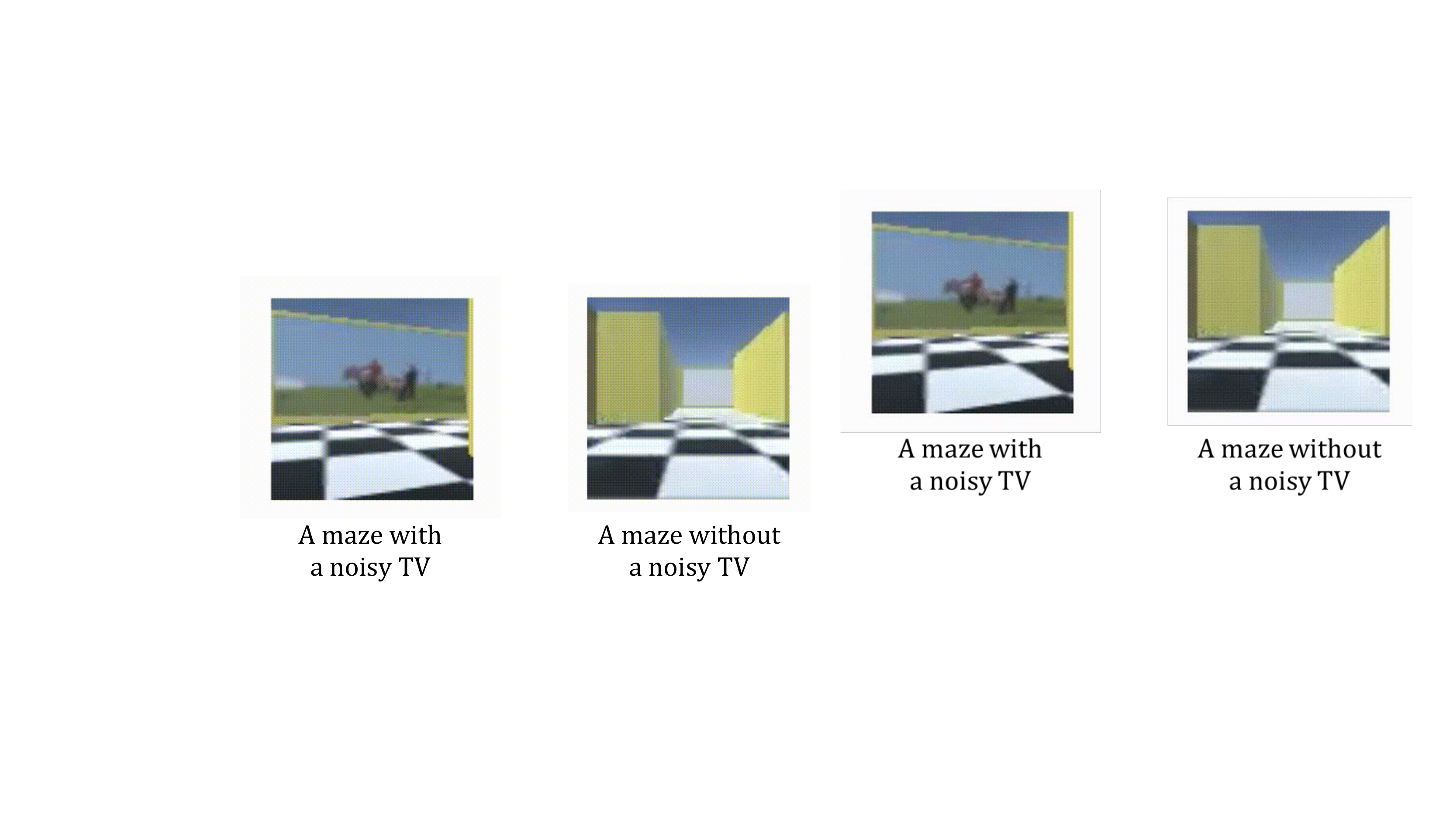}
	\caption{The noisy television will keep playing different pictures.}
	\label{fig:tv}
\end{figure}

To address the problem, \cite{savinov2018episodic} proposed an episodic curiosity module that utilizes reachability between states as novelty bonus. In contrast, \cite{raileanu2020ride} propsoed a more straightforward framework entitled rewarding-impact-driven-exploration (RIDE). In particular, RIDE considers the procedurally-generated environments, in which the environment is constructed differently in every episode. Therefore, learning in such environments requires the agent to perform exploration more actively.

\begin{figure}[h]
	\centering
	\includegraphics[width=0.8\linewidth]{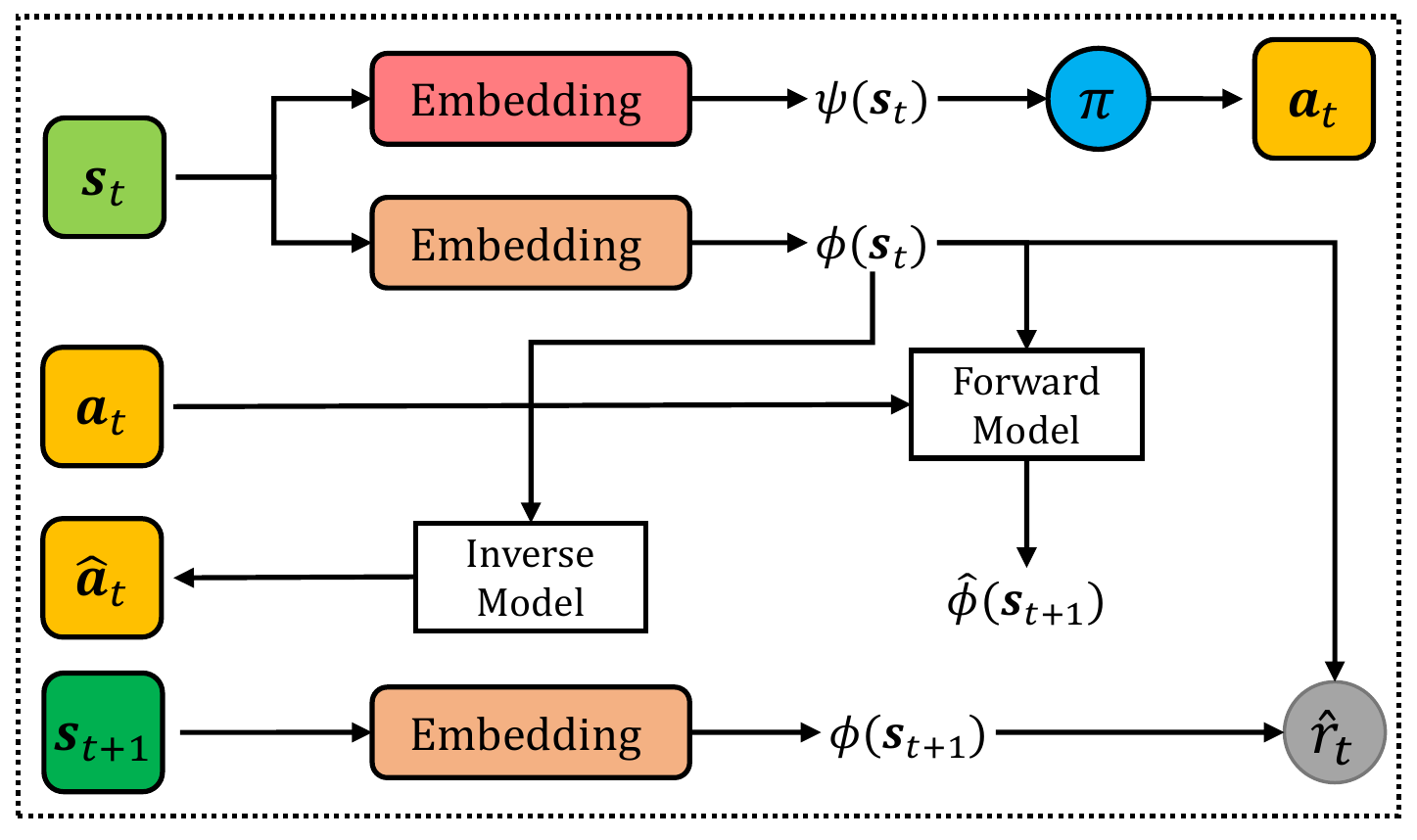}
	\caption{The overview of RIDE.}
	\label{fig:ride}
\end{figure}

As shown in Figure \ref{fig:ride}, RIDE inherits the inverse-forward pattern of ICM, in which two dynamic models are leveraged to reconstruct the transition process. RIDE defines the intrinsic reward as the difference between the consecutive embedded states:
\begin{equation}
	\hat{r}_{t}=\frac{\Vert\phi(\bm{s}_{t+1})-\phi(\bm{s}_{t})\Vert_{2}}{\sqrt{N_{\rm ep}(\bm{s}_{t+1})}},
\end{equation}
where $\phi(\cdot)$ is the embedding network and $N_{\rm ep}$ is the episodic visitation counts. $N_{\rm ep}$ is used to discount the intrinsic reward, which guarantees that the agent will not linger between a sequence of states with a large difference in their embeddings. Through this intrinsic reward, the agent is encouraged to take actions that results in large state change to improve the exploration. In particular, RIDE can also overcome the problem of vanishing intrinsic rewards.

\section{\Ry State Entropy Maximization}
\label{chap:four}

In Chapter \ref{chap:three}, we discussed various intrinsically-motivated methods that utilize the reward shaping to improve the exploration of state space. For instance, NGU leverages the mixed state novelty to encourage the agent to visit as many distinct states as possible. NGU overcomes the problem of the vanishing intrinsic rewards and provides sustainable exploration incentives. However, methods like NGU and RIDE pay excessive attention to specific states while failing to reflect the global exploration completeness. Furthermore, they suffer from high computational complexity and performance loss incurred by abundant auxiliary models. In this chapter, we will first propose to reformulate the exploration problem before establishing a more efficient exploration method.

\subsection{Coupon Collector's Problem}
In probability theory, the coupon collector's problem (CCP) is described as the contest of collecting all coupons and winning \cite{flajolet1992birthday}. Given $n$ coupons, how many coupons do you need to draw with replacement before having drawn each coupon at least once? For instance, it takes about $172$ trials on average to collect about all coupons when $n=40$. To guarantee the completeness of exploration, the agent is expected to visit all possible states during training. Such an objective can be also considered as a CCP conditioned upon a nonuniform probability distribution, in which the agent is the collector and the states are the coupons. Denote by $d^{\pi}(\bm{s})$ the state distribution induced by the policy $\pi$. Assuming that the agent takes $\tilde{T}$ environment steps to finish the collection, we can compute the expectation of $\tilde{T}$ as
\begin{equation}\label{eq:Epi}
	\mathbb{E}_{\pi}(\tilde{T})=\int_{0}^{\infty}\bigg( 1-\prod_{i=1}^{|\mathcal{S}|}(1-e^{-d^{\pi}(\bm{s}_{i})t}) \bigg)\,dt,
\end{equation}
where $\left|\cdot\right|$ stands for the cardinality of the enclosed set $\mathcal{S}$.

For simplicity of notation, we omit the superscript in $d^{\pi}(\bm{s})$ in the sequel. Efficient exploration aims to find a policy that optimizes $\min_{\pi\in\Pi}\:\mathbb{E}_{\pi}(\tilde{T})$. However, it is non-trivial to evaluate Eq.~\eqref{eq:Epi} due to the improper integral, not to mention solving the optimization problem. To address the problem, it is common to leverage the Shannon entropy to make a tractable objective function, which is defined as
\begin{equation}\label{eq:Hd}
	H(d)=-\mathbb{E}_{\bm{s}\sim d(\bm{s})}\big[\log d({\bm s})\big].
\end{equation}

\subsection{Random Encoders for Efficient Exploration}
To estimate the state entropy, \cite{seo2021state} introduced a $k$-nearest neighbor entropy estimator \cite{singh2003nearest} and proposed a random-encoders-for-efficient-exploration (RE3) method. Given a trajectory $\tau=(\bm{s}_{0},\bm{a}_{0},\dots,\bm{a}_{T-1},\bm{s}_{T})$, RE3 first uses a randomly initialized DNN to encode the visited states. Denote by $\{\bm{x}_{i}\}_{i=0}^{T-1}$ the encoding vectors of observations, RE3 estimates the entropy of state distribution $d(\bm{s})$ using a $k$-nearest neighbor entropy estimator \cite{singh2003nearest}:
\begin{equation}\label{eq:knn ee}
	\begin{aligned}
		\hat{H}_{T}^{k}(d) &= \frac{1}{T}\sum_{i=0}^{T-1}\log \frac{T\cdot \Vert \bm{x}_{i} - \tilde{\bm{x}}_{i} \Vert_{2}^{m} \cdot \pi}{k\cdot\Gamma(\frac{m}{2}+1)}+\log k-\Psi(k) \\
		&\propto \frac{1}{T}\sum_{i=0}^{T-1}\log\Vert \bm{x}_{i} - \tilde{\bm{x}}_{i} \Vert_{2},
	\end{aligned}
\end{equation}
where $\tilde{\bm{x}}_{i}$ is the $k$-nearest neighbor of $\bm{x}_{i}$ within the set $\{\bm{x}_{i}\}_{i=0}^{T-1}$, $m$ is the dimension of the encoding vectors, and $\Gamma(\cdot)$ is the Gamma function, and $\Psi(\cdot)$ is the digamma function. Note that $\pi$ in Eq.~\eqref{eq:knn ee} denotes the ratio between the circumference of a circle to its diameter. Equipped with Eq.~\eqref{eq:knn ee}, the total reward for each transition $(\bm{s}_{t},\bm{a}_{t},r_{t},\bm{s}_{t+1})$ is computed as:
\begin{equation}
	r^{\rm total}=r(\bm{s}_{t},\bm{a}_{t})+\lambda_{t} \cdot \log(\Vert \bm{x}_{t} - \tilde{\bm{x}}_{t} \Vert_{2} + 1),
\end{equation}
where $\lambda_{t}=\lambda_{0}(1-\kappa)^{t},\lambda_{t}\geq 0$ is a weight coefficient that decays over time, $\kappa$ is a decay rate.

\begin{figure}
	\centering
	\includegraphics[width=\linewidth]{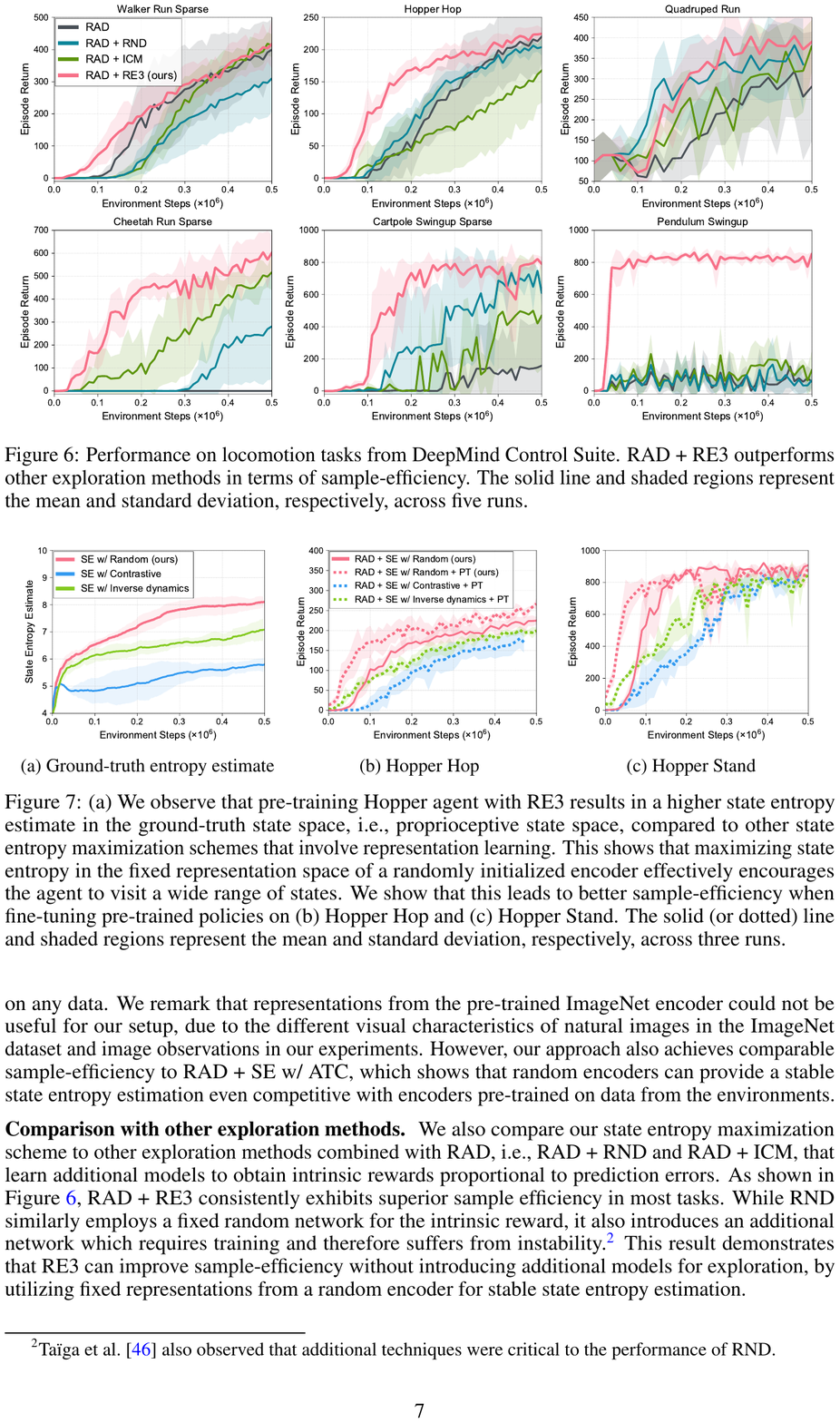}
	\caption{Performance on locomotion tasks from DeepMind control suite.}
	\label{fig:re3 return}
\end{figure}

In sharp contrast to the methods of Chapter \ref{chap:three}, RE3 does not need abundant auxiliary models or storages to record and analyze the learning procedure. Thus it is very efficient and can be easily employed in arbitrary tasks. We next tested the RE3 on locomotion tasks from DeepMind control suite. As illustrated in Figure \ref{fig:re3 return}, RE3 significantly outperforms other exploration methods in terms of sample-efficiency. Despite the advantages of RE3, the Shannon entropy-based objective may lead to a policy that visits some states with a vanishing probability. In the following
section, we will employ a representative example to demonstrate the practical drawbacks of Eq.~\eqref{eq:Hd} and introduce the \Ry entropy to address the problem.

\subsection{\Ry State Entropy Maximization}
\subsubsection{\Ry State Entropy}
We first formally define the \Ry entropy as follows:
\begin{definition}[\Ry Entropy]\label{def:re}
	Let $X\in\mathbb{R}^{m}$ be a random vector that has a density function $f(\bm{x})$ with respect to Lebesgue measure on $\mathbb{R}^{m}$, and let $\mathcal{X}=\{\bm{x}\in\mathbb{R}^{m}:f(\bm{x})>0\}$ be the support of the distribution. The \Ry entropy of order $\alpha\in(0,1)\cup(1,+\infty)$ is defined as \cite{zhang2021exploration}:
	\begin{equation}\label{eq:hf}
		H_{\alpha}(f)=\frac{1}{1-\alpha}\log \int_{\mathcal{X}}f^{\alpha}(\bm{x})d \bm{x}.
	\end{equation}
\end{definition}

Using Definition~\ref{def:re}, we propose the following \Ry state entropy (RISE):
\begin{equation}
	H_{\alpha}(d)=\frac{1}{1-\alpha}\log \int_{\mathcal{S}}d^{\alpha}(\bm{s})d\bm{s}.
\end{equation}

\begin{figure*}[h]
	\centering
	\includegraphics[width=\linewidth]{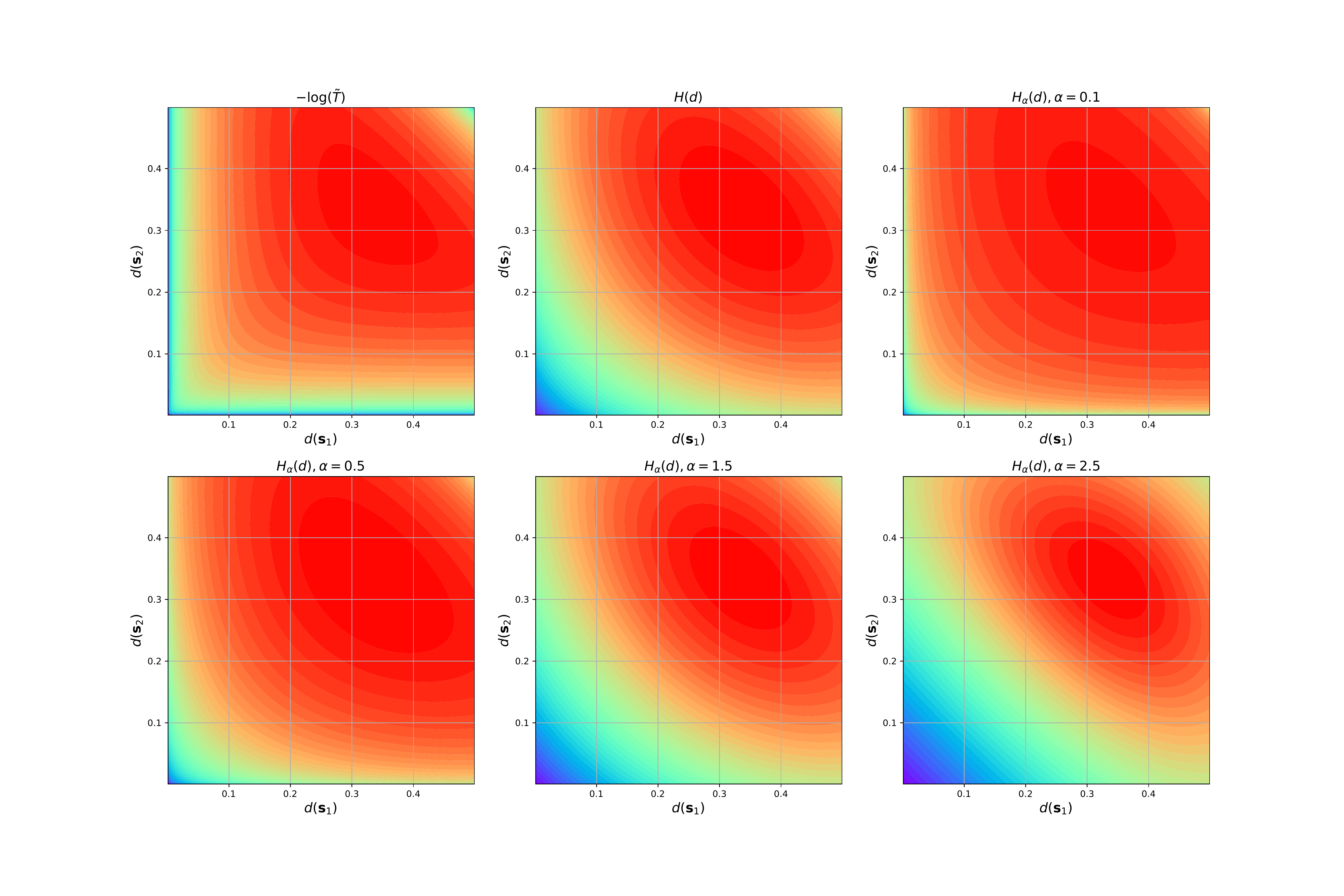}
	\caption{The contours of different objective functions when $|\mathcal{S}|=3$.}
	\label{fig:contour}
\end{figure*}

Fig.~\ref{fig:contour} use a toy example to visualize the contours of different objective functions when an agent learns from an environment characterized by only three states. As shown in Fig.~\ref{fig:contour}, $-\log(\tilde{T})$ decreases rapidly when any state probability approaches zero, which prevents the agent from visiting a state with a vanishing probability while encouraging the agent to explore the infrequently-seen states. In contrast, the Shannon entropy remains relatively large as the state probability approaches zero. Interestingly, Fig.~\ref{fig:contour} shows that this problem can be alleviated by the \Ry entropy as it better matches $-\log(\tilde{T})$. The Shannon entropy is far less aggressive in penalizing small probabilities, while the \Ry entropy provides more flexible exploration intensity. 

\subsubsection{Theoretical Analysis}
To maximize $H_{\alpha}(d)$, we consider using a maximum entropy policy computation (MEPC) algorithm proposed by \cite{hazan2019provably}, which uses the following two oracles: 
\begin{definition}[Approximating planning oracle]
	Given a reward function $r:\mathcal{S}\rightarrow\mathbb{R}$ and a gap $\epsilon$, the planning oracle returns a policy by $\pi=O_{\rm AP}(r,\epsilon_1)$, such that
	\begin{equation}
		V^{\pi}\geq\max_{\pi\in\Pi}V^{\pi}-\epsilon,
	\end{equation}
	where $V^{\pi}$ is the state-value function.
\end{definition}
\begin{definition}[State distribution estimation oracle]
	Given a gap $\epsilon$ and a policy $\pi$, this oracle estimates the state distribution by $\hat{d}=O_{\rm DE}(\pi,\epsilon)$, such that
	\begin{equation}
		\Vert d-\hat{d}\Vert_{\infty}\leq\epsilon.
	\end{equation}
\end{definition}
Given a set of stationary policies $\hat{\Pi}=\{\pi_0,\pi_1,\dots\}$, we define a mixed policy as $\pi_{\rm mix}=(\bm{\omega},\hat{\Pi})$, where $\omega$ contains the weighting coefficients. Then the induced state distribution is
\begin{equation}
	d^{\pi_{\rm mix}}=\sum_{i}\bm{\omega}_{i}d^{\pi_i}(\bm{s}).
\end{equation}
Finally, the workflow of MEPC is summarized in Algorithm \ref{algo:mpec}.

\begin{algorithm}[h]
	\caption{MEPC}
	\label{algo:mpec}
	\begin{algorithmic}[1]
		\STATE Set the number of iterations $T$, step size $\eta$, planning oracle error tolerance $\epsilon_1>0$, and state distribution oracle error tolerance $\epsilon_2>0$;
		\STATE Initialize $\hat{\Pi}_0=\{\pi_0\}$, where $\pi_0$ is an arbitrary policy;
		\STATE Initialize $\omega_0=1$;
		\FOR {$t=0,\dots,T-1$}
		\STATE Invoke the state distribution oracle on $\pi_{{\rm mix}, t}=(\omega,\hat{\Pi}_t)$:
		\begin{equation}\nonumber
			\hat{d}^{\pi_{{\rm mix}, t}}=O_{\rm AP}(r,\epsilon_1);
		\end{equation}
		
		\STATE Define the reward function $r_{t}$ as
		\begin{equation}\nonumber
			r_{t}(\bm{s})=\nabla H_{\alpha}(\hat{d}^{\pi_{{\rm mix}, t}});
		\end{equation}
		
		\STATE Approximate the optimal policy on $r_t$:
		\begin{equation}\nonumber
			\pi_{t+1}=O_{\rm DE}(\pi,\epsilon_2);
		\end{equation}
		
		\STATE Update $\pi_{{\rm mix}, t}=(\omega_{t+1},\hat{\Pi})$
		\begin{equation}\nonumber
			\begin{aligned}
				\hat{\Pi}_{t+1}&=(\pi_0,\dots,\pi_t,\pi_{t+1})\\
				\omega_{t+1}&=\left((1-\eta)\omega_{t},\eta\right);
			\end{aligned}
		\end{equation}
		
		\ENDFOR
		\STATE Return $\pi_{{\rm mix}, T}=(\omega_T,\hat{\Pi}_T)$.
	\end{algorithmic}
\end{algorithm}

Consider the discrete case of \Ry state entropy and set $\alpha\in(0,1)$, we have 
\begin{equation}
	H_{\alpha}(d)=\frac{1}{1-\alpha}\log\sum_{\bm{s}\in\mathcal{S}}d^{\alpha}(\bm {s}).
\end{equation}
Since the logarithmic functions are monotonically increasing functions, the maximization of $\tilde{H}_{\alpha}(d)$ can be achieved by maximizing the following function:
\begin{equation}
	\tilde{\mathcal{H}}_{\alpha}(d)=\frac{1}{1-\alpha}\sum_{\bm{s}\in\mathcal{S}}d^{\alpha}(\bm {s}).
\end{equation}
As $\tilde{\mathcal{H}}_{\alpha}(d)$ is not smooth, we propose to replace $\tilde{\mathcal{H}}_{\alpha}(d)$ with a smoothed $\tilde{H}_{\alpha,\sigma}(d)$ defined as
\begin{equation}
	\tilde{H}_{\alpha,\sigma}(d)=\frac{1}{1-\alpha}\sum_{\bm{s}\in\mathcal{S}}(d(\bm {s})+\sigma)^{\alpha},
\end{equation}
where $\sigma>0$.
\begin{lemma}\label{lemma:smooth}
	$\tilde{H}_{\alpha,\sigma}(d)$ is $\beta$-smooth, such that
	\begin{equation}
		\Vert\nabla\tilde{H}_{\alpha,\sigma}(d)-\nabla\tilde{H}_{\alpha,\sigma}(d')\Vert_{\infty}\leq \beta\Vert d-d'\Vert_{\infty},
	\end{equation}
	where $\beta=\alpha\sigma^{\alpha-2}$.
\end{lemma}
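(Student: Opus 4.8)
The plan is to exploit the fact that $\tilde{H}_{\alpha,\sigma}$ is a \emph{separable} sum, so that both the function and its gradient decompose coordinate-wise over states. Writing $g(x)=\frac{1}{1-\alpha}(x+\sigma)^{\alpha}$, we have $\tilde{H}_{\alpha,\sigma}(d)=\sum_{\bm{s}\in\mathcal{S}}g(d(\bm{s}))$, and hence the $\bm{s}$-component of the gradient is simply
\begin{equation}
	\big[\nabla\tilde{H}_{\alpha,\sigma}(d)\big]_{\bm{s}}=g'(d(\bm{s}))=\frac{\alpha}{1-\alpha}\,(d(\bm{s})+\sigma)^{\alpha-1}.
\end{equation}
Because this component depends only on $d(\bm{s})$ and not on the other entries of $d$, the $\ell_\infty$-distance between two gradients reduces to a maximum over states of a one-dimensional quantity:
\begin{equation}
	\big\Vert\nabla\tilde{H}_{\alpha,\sigma}(d)-\nabla\tilde{H}_{\alpha,\sigma}(d')\big\Vert_{\infty}=\frac{\alpha}{1-\alpha}\max_{\bm{s}\in\mathcal{S}}\big|(d(\bm{s})+\sigma)^{\alpha-1}-(d'(\bm{s})+\sigma)^{\alpha-1}\big|.
\end{equation}

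The next step is to show that the scalar map $h(x)=(x+\sigma)^{\alpha-1}$ is Lipschitz on $[0,\infty)$ with constant $(1-\alpha)\sigma^{\alpha-2}$. I would do this via the mean value theorem: for $x,y\ge 0$ there is $\xi$ between them with $h(x)-h(y)=(\alpha-1)(\xi+\sigma)^{\alpha-2}(x-y)$. The crucial observation, and the only place one must be a little careful, is that $\alpha\in(0,1)$ forces $\alpha-2<0$, so $t\mapsto (t+\sigma)^{\alpha-2}$ is decreasing and, since $\xi\ge 0$, it is bounded above by $\sigma^{\alpha-2}$; together with $|\alpha-1|=1-\alpha$ this gives $|h(x)-h(y)|\le(1-\alpha)\sigma^{\alpha-2}|x-y|$. (Equivalently, one can just bound $|h'(x)|=(1-\alpha)(x+\sigma)^{\alpha-2}\le(1-\alpha)\sigma^{\alpha-2}$ directly for all $x\ge0$.) Note that using $d(\bm{s})\ge 0$ here is legitimate since $d$ is a state distribution.

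Finally I would assemble the constants: plugging the Lipschitz bound into the displayed identity for the gradient difference yields
\begin{equation}
	\big\Vert\nabla\tilde{H}_{\alpha,\sigma}(d)-\nabla\tilde{H}_{\alpha,\sigma}(d')\big\Vert_{\infty}\le\frac{\alpha}{1-\alpha}\,(1-\alpha)\sigma^{\alpha-2}\,\max_{\bm{s}\in\mathcal{S}}|d(\bm{s})-d'(\bm{s})|=\alpha\sigma^{\alpha-2}\,\Vert d-d'\Vert_{\infty},
\end{equation}
which is exactly the claimed bound with $\beta=\alpha\sigma^{\alpha-2}$. There is no real obstacle in this argument; the only thing that needs attention is the sign bookkeeping for the exponents $\alpha-1$ and $\alpha-2$ (both negative) and the observation that the worst case of $(\xi+\sigma)^{\alpha-2}$ occurs at $\xi=0$, which is what makes $\sigma^{\alpha-2}$ — rather than something blowing up — the relevant constant.
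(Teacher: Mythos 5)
Your proof is correct and follows essentially the same route as the paper's: both arguments reduce to bounding the second derivative $\alpha(d(\bm{s})+\sigma)^{\alpha-2}$ by its worst case $\alpha\sigma^{\alpha-2}$ at $d(\bm{s})=0$, the paper via the diagonal Hessian and Taylor's theorem, you via the coordinate-wise mean value theorem. Your version is slightly more explicit about the separability and the sign bookkeeping, but the substance is identical.
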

\begin{proof}
	Since $\nabla^{2}\tilde{H}_{\alpha,\sigma}(d)=-\alpha(d(\bm{s})+\sigma)^{\alpha-2}$ is a diagonal matrix, we have
	\begin{equation}
		\begin{aligned}
			&\Vert\nabla\tilde{H}_{\alpha,\sigma}(d)-\nabla\tilde{H}_{\alpha,\sigma}(d')\Vert_{\infty}\\
			&\leq\max_{\varsigma\in[0,1]}|\nabla^{2}\tilde{H}_{\alpha,\sigma}(\varsigma d+(1-\varsigma)d')|\cdot\Vert d-d'\Vert_{\infty}\\
			&\leq \alpha\sigma^{\alpha-2}\Vert d-d'\Vert_{\infty},
		\end{aligned}
	\end{equation}
	where the first inequality follows the Taylor's theorem. This concludes the proof.
\end{proof}

From Lemma~\ref{lemma:smooth}, the following theorem can be derived.
\begin{theorem}\label{theorem:sample complexity}
	For any $\epsilon>0$ with $\epsilon_1=0.1\epsilon, \epsilon_2=0.1\beta^{-1}\epsilon$ and $\eta=0.1\beta^{-1}\epsilon$, the following inequality holds
	\begin{equation}
		\tilde{H}_{\alpha,\sigma}(d^{\pi_{{\rm mix},T}})\geq \max_{\pi\in\Pi}\tilde{H}_{\alpha,\sigma}(d^{\pi})-\epsilon,
	\end{equation}
	if Algorithm \ref{algo:mpec} is run for $T$ iterations with
	\begin{equation}\label{eq:Tbound}
		T\geq \frac{10\alpha\sigma^{\alpha-2}}{\epsilon}\log\frac{10\alpha\sigma^{\alpha-1}}{(1-\alpha)\epsilon}.
	\end{equation}
\end{theorem}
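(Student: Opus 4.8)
The plan is to recognize Algorithm~\ref{algo:mpec} as a Frank--Wolfe (conditional-gradient) scheme for \emph{maximizing} the concave function $\tilde H_{\alpha,\sigma}$ over the convex set $\mathcal{K}$ of state distributions realizable by mixed policies, and to carry out the conditional-gradient convergence argument of \cite{hazan2019provably} with the two oracle errors tracked explicitly. Three facts make this go through. First, $\tilde H_{\alpha,\sigma}$ is concave: for $\alpha\in(0,1)$ each map $x\mapsto(x+\sigma)^{\alpha}$ is concave on $[0,\infty)$ and $1/(1-\alpha)>0$. Second, $\tilde H_{\alpha,\sigma}$ is $\beta$-smooth with $\beta=\alpha\sigma^{\alpha-2}$ --- this is exactly Lemma~\ref{lemma:smooth}, whose proof in fact gives the sharper componentwise bound $\nabla^2\tilde H_{\alpha,\sigma}\succeq-\beta I$ that I will use. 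Third, the mixed-weight update $\omega_{t+1}=((1-\eta)\omega_t,\eta)$ makes the induced distributions satisfy $d^{\pi_{{\rm mix},t+1}}=(1-\eta)\,d^{\pi_{{\rm mix},t}}+\eta\,d^{\pi_{t+1}}$, so the iterates stay in $\mathcal{K}$ and literally perform the Frank--Wolfe step. Along the way I would also record the sup-norm gradient bound $\|\nabla\tilde H_{\alpha,\sigma}(d)\|_{\infty}\le\frac{\alpha}{1-\alpha}\sigma^{\alpha-1}$ (the entries are $\frac{\alpha}{1-\alpha}(d(\bm s)+\sigma)^{\alpha-1}$ and $\alpha-1<0$), which controls the initial gap.

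Next I would derive the per-iteration gap recursion. Write $f=\tilde H_{\alpha,\sigma}$, $g_t=\max_{\pi}f(d^{\pi})-f(d^{\pi_{{\rm mix},t}})$, $d_t=d^{\pi_{{\rm mix},t}}$, and $v_t=d^{\pi_{t+1}}$. The descent inequality from $\nabla^2 f\succeq-\beta I$ gives $f(d_{t+1})\ge f(d_t)+\eta\langle\nabla f(d_t),v_t-d_t\rangle-\tfrac{\beta\eta^2}{2}\|v_t-d_t\|_2^2$, and $\|v_t-d_t\|_2^2\le\|v_t-d_t\|_1\|v_t-d_t\|_\infty\le2$ since both are probability vectors. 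The planning oracle returns $v_t$ that is $\epsilon_1$-suboptimal for the linear reward $r_t=\nabla f(\hat d_t)$, while the estimation oracle only guarantees $\|\hat d_t-d_t\|_{\infty}\le\epsilon_2$, so Lemma~\ref{lemma:smooth} gives $\|r_t-\nabla f(d_t)\|_{\infty}\le\beta\epsilon_2$; pairing these against $\|v_t\|_1=\|d^{*}\|_1=1$ and using concavity ($\langle\nabla f(d_t),d^{*}-d_t\rangle\ge g_t$) yields $\langle\nabla f(d_t),v_t-d_t\rangle\ge g_t-\epsilon_1-2\beta\epsilon_2$. Substituting gives $g_{t+1}\le(1-\eta)g_t+\eta\bigl(\epsilon_1+2\beta\epsilon_2+\beta\eta\bigr)$.

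Unrolling the recursion and bounding the geometric sum by $1/\eta$ gives $g_T\le(1-\eta)^{T}g_0+\epsilon_1+2\beta\epsilon_2+\beta\eta$. With the prescribed $\epsilon_1=0.1\epsilon$, $\epsilon_2=0.1\beta^{-1}\epsilon$, $\eta=0.1\beta^{-1}\epsilon$, the constant block equals $(0.1+0.2+0.1)\epsilon=0.4\epsilon$, so it remains to force $(1-\eta)^{T}g_0\le e^{-\eta T}g_0\le0.6\epsilon$, i.e. $T\ge\frac{1}{\eta}\log\frac{g_0}{0.6\epsilon}=\frac{10\beta}{\epsilon}\log\frac{g_0}{0.6\epsilon}$. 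Since $\beta=\alpha\sigma^{\alpha-2}$ and $g_0\le\|\nabla f\|_\infty\,\|d^{*}-d_0\|_1\le\frac{2\alpha}{1-\alpha}\sigma^{\alpha-1}$, the bound \eqref{eq:Tbound} --- which carries the larger argument $\frac{10\alpha\sigma^{\alpha-1}}{(1-\alpha)\epsilon}$ inside the log --- implies this inequality by monotonicity of $\log$, finishing the proof.

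The main obstacle is bookkeeping rather than any new idea: I must make sure each of the three perturbations --- the $\epsilon_1$ miss of the planning oracle, the $\epsilon_2$ estimation error propagated through $\beta$-smoothness, and the $O(\beta\eta)$ curvature term --- enters the recursion carrying exactly one factor of $\eta$, so that unrolling converts $\eta\cdot(\cdots)$ into a bounded geometric sum (not $(\cdots)/\eta$), and that the chosen numerical constants still leave $0.6\epsilon$ of headroom for the geometric-decay term. A secondary subtlety is norm accounting: Lemma~\ref{lemma:smooth} is an $\ell_\infty$ statement, so the pairings $|\langle\nabla f(d_t)-r_t,v_t\rangle|\le\|\nabla f(d_t)-r_t\|_\infty\|v_t\|_1$ and $\|v_t-d_t\|_2^2\le\|v_t-d_t\|_1\|v_t-d_t\|_\infty$ must be invoked in the right places; one should also note that the oracle symbols printed in lines~5 and~7 of Algorithm~\ref{algo:mpec} appear in swapped order, so ``planning'' and ``estimation'' throughout this argument refer to their defined roles rather than to the printed labels.
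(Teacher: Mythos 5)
Your proposal is correct and lands on exactly the inequality the paper's proof rests on, namely
\begin{equation*}
\tilde{H}_{\alpha,\sigma}(d^{\pi^*})-\tilde{H}_{\alpha,\sigma}(d^{\pi_{{\rm mix},T}})\le B e^{-T\eta}+\epsilon_1+2\beta\epsilon_2+\eta\beta,\qquad B=\tfrac{\alpha}{1-\alpha}\sigma^{\alpha-1},
\end{equation*}
followed by the same substitution of $\epsilon_1=0.1\epsilon$, $\epsilon_2=\eta=0.1\beta^{-1}\epsilon$. The only real difference is that the paper imports this inequality wholesale from \cite{hazan2019provably} (its proof is essentially ``proved by Hazan et al.'' plus constant-plugging), whereas you re-derive it: you identify Algorithm~\ref{algo:mpec} as inexact Frank--Wolfe on the concave, $\beta$-smooth objective, establish the gap recursion $g_{t+1}\le(1-\eta)g_t+\eta(\epsilon_1+2\beta\epsilon_2+\beta\eta)$ with the norm pairings done correctly ($\ell_\infty$ gradient error against $\ell_1$ mass, so the estimation error enters as $2\beta\epsilon_2$), and unroll. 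Your bookkeeping checks out --- the constant block is $0.4\epsilon$, and $g_0\le 2B$ makes the stated $T\ge\frac{10\beta}{\epsilon}\log\frac{10B}{\epsilon}$ sufficient for the remaining $0.6\epsilon$ --- so your version is strictly more self-contained and also correctly flags the swapped oracle labels in lines 5 and 7 of Algorithm~\ref{algo:mpec}, which the paper leaves uncorrected. What the paper's route buys is brevity; what yours buys is an actual verification that Lemma~\ref{lemma:smooth}'s $\ell_\infty$-smoothness constant is the one that feeds each error term, which the citation-only proof takes on faith.
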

\begin{proof}
	Equipped with Lemma \ref{lemma:smooth}, let $\pi^{*}=\underset{\pi\in\Pi}{\rm argmax}\:\tilde{H}_{\alpha,\sigma}(d)$, we have (proved by \cite{hazan2019provably}):
	\begin{equation}
		\begin{aligned}
			\tilde{H}_{\alpha,\sigma}(d^{\pi^*})-\tilde{H}_{\alpha,\sigma}(d^{\pi_{{\rm mix},T}})\\
			\leq B\exp(-T\eta)+2\beta\epsilon_2+\epsilon_1+\eta\beta,
		\end{aligned}
	\end{equation}
	where $\Vert\nabla\tilde{H}_{\alpha,\sigma}(d)\Vert_{\infty}\leq B=\frac{\alpha}{1-\alpha}\sigma^{\alpha-1}$. Thus it suffices to set $\epsilon_1=0.1\epsilon,\epsilon_2=0.1\beta^{-1}\epsilon,\eta=0.1\beta^{-1}\epsilon,T=\eta^{-1}\log10B\epsilon^{-1}$. When Algorithm \ref{algo:mpec} is run for 
	\begin{equation}
		T\geq 10\beta\epsilon^{-1}\log10B\epsilon^{-1},
	\end{equation}
	it holds
	\begin{equation}
		\tilde{H}_{\alpha,\sigma}(d^{\pi_{{\rm mix},T}})\geq \max_{\pi\in\Pi}\tilde{H}_{\alpha,\sigma}(d^{\pi})-\epsilon.
	\end{equation}
	This concludes the proof.
\end{proof}
Theorem \ref{theorem:sample complexity} demonstrates that the proposed \Ry state entropy can be practically maximized with computational complexity given in Theorem~\ref{theorem:sample complexity} by exploiting MEPC.
Inspection of Eq.~\eqref{eq:Tbound} reveals that the lower bound of $T$ is a function of $\alpha$. In other words, we can expedite the exploration process through reducing the lower bound of $T$ by adjusting the value of $\alpha$. This observation is consistent with the analytical results reported in \cite{zhang2021exploration}.

\subsubsection{Fast Entropy Estimation}

However, it is non-trivial to apply MEPC when handling complex environments with high-dimensional observations. To address the problem, we propose to utilize the following $k$-nearest neighbor estimator to realize efficient estimation of the \Ry entropy \cite{leonenko2008class}. Note that $\pi$ in Eq.~\eqref{eq:estimator} denotes the ratio between the circumference of a circle to its diameter.

\begin{theorem}[Estimator]\label{def:re estimation}
	Denote by  $\{X_{i}\}_{i=1}^{N}$ a set of independent random vectors from the distribution $X$. For $k<N,k\in\mathbb{N}$, $\tilde{X}_{i}$ stands for the $k$-nearest neighbor of $X_{i}$ among the set. We estimate the \Ry entropy using the sample mean as follows:
	\begin{equation}\label{eq:estimator}
		\hat{H}^{k,\alpha}_{N}(f)=\frac{1}{N}\sum_{i=1}^{N}\big[ (N-1)V_{m}C_{k}\Vert X_{i}-\tilde{X}_{i} \Vert^{m} \big]^{1-\alpha},
	\end{equation}
	where $C_{k}=\bigg[ \frac{\Gamma(k)}{\Gamma(k+1-\alpha)} \bigg]^{\frac{1}{1-\alpha}}$ and $V_{m}=\frac{\pi^{\frac{m}{2}}}{\Gamma(\frac{m}{2}+1)}$ is the volume of the unit ball in $\mathbb{R}^{m}$, and $\Gamma(\cdot)$ is the Gamma function, respectively. Moreover, it holds
	\begin{equation}
		\lim_{N\rightarrow\infty}\hat{H}^{k,\alpha}_{N}(f)=H_{\alpha}(f).
	\end{equation}
\end{theorem}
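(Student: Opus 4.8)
\textbf{Proof proposal for Theorem \ref{def:re estimation}.}

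The plan is to establish the consistency of the $k$-nearest neighbor estimator by computing the limiting expectation of the summand and invoking a law of large numbers. First I would fix $i$ and analyze the distribution of the scaled nearest-neighbor distance $\rho_{i}^{m} := (N-1) V_{m} \Vert X_{i} - \tilde{X}_{i}\Vert^{m}$. Conditioned on $X_{i} = \bm{x}$, the probability that a single one of the remaining $N-1$ sample points falls within radius $R$ of $\bm{x}$ is the mass $\mu(B(\bm{x},R)) = \int_{B(\bm{x},R)} f(\bm{y})\,d\bm{y}$, which for small $R$ behaves like $f(\bm{x}) V_{m} R^{m}$ whenever $\bm{x}$ is a Lebesgue point of $f$. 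Hence the number of points inside $B(\bm{x},R)$ is Binomial$(N-1, \mu(B(\bm{x},R)))$, and the $k$-th nearest distance is governed by the event that this count is at least $k$. A standard Poissonization argument shows that as $N\to\infty$, the variable $(N-1)\mu(B(\bm{x},\Vert X_i-\tilde X_i\Vert))$ converges in distribution to a Gamma$(k,1)$ random variable $\xi_k$; combined with the local expansion $\mu(B(\bm{x},R)) \approx f(\bm{x}) V_m R^m$ this gives $\rho_i^m \xrightarrow{d} \xi_k / f(\bm{x})$.

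Next I would push this through the map $t \mapsto (C_k\, t)^{1-\alpha}$ and take expectations. Using $\mathbb{E}[\xi_k^{1-\alpha}] = \Gamma(k+1-\alpha)/\Gamma(k)$ for $k+1-\alpha > 0$ (which holds since $k\ge 1$ and $\alpha \in (0,1)\cup(1,\infty)$ with $k<N$), and the definition $C_k = [\Gamma(k)/\Gamma(k+1-\alpha)]^{1/(1-\alpha)}$ so that $C_k^{1-\alpha} \mathbb{E}[\xi_k^{1-\alpha}] = 1$, one obtains
\begin{equation}
	\mathbb{E}\big[ (C_k \rho_i^m)^{1-\alpha} \,\big|\, X_i = \bm{x} \big] \longrightarrow C_k^{1-\alpha}\, f(\bm{x})^{\alpha-1}\, \mathbb{E}[\xi_k^{1-\alpha}] = f(\bm{x})^{\alpha-1}.
\end{equation}
Integrating over $\bm{x}\sim f$ then yields $\mathbb{E}[(C_k \rho_i^m)^{1-\alpha}] \to \int_{\mathcal{X}} f(\bm{x})^{\alpha-1} f(\bm{x})\,d\bm{x} = \int_{\mathcal{X}} f^{\alpha}(\bm{x})\,d\bm{x}$, which is exactly $e^{(1-\alpha)H_\alpha(f)}$ by Definition \ref{def:re}. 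Since the estimator $\hat{H}^{k,\alpha}_N(f)$ is the sample average $\frac{1}{N}\sum_i (C_k \rho_i^m)^{1-\alpha}$ of identically distributed (though weakly dependent) terms, a law of large numbers for such arrays gives $\hat{H}^{k,\alpha}_N(f) \to \int_{\mathcal{X}} f^\alpha(\bm{x})\,d\bm{x}$ in probability (or a.s.), and the stated form of the estimator in Eq.~\eqref{eq:estimator} — with the $(N-1)V_m$ factor already absorbed — is precisely this average. One subtlety: the statement writes the limit as $H_\alpha(f)$ itself rather than $\int f^\alpha$; I would reconcile this by noting that the intended estimator composes the sample mean with $\frac{1}{1-\alpha}\log(\cdot)$, and by the continuous mapping theorem the log-transformed quantity converges to $H_\alpha(f)$.

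The main obstacle is the uniform-integrability / moment-control step needed to upgrade convergence in distribution of the summands to convergence of their \emph{expectations}, and to justify the law of large numbers despite the weak dependence among the $n$ nearest-neighbor terms (they share the same sample). This requires tail bounds on $\Vert X_i - \tilde X_i\Vert^{m(1-\alpha)}$ that are uniform in $N$; when $\alpha > 1$ the exponent $1-\alpha$ is negative, so one must control the probability that a nearest-neighbor distance is atypically \emph{small} (i.e., that points cluster), which needs a lower bound on how spread out the sample is and typically a boundedness or regularity assumption on $f$ near the boundary of its support. I would handle the dependence by the standard device of splitting into subsamples or citing the moment estimates of \cite{leonenko2008class}, under whose regularity hypotheses on $f$ (finiteness of the relevant $f^\alpha$ integral and mild smoothness) the result is exactly their consistency theorem; the bulk of the proof is therefore verifying that our setting meets those hypotheses and carrying the constants $C_k$, $V_m$ through cleanly.
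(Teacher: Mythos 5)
The paper offers no proof of this theorem at all --- it is stated with the single line ``See proof in \cite{leonenko2008class}'' --- so your sketch, which reconstructs the argument of that reference (the Gamma$(k,1)$ limit of the rescaled $k$-NN ball mass, the identity $C_k^{1-\alpha}\,\mathbb{E}[\xi_k^{1-\alpha}]=1$ that makes the estimator asymptotically unbiased for $\int f^{\alpha}$, and the uniform-integrability and weak-dependence issues needed to pass to expectations and to a law of large numbers), is essentially the same approach, carried out in more detail than the paper provides. Your one substantive observation deserves emphasis: as written, the sample mean in Eq.~\eqref{eq:estimator} converges to $\int_{\mathcal{X}} f^{\alpha}(\bm{x})\,d\bm{x} = e^{(1-\alpha)H_{\alpha}(f)}$, not to $H_{\alpha}(f)$ itself; the cited reference applies $\frac{1}{1-\alpha}\log(\cdot)$ to this mean to obtain the entropy estimator, and that logarithm has been dropped from the statement here, so the displayed limit $\lim_{N\to\infty}\hat{H}^{k,\alpha}_{N}(f)=H_{\alpha}(f)$ is false as literally written and your proposed reconciliation via the continuous mapping theorem is the correct fix.
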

\begin{proof}
	See proof in \cite{leonenko2008class}.
\end{proof}

Given a trajectory $\tau=\{\bm{s}_{0},\bm{a}_{0},\dots,\bm{a}_{T-1},\bm{s}_{T}\}$ collected by the agent, we approximate the \Ry state entropy in Eq.~\eqref{eq:hf} using Eq.~\eqref{eq:estimator} as
\begin{equation}
	\begin{aligned}
		\hat{H}^{k,\alpha}_{T}(d)&=\frac{1}{T}\sum_{i=0}^{T-1}\big[ (T-1) V_{m}C_{k}\Vert \bm{y}_{i}-\tilde{\bm{y}}_{i} \Vert^{m} \big]^{1-\alpha},\\
		&\propto\frac{1}{T}\sum_{i=0}^{T-1} \Vert \bm{y}_{i}-\tilde{\bm{y}}_{i} \Vert^{1-\alpha},
	\end{aligned}
\end{equation}
where $\bm{y}_{i}$ is the encoding vector of $\bm{s}_i$ and $\tilde{\bm{y}}_{i}$ is the $k$-nearest neighbor of $\bm{y}_{i}$. After that, we define the intrinsic reward that takes each transition as a particle:
\begin{equation}\label{eq:intrinsic reward}
	\hat{r}(\bm{s}_{i})=\Vert \bm{y}_{i}-\tilde{\bm{y}}_{i} \Vert^{1-\alpha},
\end{equation}
where $\hat{r}(\cdot)$ is used to distinguish the intrinsic reward from the extrinsic reward $r(\cdot)$. Eq.~\eqref{eq:intrinsic reward} indicates that the agent needs to visit as more distinct states as possible to obtain higher intrinsic rewards.

Such an estimation method requires no additional auxiliary models, which significantly promotes the learning efficiency. Equipped with the intrinsic reward, the total reward of each transition $(\bm{s}_t,\bm{a}_t,\bm{s}_{t+1})$ is computed as
\begin{equation}
	r^{\rm total}_t=r({\bm s}_t,{\bm a}_t)+\lambda_{t}\cdot\hat{r}(\bm{s}_t)+\zeta\cdot H(\pi(\cdot|\bm{s}_t)),
\end{equation}
where $H(\pi(\cdot|\bm{s}_t))$ is the action entropy regularizer for improving the exploration on action space, $\lambda_{t}=\lambda_{0}(1-\kappa)^{t}$ and $\zeta$ are two non-negative weight coefficients, and $\kappa$ is a decay rate. 


\subsection{Robust Representation Learning}
While the \Ry state entropy encourages exploration in high-dimensional observation spaces, several implementation issues have to be addressed in its practical deployment. First of all, observations have to be encoded into low-dimensional vectors in calculating the intrinsic reward. While a randomly initialized neural network can be utilized as the encoder as proposed in \cite{seo2021state}, it cannot handle more complex and dynamic tasks, which inevitably incurs performance loss. Moreover, since it is less computationally expensive to train an encoder than RL, we propose to leverage the VAE to realize efficient and robust embedding operation, which is a powerful generative model based on the Bayesian inference \cite{kingma2013auto}. As shown in Fig.~\ref{fig:rise_a}, a standard VAE is composed of a recognition model and a generative model. These two models represent a probabilistic \textit{encoder} and a probabilistic \textit{decoder}, respectively.

\begin{figure}[h]
	\centering
	\includegraphics[width=\linewidth]{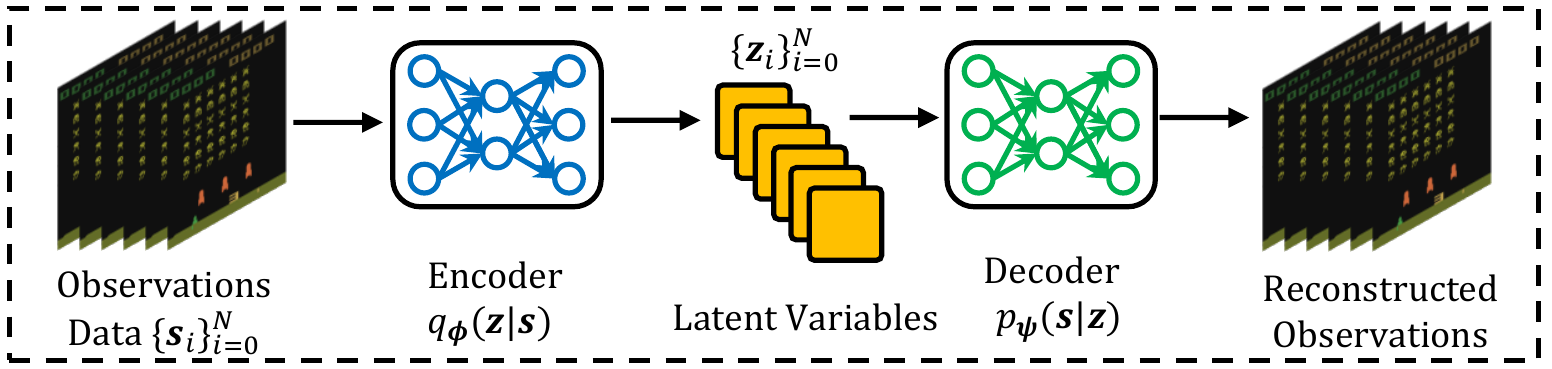}
	\caption{VAE model for embedding observations.}
	\label{fig:rise_a}
\end{figure}

We denote by $q_{\bm \phi}(\bm{z}|\bm{s})$ the recognition model represented by a neural network with parameters $\bm{\phi}$. The recognition model accepts an observation input before encoding the input into latent variables. Similarly, we represent the generative model as $p_{\bm \psi}(\bm{s}|\bm{z})$ using a neural network with parameters $\bm{\psi}$, accepting the latent variables and reconstructing the observation. Given a trajectory $\tau=\{\bm{s}_{0},\bm{a}_{0},\dots,\bm{a}_{T-1},\bm{s}_{T}\}$, the VAE model is trained by minimizing the following loss function:
\begin{equation}\label{eq:vae loss}
	\begin{aligned}
		L(\bm{s}_{t};{\bm \phi},{\bm \psi})&=\mathbb{E}_{q_{\bm \phi}(\bm{z}|\bm{s}_{t})}\big[\log p_{\bm \psi}(\bm{s}_{t}|\bm{z})\big]\\
		&-D_{{\rm KL}}\big(q_{\bm\phi}(\bm{z}|\bm{s}_{t})\Vert p_{\bm \psi}(\bm{z})\big),
	\end{aligned}
\end{equation}
where $t=0,\dots,T$, $D_{{\rm KL}}(\cdot)$ is the Kullback-Liebler (KL) divergence.

Next, we will elaborate on the design of the $k$ value to improve the estimation accuracy of the state entropy. \cite{singh2003nearest} investigated the performance of this entropy estimator for some specific probability distribution functions such as uniform distribution and Gaussian distribution. Their simulation results demonstrated that the estimation accuracy first increased before decreasing as the $k$ value increases. To circumvent this problem, we propose our $k$-value searching scheme as shown in Fig.~\ref{fig:rise_b}. We first divide the observation dataset into $K$ subsets before the encoder encodes the data into low-dimensional embedding vectors. Assuming that all the data samples are independent and identically distributed, an appropriate $k$ value should produce comparable results on different subsets. By exploiting this intuition, we propose to search the optimal $k$ value that minimizes the min-max ratio of entropy estimation set. Denote by $\pi_{\bm{\theta}}$ the policy network, the detailed searching algorithm is summarized in Algorithm~\ref{algo:k search}.

\begin{figure}[h]
	\centering
	\includegraphics[width=\linewidth]{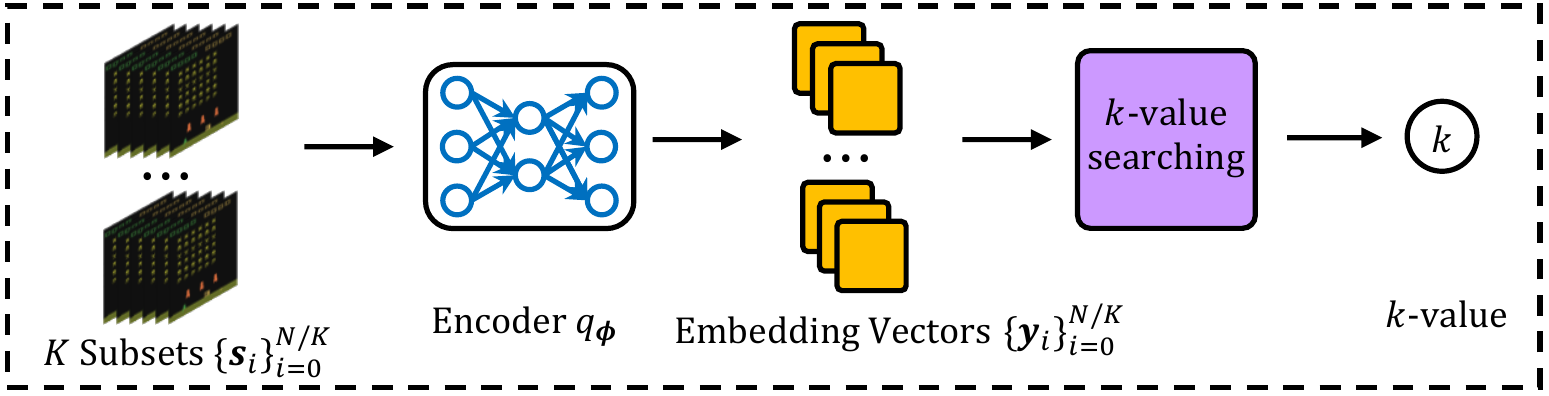}
	\caption{$k$-value searching.}
	\label{fig:rise_b}
\end{figure}

\begin{algorithm}[h]
	\caption{$k$-value searching method}
	\label{algo:k search}
	\begin{algorithmic}[1]
		\STATE Initialize a policy network $\pi_{\bm{\theta}}$;
		\STATE Initialize the number of sample steps $N$, the threshold $k_{\rm max}$ of $k$, a null array $\delta$ with length $k_{\rm max}$, and the number of subsets $K$;
		\STATE Execute policy $\pi_{\bm{\theta}}$ and collect the trajectory $\tau=\{\bm{s}_{0},\bm{a}_{0},\dots,\bm{a}_{N-1},\bm{s}_{N}\}$;
		\STATE Divide the observations dataset $\{\bm{s}_{i}\}_{i=0}^{N}$ into $K$ subsets randomly;
		\FOR {$k=1,2,\dots,k_{\rm max}$}
		\STATE Calculate the estimated entropy on $K$ subsets using Eq.~\eqref{eq:estimator}:
		\begin{equation}
			\hat{\bm{H}}_{k}=(\hat{H}_{k, N/K, \alpha}[1],\dots,\hat{H}_{k, N/K, \alpha}[K])
		\end{equation}
		
		\STATE Calculate the min-max ratio $\delta(\hat{\bm{H}}_{k})$ and $\bm{\delta}[k] \leftarrow \delta(\hat{\bm{H}}_{k})$;
		\ENDFOR
		\STATE Output $k=\underset{k}{\rm argmin}\:\bm{\delta}[k]$.
	\end{algorithmic}
\end{algorithm}

Finally, we are ready to propose our RISE framework by exploiting the optimal $k$ value derived above. As shown in Fig.~\ref{fig:rise_c}, the proposed RISE framework first encodes the high-dimensional observation data into low-dimensional embedding vectors through $q:\mathcal{S}\rightarrow\mathbb{R}^{m}$. After that, the Euclidean distance between $\bm{y}_{t}$ and its $k$-nearest neighbor is computed as the intrinsic reward. Algorithm~\ref{algo:rise on-policy} and Algorithm~\ref{algo:rise off-policy} summarize the on-policy and off-policy RL versions of the proposed RISE, respectively. In the off-policy version, the entropy estimation is performed on the sampled transitions in each step. As a result, a larger batch size can improve the estimation accuracy. It is worth pointing out that RISE can be straightforwardly integrated into any existing RL algorithms such as Q-learning and soft actor-critic, providing high-quality intrinsic rewards for improved exploration.

\begin{figure}[h]
	\centering
	\includegraphics[width=\linewidth]{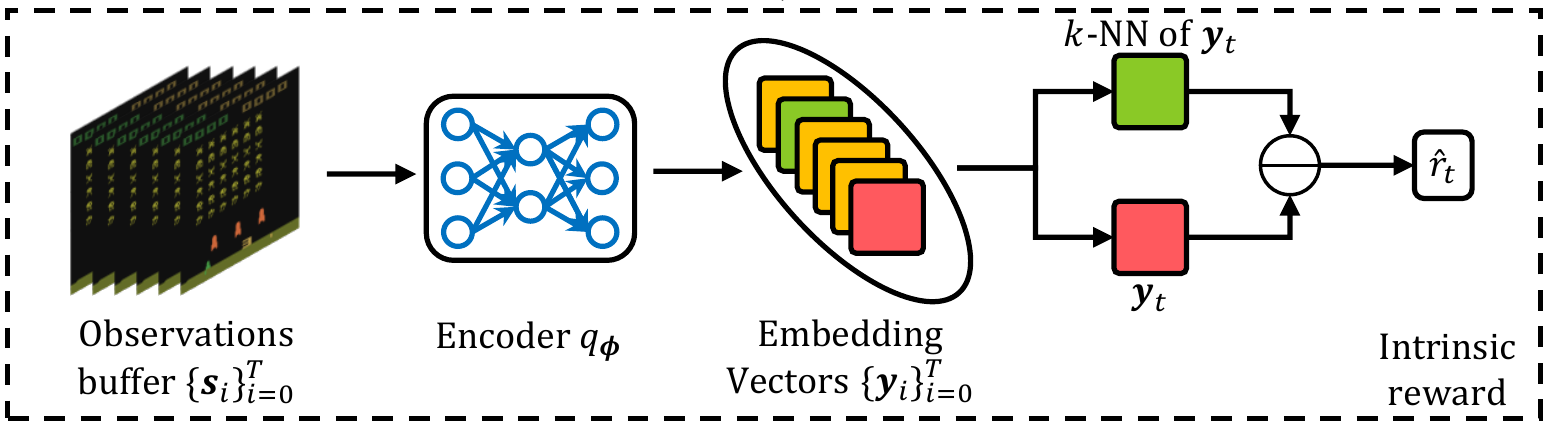}
	\caption{The generation of intrinsic rewards, where $k$-NN is $k$-nearest neighbor and $\ominus$ denotes the Euclidean distance.}
	\label{fig:rise_c}
\end{figure}

\begin{algorithm}[h]
	\caption{On-policy RL Version of RISE}
	\label{algo:rise on-policy}
	\begin{algorithmic}[1]
		\STATE \textbf{Phase 1: $k$-value searching and encoder training}
		\STATE Initialize the policy network $\pi_{\bm{\theta}}$, encoder $q_{\bm \phi}$ and decoder $p_{\bm \psi}$;
		\STATE Initialize the number of sample steps $N$, the threshold $k_{\rm max}$ of $k$-value, the embedding size $m$ and the number subsets $K$ ;
		\STATE Execute policy and collect observations data $\{\bm{s}_{i}\}_{i=1}^{N}$;
		\STATE Use $\{\bm{s}_{i}\}_{i=1}^{N}$ to train the encoder;
		\STATE Use Algorithm \ref{algo:k search} to select $k$-value;
		\STATE \textbf{Phase 2: Policy update}
		\STATE Initialize the maximum episodes $E$, order $\alpha$, coefficients $\lambda_{0},\zeta$ and decay rate $\kappa$;
		\FOR {episode $\ell=1,\dots,E$}
		\STATE Collect the trajectory $\tau_{\ell}=\{\bm{s}_{0},\bm{a}_{0},\dots,\bm{a}_{T-1},\bm{s}_{T}\}$;
		\STATE Compute the embedding vectors $\{\bm{z}_{t}\}_{t=0}^{T}$ of $\{\bm{s}_{t}\}_{t=0}^{T}$ using the encoder $q_{\bm \phi}$;
		\STATE Compute $\hat{r}(\bm{s}_{t})\leftarrow (\Vert \bm{z}_{t}-\tilde{\bm{z}}_{t} \Vert_{2})^{1-\alpha}$;
		\STATE Update $\lambda_{\ell}=\lambda_{0}(1-\kappa)^{\ell}$;
		\STATE Let $r^{\rm total}_{t}=r(\bm{s}_{t},\bm{a}_{t})+\lambda_{\ell} \cdot \hat{r}(\bm{s}_{t})+\zeta\cdot H(\pi(\cdot|\bm{s}_t))$;
		\STATE Update the policy network with transitions $\{\bm{s}_{t},\bm{a}_{t},\bm{s}_{t+1},r^{\rm total}_{t}\}_{t=0}^{T}$ using any on-policy RL algorithms.
		\ENDFOR
	\end{algorithmic}
\end{algorithm}

\begin{algorithm}[h]
	\caption{Off-policy RL Version of RISE}
	\label{algo:rise off-policy}
	\begin{algorithmic}[1]
		\STATE \textbf{Phase 1} of Algorithm \ref{algo:rise on-policy};
		\STATE \textbf{Phase 2: Policy update}
		\STATE Initialize the maximum environment steps $t_{\rm max}$, order $\alpha$, coefficients $\lambda_{0},\zeta$, decay rate $\kappa$, and replay buffer $\mathcal{B}\leftarrow \emptyset$;
		
		\FOR {step $t=1,\dots,t_{\rm max}$}
		\STATE Collect the transition $(\bm{s}_{t},\bm{a}_{t},r_{t},\bm{s}_{t+1})$ and let $\mathcal{B}\leftarrow \mathcal{B}\cup\{(\bm{s}_{t},\bm{a}_{t},r_{t},\bm{s}_{t+1},\bm{z}_{t})\}$, where $\bm{z}_{t}=q_{\bm \theta}(\bm{s}_{t})$;
		\STATE Sample a minibatch $\{(\bm{s}_{i},\bm{a}_{i},r_{i},\bm{s}_{i+1},\bm{z}_{i})\}_{i=1}^{B}$ from $\mathcal{B}$ randomly;
		\STATE Compute $\hat{r}(\bm{s}_{i})\leftarrow (\Vert \bm{z}_{i}-\tilde{\bm{z}}_{i} \Vert_{2})^{1-\alpha}$;
		\STATE Update $\lambda_{t}=\lambda_{0}(1-\kappa)^{t}$;
		\STATE Let $r^{\rm total}_{i}=r(\bm{s}_{i},\bm{a}_{i})+\lambda_{t} \cdot \hat{r}(\bm{s}_{i})+\zeta\cdot H(\pi(\cdot|\bm{s}_i))$;
		\STATE Update the policy network with transitions $\{\bm{s}_{i},\bm{a}_{i},\bm{s}_{i+1},r^{\rm total}_{i}\}_{i=1}^{B}$ using any off-policy RL algorithms.
		\ENDFOR
	\end{algorithmic}
\end{algorithm}

\subsection{Experiments}
In this section, we will evaluate our RISE framework on both the tabular setting and environments with high-dimensional observations. We compare RISE against two representative intrinsic reward-based methods, namely RE3 and Max\Ry \cite{zhang2021exploration}. We also train the agent without intrinsic rewards for ablation studies. As for hyper-parameters setting, we only report the values of the best experiment results.

\subsubsection{Maze Games}
In this section, we first leverage a simple but representative example to highlight the effectiveness of the \Ry state entropy-driven exploration. We introduce a grid-based environment Maze2D \cite{matthew2016github} illustrated in Fig.~\ref{fig:maze}. The agent can move one position at a time in one of the four directions, namely left, right, up, and down. The goal of the agent is to find the shortest path from the start point to the end point. In particular, the agent can teleport from a portal to another identical mark.

\begin{figure}[h]
	\centering
	\includegraphics[width=0.5\linewidth]{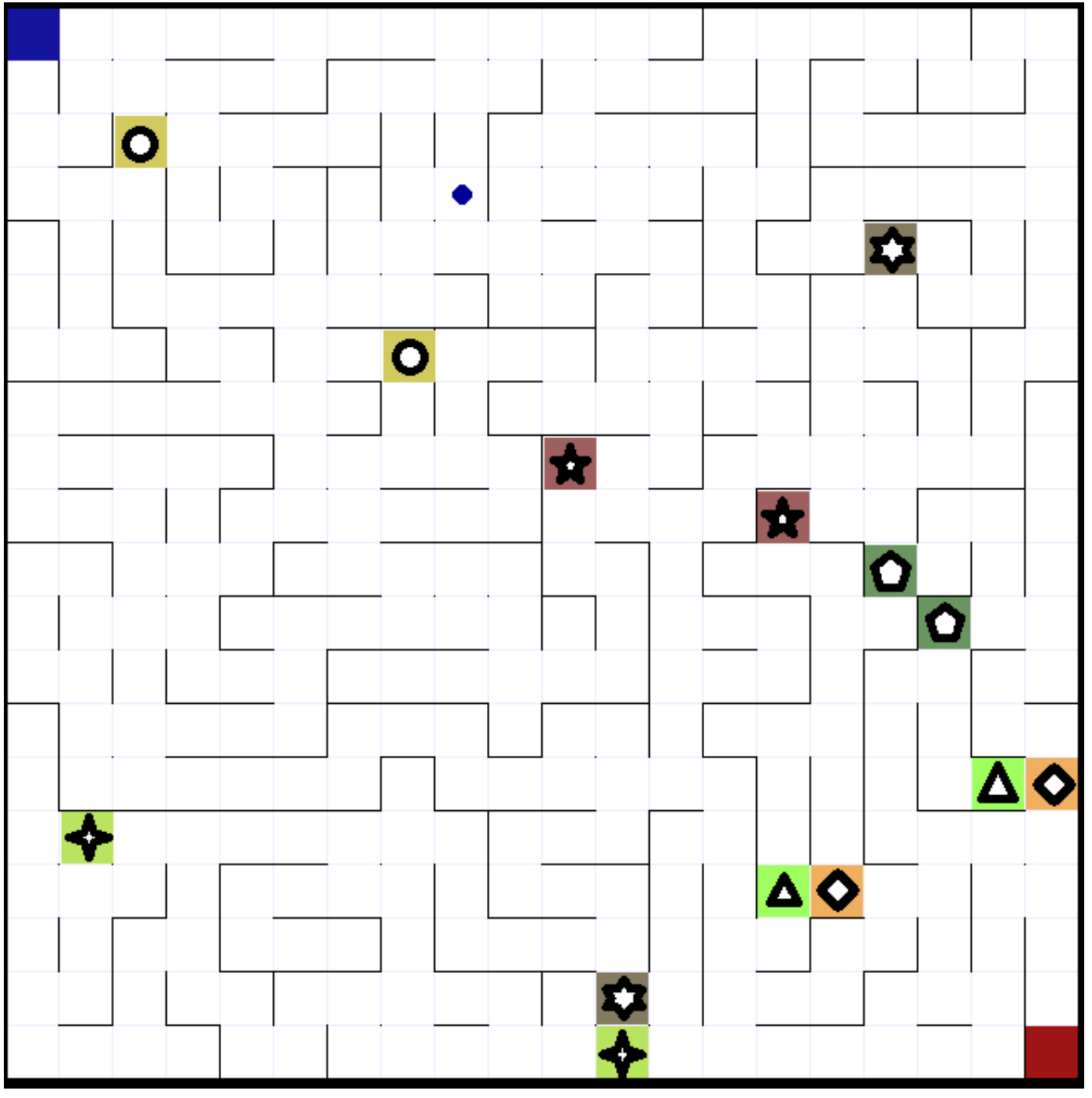}
	\caption{A maze game with grid size $20\times20$.}
	\label{fig:maze}
\end{figure}


\textbf{Experimental Setting}
The standard Q-learning (QL) algorithm \cite{watkins1992q} is selected as the benchmarking method. We perform extensive experiments on three mazes with different sizes. Note that the problem complexity increases exponentially with the maze size. In each episode, the maximum environment step size was set to $10M^{2}$, where $M$ is the maze size. We initialized the Q-table with zeros and updated the Q-table in every step for efficient training. The update formulation is given by:
\begin{equation}
	Q(\bm{s},\bm{a})\leftarrow Q(\bm{s},\bm{a})+\eta[r+\gamma \max_{\bm{a}'}Q(\bm{s}',\bm{a}')-Q(\bm{s},\bm{a})],
\end{equation}
where $Q(\bm{s},\bm{a})$ is the action-value function. The step size was set to $0.2$ while a $\epsilon$-greedy policy with an exploration rate of $0.001$ was employed.

\textbf{Performance Comparison}
\begin{figure}[h]
	\centering
	\includegraphics[width=0.55\linewidth]{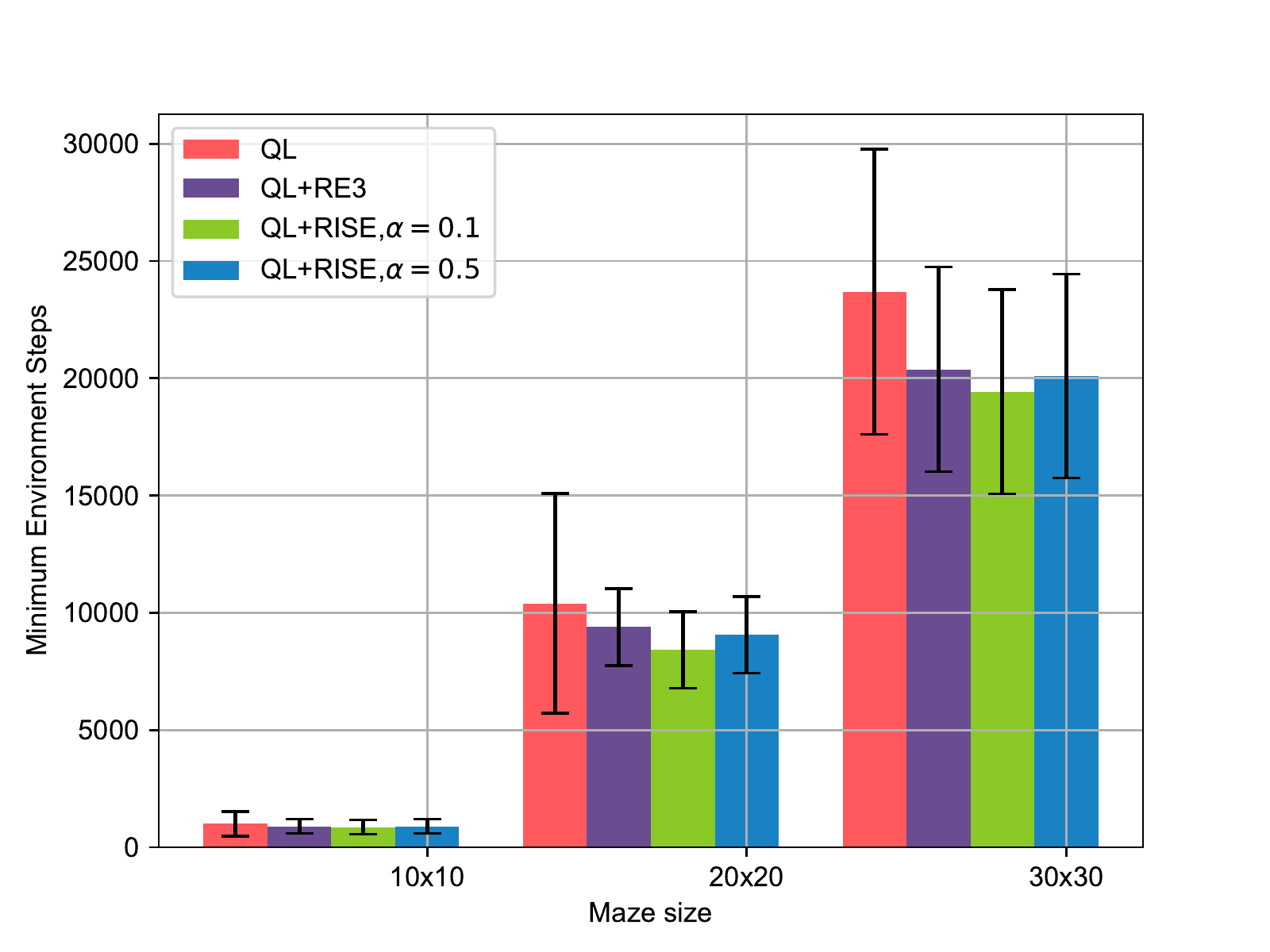}
	\caption{Average exploration performance comparison over $100$ simulation runs.}
	\label{fig:maze return}
\end{figure}

To compare the exploration performance, we choose the minimum number of environment steps taken to visit all states as the key performance indicator (KPI). For instance, a $10\times10$ maze of $100$ grids corresponds to $100$ states. The minimum number of steps for the agent to visit all the possible states is evaluated as its exploration performance. As seen in Fig.~\ref{fig:maze return}, the proposed \textit{Q-learning+RISE} achieved the best performance in all three maze games. Moreover, RISE with smaller $\alpha$ takes less steps to finish the exploration phase. This experiment confirmed the great capability of the \Ry state entropy-driven exploration.

\subsubsection{Atari Games}

Next, we will test RISE on the Atari games with a discrete action space, in which the player aims to achieve more points while remaining alive \cite{brockman2016openai}. To generate the observation of the agent, we stacked four consecutive frames as one input. These frames were cropped to the size of $(84, 84)$ to reduce the required computational complexity. 

\textbf{Experimental Setting}
To handle the graphic observations, we leveraged convolutional neural networks (CNNs) to build RISE and the benchmarking methods. For fair comparison, the same policy network and value network are employed for all the algorithms, and their architectures can be found in Table~\ref{tb:cnn na}. For instance, ``8$\times$8 Conv. 32" represents a convolutional layer that has $32$ filters of size 8$\times$8. A categorical distribution was used to sample an action based on the action probability of the stochastic policy. The VAE block of RISE and Max\Ry need to learn an encoder and a decoder. The encoder is composed of four convolutional layers and one dense layer, in which each convolutional layer is followed by a batch normalization (BN) layer \cite{ioffe2015batch}. Note that ``Dense 512 \& Dense 512" in Table~\ref{tb:cnn na} means that there are two branches to output the mean and variance of the latent variables, respectively. For the decoder, it utilizes four deconvolutional layers to perform upsampling while a dense layer and a convolutional layer are employed at the top and the bottom of the decoder. Finally, no BN layer is included in the decoder and the ReLU activation function is employed for all components.

\begin{figure*}[ht]
	\centering
	\includegraphics[width=\linewidth]{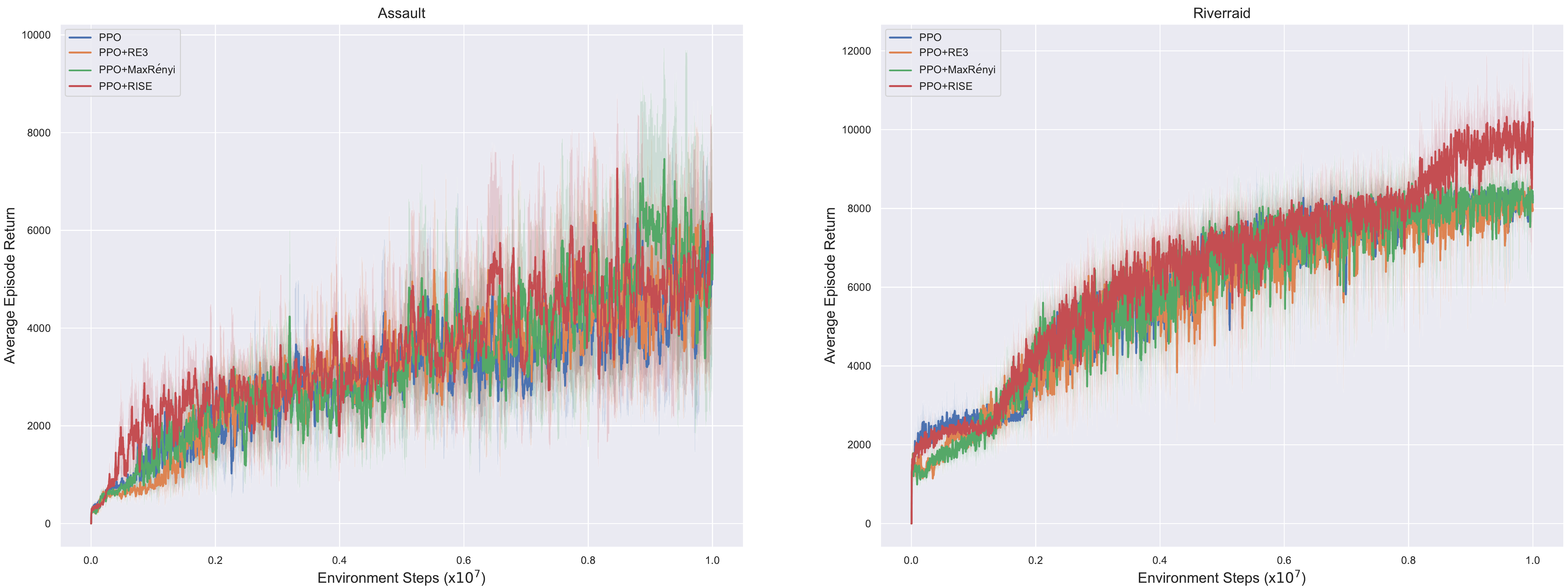}
	\caption{Average episode return versus number of environment steps on Atari games.}
	\label{fig:dis eps return}
\end{figure*}

\begin{table*}[h]
	\centering
	\caption{Performance comparison in nine Atari games.}
	\label{tb:dis final performance}
	\normalsize
	\begin{tabular}{l|l|l|l|l}
		\hline
		Game     & PPO     & PPO+RE3 & PPO+Max\Ry & PPO+RISE \\ \hline
		Assault  & 5.24k$\pm$1.86k   & 5.54k$\pm$2.12k   & 5.68k$\pm$1.99k   & \textbf{5.78k$\pm$2.44k}  \\
		Battle Zone & 18.63k$\pm$1.96k   & 20.47k$\pm$2.73k   & 19.17k$\pm$3.48k   & \textbf{21.80k$\pm$5.44k}  \\
		Demon Attack  & 13.55k$\pm$4.65k   & 17.40k$\pm$9.63k   & 16.18k$\pm$8.19k   & \textbf{18.39k$\pm$7.61k} \\
		Kung Fu Master & 21.86k$\pm$8.92k   & 23.21k$\pm$5.74k   & 27.20k$\pm$5.59k   & \textbf{27.76k$\pm$5.86k} \\
		Riverraid    & 7.99k$\pm$0.53k   & 8.14k$\pm$0.37k   & 8.21k$\pm$0.36k   & \textbf{10.07k$\pm$0.77k} \\
		Space Invaders & 0.72k$\pm$0.25k   & 0.89k$\pm$0.33k   & 0.81k$\pm$0.37k   & \textbf{1.09k$\pm$0.21k}\\
		\hline
	\end{tabular}
\end{table*}

In the first phase, we initialized a policy network $\pi_{\bm \theta}$ and let it interact with eight parallel environments with different random seeds. We first collected observation data over ten thousand environment steps before the VAE encoder generates the latent vectors of dimension of $128$ from the observation data. After that, the latent vectors were sent to the decoder to reconstruct the observation tensors. For parameters update, we used an Adam optimizer with a learning rate of $0.005$ and a batch size of $256$. Finally, we divided the observation dataset into $K=8$ subsets before searching for the optimal $k$-value over the range of $[1,15]$ using Algorithm~\ref{algo:k search}.

Equipped with the learned $k$ and encoder $q_{\bm \phi}$, we trained RISE with ten million environment steps. In each episode, the agent was also set to interact with eight parallel environments with different random seeds. Each episode has a length of $128$ steps, producing $1024$ pieces of transitions. After that, we calculated the intrinsic reward for all transitions using Eq.~\eqref{eq:intrinsic reward}, where $\alpha=0.1, \lambda_{0}=0.1$. Finally, the policy network was updated using a proximal policy optimization (PPO) method \cite{schulman2017proximal}. More specifically, we used a PyTorch implementation of the PPO method, which can be found in \cite{kostrikov2018github}. The PPO method was trained with a learning rate of $0.0025$, a value function coefficient of $0.5$, an action entropy coefficient of $0.01$, and a generalized-advantage-estimation (GAE) parameter of $0.95$ \cite{schulman2015high}. In particular, a gradient clipping operation with threshold $[-5, 5]$ was performed to stabilize the learning procedure. As for benchmarking methods, we trained them following their default settings reported in the literature \cite{zhang2021exploration,seo2021state}.

\textbf{Performance Comparison}

The average one-life return is employed as the KPI in our performance comparison. Table~\ref{tb:dis final performance} illustrates the performance comparison over eight random seeds on nine Atari games. For instance, 5.24k$\pm$1.86k represents the mean return is $5.24$k and the standard deviation is $1.86$k. The highest performance is shown in bold. As shown in Table~\ref{tb:dis final performance}, RISE achieved the highest performance in all nine games. Both RE3 and Max\Ry achieved the second highest performance in three games. Furthermore, Fig.~\ref{fig:dis eps return} illustrates the change of average episode return during training of two selected games. It is clear that the growth rate of RISE is faster than all the benchmarking methods.

Next, we compare the training efficiency between RISE and the benchmarking methods, and the frame per second (FPS) is set as the KPI. For instance, if a method takes $10$ second to finish the training of an episode, the FPS is computed as the ratio between the time cost and episode length. The time cost involves only interaction and policy updates for the vanilla PPO agent. But the time cost needs to involve further the intrinsic reward generation and auxiliary model updates for other methods. As shown in Fig.~\ref{fig:fps}, the vanilla PPO method achieves the highest computation efficiency, while RISE and RE3 achieve the second highest FPS. In contrast, Max\Ry has far less FPS that RISE and RE3. This mainly because RISE and RE3 require no auxiliary models, while Max\Ry uses a VAE to estimate the probability density function. Therefore, RISE has great advantages in both the policy performance and learning efficiency.

\begin{figure}[h]
	\centering
	\includegraphics[width=0.6\linewidth]{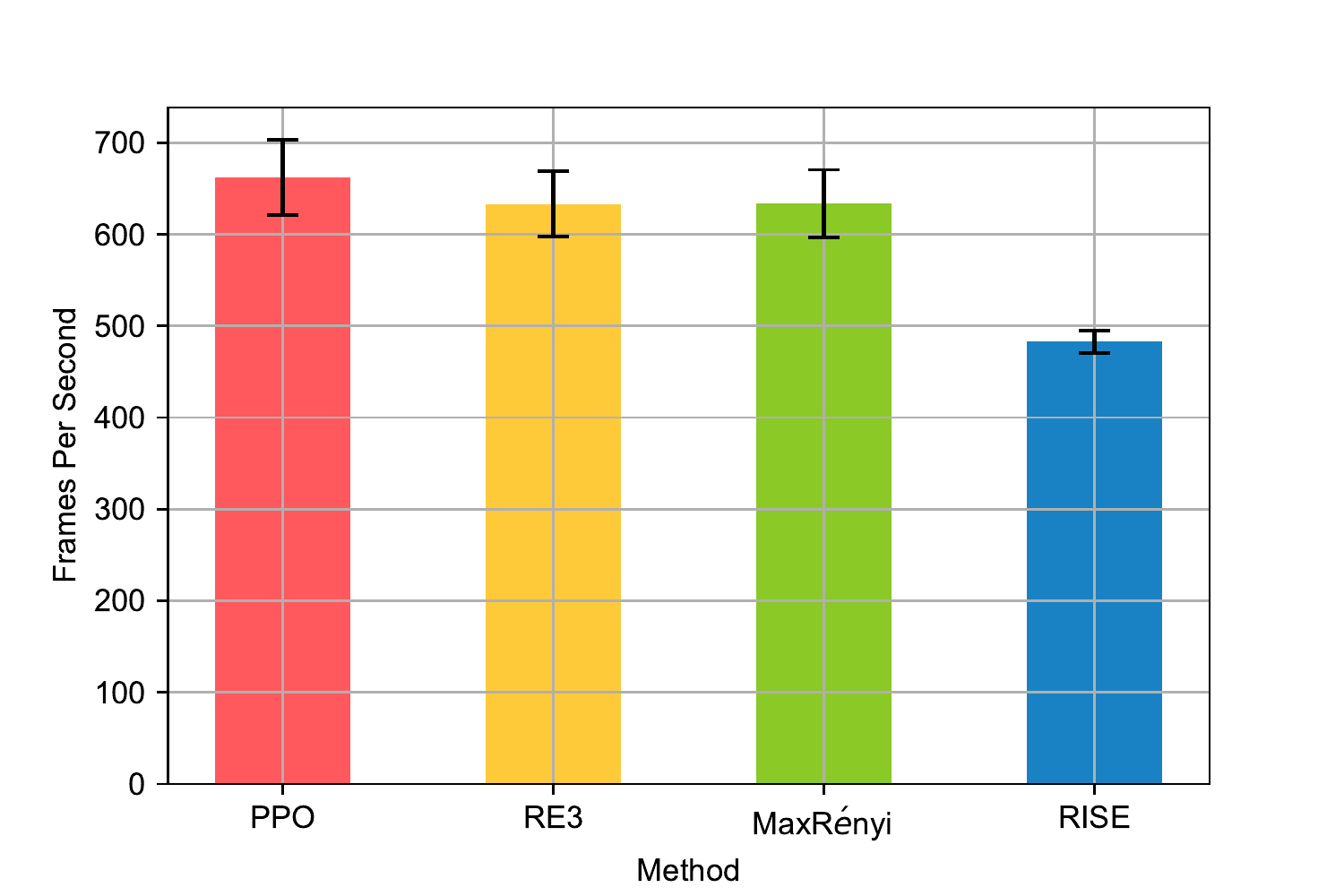}
	\caption{Average computational complexity on Atari games. The experiments were performed in Ubuntu 18.04 LTS operating system with a Intel 10900x CPU and a NVIDIA RTX3090 GPU.}
	\label{fig:fps}
\end{figure}

\subsubsection{Bullet Games}
\textbf{Experimental Setting}
Finally, we tested RISE on six Bullet games \cite{coumans2016pybullet} with continuous action space, namely \textit{Ant}, \textit{Half Cheetah}, \textit{Hopper}, \textit{Humanoid}, \textit{Inverted Pendulum} and \textit{Walker 2D}. In all six games, the target of the agent is to move forward as fast as possible without falling to the ground. Unlike the Atari games that have graphic observations, the Bullet games use fixed-length vectors as observations. For instance, the ``Ant" game uses $28$ parameters to describe the state of the agent, and its action is a vector of $8$ values within $[-1.0, 1.0]$.

\begin{table*}[ht]
	\centering
	\caption{Performance comparison of six Bullet games.}
	\label{tb:con final performance}
	\normalsize
	\begin{tabular}{l|l|l|l|l}
		\hline
		Game     & PPO     & PPO+RE3 & PPO+Max\Ry & PPO+RISE \\ \hline
		Ant               & 2.25k$\pm$0.06k  & 2.36k$\pm$0.01k  & 2.43k$\pm$0.03k  & \textbf{2.71k$\pm$0.07k}  \\
		Half Cheetah      & 2.36k$\pm$0.02k &  2.41k$\pm$0.02k  & 2.40k$\pm$0.01k  & \textbf{2.47k$\pm$0.07k}  \\
		Hopper   		  & 1.53k$\pm$0.57k  & 2.08k$\pm$0.55k  & 2.23k$\pm$0.31k  &  \textbf{2.44k$\pm$0.04k}  \\
		Humanoid          &  0.83k$\pm$0.25k & 0.95k$\pm$0.64k  & 1.17k$\pm$0.54k &  \textbf{1.24k$\pm$0.92k}  \\
		Inverted Pendulum & 1.00k$\pm$0.00k  & 1.00k$\pm$0.00k  & 1.00k$\pm$0.00k  & \textbf{1.00k$\pm$0.00k} \\
		Walker 2D         & 1.66k$\pm$0.37k  & 1.85k$\pm$0.71k  & 1.73k$\pm$0.21k  & \textbf{1.96k$\pm$0.34k} \\
		\hline
	\end{tabular}
\end{table*}

We leveraged the multilayer perceptron (MLP) to implement RISE and the benchmarking methods. The detailed network architectures are illustrated in Table~\ref{tb:mlp na}. Note that the encoder and decoder were designed for MaxR\'enyi, and no BN layers were introduced in this experiment. Since the state space is far simpler than the Atari games, the entropy can be directly derived from the observations while the training procedure for the encoder is omitted. We trained RISE with ten million environment steps. The agent was also set to interact with eight parallel environments with different random seeds, and Gaussian distribution was used to sample actions. The rest of the updating procedure was consistent with the experiments of the Atari games.

\textbf{Performance Comparison}
Table~\ref{tb:con final performance} illustrates the performance comparison between RISE and the benchmarking methods. Inspection of Table~\ref{tb:con final performance} suggests that RISE achieved the best performance in all six games. In summary, RISE has shown great potential for achieving excellent performance in both discrete and continuous control tasks.

\section{Conclusion}
\label{chap:con}

\subsection{Contributions}

In this thesis, we have investigated the problem of improving exploration in RL. We first systematically analyze the exploration-exploitation dilemma via multi-armed bandit problem before introducing several classical exploration strategies, such as Thompson sampling and noise-based exploration. Since those methods cannot effectively adapt to hard-exploration environments, we further introduce the intrinsically-motivated RL that utilizes intrinsic learning motivation to encourage exploration. We carefully classify the existing intrinsic reward methods, and analyzed their practical drawbacks, such as vanishing rewards and noisy-TV problem. Finally, we proposed a brand new intrinsic reward method via R\'enyi state entropy maximization that overcomes the drawbacks of the preceding methods and provides powerful exploration incentives. Extensive simulation is performed to compare the performance of RISE against existing methods using both discrete and continuous control tasks as well as several hard exploration games. Simulation results confirm that the proposed module achieve superior performance with high efficiency and robustness.

\subsection{Future Work}

Intrinsically-motivated RL has achieved great progresses in hard-exploration problem. However, there remains several critical challenges in its consequent development. Most existing intrinsic reward methods cannot guarantee the policy invariance. Despite they improve the exploration performance, the agent may never learn the desired optimal policy. On other hand, the mechanism of intrinsic reward still lacks rigorous mathematical interpreability, which greatly limits its practical application and development. In particular, the existing RL algorithms still produce ultra-high computation complexity, even with the promotion of those exploration methods. Therefore, more research efforts should be devoted to improving the efficiency of these algorithms. This paper is expected to inspire more subsequent research.

\clearpage

\appendix
\section{Experimental Settings} 

\subsection{Network Architectures}
\begin{table}[h]
	\caption{The CNN-based network architectures.}
	\label{tb:cnn na}
	\centering
	\small
	\begin{tabular}{l|l|l}
		\hline
		\textbf{Module} & Policy network $\pi_{\bm \theta}$                                                                                                                                                                                                 & Value network                                                                                                                                                                    \\ \hline
		Input  & States                                                                                                                                                                                                & States                                                                                                                                                                           \\ \hline
		Arch.  & \begin{tabular}[c]{@{}l@{}}8$\times$8 Conv 32, ReLU\\ 4$\times$4 Conv 64, ReLU\\ 3$\times$3 Conv 32, ReLU\\ Flatten\\ Dense 512, ReLU\\ Dense $|\mathcal{A}|$\\ Categorical Distribution\end{tabular} & \begin{tabular}[c]{@{}l@{}}8$\times$8 Conv 32, ReLU\\ 4$\times$4 Conv 64, ReLU\\ 3$\times$3 Conv 32, ReLU\\ Flatten\\ Dense 512, ReLU\\ Dense 1\end{tabular}                     \\ \hline
		Output & Actions                                                                                                                                                                                               & Predicted values                                                                                                                                                                           \\ \hline
		\textbf{Module} & Encoder $p_{\bm \psi}$                                                                                                                                                                                                       & Decoder $q_{\bm \phi}$                                                                                                                                                    \\ \hline
		Input  & States                                                                                                                                                                  & Latent variables                                                                                                                                                                           \\ \hline
		Arch.  & \begin{tabular}[c]{@{}l@{}}3$\times$3 Conv. 32, ReLU\\ 3$\times$3 Conv. 32, ReLU\\ 3$\times$3 Conv. 32, ReLU\\ 3$\times$3 Conv. 32\\ Flatten\\ Dense 512 \& Dense 512\\ Gaussian sampling\end{tabular} & \begin{tabular}[c]{@{}l@{}}Dense 64, ReLU\\ Dense 1024, ReLU\\ Reshape\\ 3$\times$3 Deconv. 64, ReLU\\ 3$\times$3 Deconv. 64, ReLU\\ 3$\times$3 Deconv. 64, ReLU\\ 8$\times$8 Deconv. 32\\ 1$\times$1 Conv. 4\end{tabular} \\ \hline
		Output & Latent variables                                                                                                                                                                                                & Predicted states                                                                                                                                                                \\ \hline
	\end{tabular}
\end{table}

\begin{table}[h]
	\caption{The MLP-based network architectures.}
	\label{tb:mlp na}
	\centering
	\small
	\begin{tabular}{l|l|l}
		\hline
		\textbf{Module} & Policy network $\pi_{\bm \theta}$                                                                                                                                                                                                 & Value network                                                                                                                                                                    \\ \hline
		Input  & States                                                                                                                                                                                                & States                                                                                                                                                                           \\ \hline
		Arch.  & \begin{tabular}[c]{@{}l@{}}Dense 64, Tanh\\ Dense 64, Tanh\\ Dense $|\mathcal{A}|$\\ Gaussian Distribution\end{tabular} & \begin{tabular}[c]{@{}l@{}}Dense 64, Tanh\\ Dense 64, Tanh\\ Dense 1 \end{tabular}                     \\ \hline
		Output & Actions                                                                                                                                                                                               & Predicted values                                                                                                                                                                           \\ \hline
		\textbf{Module} & Encoder $p_{\bm \psi}$                                                                                                                                                                                                       & Decoder $q_{\bm \phi}$                                                                                                                                                    \\ \hline
		Input  & States                                                                                                                                                                  & Latent variables                                                                                                                                                                           \\ \hline
		Arch.  & \begin{tabular}[c]{@{}l@{}}Dense 32, Tanh\\ Dense 64, Tanh\\ Dense 256\\ Dense 256 \& Dense 512\\ Gaussian sampling\end{tabular} & \begin{tabular}[c]{@{}l@{}}Dense 32, Tanh\\ Dense 64, Tanh\\ Dense observation shape\end{tabular} \\ \hline
		Output & Latent variables                                                                                                                                                                                                & Predicted states                                                                                                                                                                \\ \hline
	\end{tabular}
\end{table}

\subsection{Hyper-parameter Settings}

\begin{table}[h]
	\centering
	\caption{Hyperparameters of RISE used for Atari games.}
	\label{tb:hyperparameters RL atari}
	\begin{tabular}{l|ll}
		\hline
		\textbf{Phase}   & \textbf{Hyperparameter}   & \textbf{Value}      \\ \hline
		\multirow{9}{*}{\begin{tabular}[c]{@{}l@{}}$k$-value searching \\ and encoder training\end{tabular}}  & Number of parallel environments    & 8          \\
		& Number of sample steps $N$         & 1e+5       \\
		& Threshold of $k$-value $k_{\rm max}$            & 15         \\
		& Embedding size $m$                 & 128        \\
		& Number of subsets $K$               & 8          \\
		& Learning rate                      & 5e-3       \\
		& Batch size                         & 256        \\
		& Coefficient of divergence $\lambda_{D}$         & 1.0        \\
		& Coefficient of reconstruction loss $\lambda_{R}$ & 1.0        \\ \hline
		\multirow{11}{*}{Policy update} & Maximum environment steps          & 1e+7       \\
		& Observation size                   & (84, 84)   \\
		& Stacked frames                     & 4          \\
		& Frame skipping                     & False      \\
		& Learning rate for actor \& critic  & 2.5e-4      \\
		& Batch size                         & 256        \\
		& $k$                                & 5          \\
		& Order of \Ry entropy $\alpha$                     & 0.1        \\
		& Initial coefficient of intrinsic reward $\beta_{0}$     & 0.1     \\
		& Decay rate $\kappa$                & 1e-5       \\
		& Coefficient of GAE                 & 0.95       \\
		& Coefficient of value function      & 0.05       \\
		& Gradient clipping threshold        & {[}-5,5{]} \\ \hline
	\end{tabular}
\end{table}

\begin{table}[t]
	\centering
	\caption{Hyperparameters of RISE used for Bullet games.}
	\label{tb:hyperparameters RL bullet}
	\begin{tabular}{l|ll}
		\hline
		\textbf{Phase}   & \textbf{Hyperparameter}   & \textbf{Value}      \\ \hline
		\multirow{4}{*}{\begin{tabular}[c]{@{}l@{}}$k$-value searching \\ and encoder training\end{tabular}}  & Number of parallel environments    & 8          \\
		& Number of sample steps             & 1e+5       \\
		& Threshold of $k$-value             & 15         \\
		& Number of subsets $K$               & 8          \\ \hline
		\multirow{9}{*}{Policy update} & Maximum environment steps          & 1e+7       \\
		& Learning rate for actor \& critic  & 2.5-4      \\
		& Batch size                         & 512        \\
		& $k$                                & 5          \\
		& Order of \Ry entropy $\alpha$                     & 0.1        \\
		& Coefficient of intrinsic reward $\beta_{0}$     & 0.1     \\
		& Decay rate $\kappa$                & 1e-5       \\
		& Coefficient of GAE                 & 0.95       \\
		& Coefficient of value function      & 0.05       \\
		& Gradient clipping threshold        & [-5,5] \\ \hline
	\end{tabular}
\end{table}

\clearpage


\end{document}